\tikzstyle{block} = [
\tikzstyle{arrow} = [thick,->,>=stealth]
  \providecommand{\R}{\mathbb{R}} %
  \DeclareMathOperator{\E}{{\mathbb E}}
  \providecommand{\cN}{\mathcal{N}}
  \providecommand{\cP}{\mathcal{P}}
  \providecommand{\cX}{\mathcal{X}}
  \providecommand{\cY}{\mathcal{Y}}
\providecommand{\mycomment}[3]{\todo[caption={},size=footnotesize,color=#1!20]{\textbf{#2: }#3}}%
\providecommand{\inlinecomment}[3]{%
  {\color{#1}#2: #3}}%
\newcommand\commenter[2]%
\newcommand\csname i#1\endcsname[1]{\inlinecomment{#2}{#1}{##1}}
\newcommand\csname #1\endcsname[1]{\mycomment{#2}{#1}{##1}}
\newtheorem{lemma}{Lemma}
\newtheorem{corollary}[lemma]{Corollary}
\newtheorem{definition}{Definition}
\newtheorem{remark}[lemma]{Remark}
\newtheorem{theorem}[lemma]{Theorem}
\pgfplotsset{compat=1.18}
\theoremstyle{definition}
\newcommand{\KL}{\mathrm{KL}}
\title{Distributional Machine Unlearning\\ via Selective Data Removal}
\author{Youssef Allouah$^{1}$\thanks{Work done while at EPFL.}\,\,, Rachid Guerraoui$^{2}$,  Sanmi Koyejo$^{1}$
\\ $^{1}$Stanford University, USA  {\quad} 
$^{2}$ EPFL, Switzerland 
\\
\texttt{\{yallouah,sanmi\}@cs.stanford.edu}
}
\begin{document}

\maketitle

\vspace{-7mm}

\begin{abstract}
Machine learning systems increasingly face requirements to remove entire domains of information—such as toxic language or biases—rather than individual user data. This task presents a dilemma: full removal of the unwanted domain data is computationally expensive, while random partial removal is statistically inefficient. We find that a domain's statistical influence is often concentrated in a small subset of its data samples, suggesting a path between ineffective partial removal and unnecessary complete removal. We formalize this as \emph{distributional unlearning}: a framework to select a small subset that balances forgetting an unwanted distribution while preserving a desired one. Using Kullback-Leibler divergence constraints, we derive the exact removal-preservation Pareto frontier \textcolor{black}{for Gaussian distributions} and prove that models trained on the edited data achieve corresponding log-loss bounds. We propose a distance-based selection algorithm and show it is quadratically more sample-efficient than random removal in the challenging low-divergence regime. Experiments across synthetic, text, and image datasets (Jigsaw, CIFAR-10, SMS spam) show our method requires 15–82\% less deletion than full removal for  strong unlearning effects, e.g., halving initial forget set accuracy. Ultimately, by showing a small forget set often suffices, our framework lays the foundations for more scalable and rigorous subpopulation unlearning.
\end{abstract}

\vspace{-5mm}
\section{Introduction}
\vspace{-1mm}

As machine learning models persist in production, the data on which they were trained often becomes legally or ethically objectionable, requiring deletion. The nature of these requests is now scaling beyond individual user data towards erasing entire subpopulations, which can represent unwanted domains, concepts, or any data-defined construct, e.g., harmful biases or toxic language~\citep{kurmanji2024towards}. This new scale of unlearning presents a dilemma.
On one hand, removing the full set of domain data involves a prohibitive cost, since the computational cost of efficient removal methods typically scales with the size of the forget set~\citep{guo2020certified}. 
This challenge is highlighted in efforts to unlearn creative works like the Harry Potter book series~\citep{eldan2023s}, where the sheer size of the identified text and the trained model make the subsequent unlearning action computationally expensive.
On the other hand, removing a random small subset can be statistically inefficient because the statistical footprint of the domain persists. For instance, large language models can still verbatim reproduce training sequences even if the specific source text is removed~\citep{liu2025language}, due to overlapping context with the remaining data. This dilemma, between the expensive cost of full removal and the ineffectiveness of naive partial  removal, leaves a methodological gap for a more targeted and efficient approach to domain unlearning.

To address this methodological gap, we start with a key observation: a domain's statistical influence is often concentrated in a small, high-impact subset of its samples. This suggests an optimal strategy exists between the two extremes: a targeted removal of only this impactful subset, which can be both computationally efficient and statistically effective.
To formalize this, we model domains as unknown probability distributions, a well-precedented abstraction in both statistical learning~\citep{shalev2014understanding} and natural language processing~\citep{blei2003latent,srivastava2017autoencoding}. This statistical framing allows us to pose a precise research question: \emph{what is the minimal set of data points to remove to make the data distribution far from an unwanted domain, yet simultaneously close to a retained one?}
We introduce \emph{distributional unlearning}, an information-theoretic framework that quantitatively addresses this question.
{\color{black}This data-centric, statistical framing is complementary to the rich literature on what we term \emph{sample-level} unlearning: methods that efficiently approximate retraining without a pre-defined set of individual records.}
While prior work on sample-level methods, based on influence functions~\citep{guo2020certified} or data sharding~\hbox{\citep{bourtoule2021machine}}, have made important progress on the \emph{computational} problem, they do not address the 
\emph{statistical} question of which samples should constitute that set for maximal impact. Similarly, while class-level unlearning~\citep{tarun2023fast,kodge2024deep} and concept erasure~\citep{ravfogel2020null,belrose2023leace} methods also operate on certain domains, they either act on a model's internal representations—making the changes potentially reversible—or lack a formal method for finding a small subset to remove. This leaves our foundational question unanswered: which samples to remove in the first place?

\textbf{Contributions.}
We summarize our contributions in tackling the above question as follows:
\vspace{-1mm}
\begin{itemize}
  \item \textit{Framework:}  We formalize distributional unlearning,  a data-centric framework that uses Kullback-Leibler divergence constraints to balance removal and preservation guarantees.
  \item \textit{Pareto frontier \& downstream guarantees:}  We derive the closed-form Pareto frontier of achievable removal-preservation levels for exponential families. We further prove our framework provides log-loss shift guarantees for downstream models.
  \item \textit{Algorithm \& sample efficiency:} We study the sample complexity of distributional unlearning with both random and selective removal mechanisms. We propose a distance-based selection algorithm and prove that it achieves a quadratic improvement in sample efficiency over random removal in low-divergence Gaussian regimes.

\item \textit{Empirical validation:}
We validate our framework on synthetic, text, and image data (Jigsaw, CIFAR-10, SMS spam). While our synthetic analysis suggests high potential data efficiency gains up to 82\%, real-world data complexity reduces these to a still-significant 15--50\% in deletion budget savings. We show these data savings to hold in combination with various downstream sample-level unlearning methods.
\end{itemize}

\vspace{-3mm}
\subsection{Related Work}
\vspace{-1mm}

\textbf{Sample‐level unlearning.}
Sample‑level unlearning has made impressive strides in fast model updates and formal deletion guarantees~\citep{neel2021descent,zhang2024towards,chien2024langevin,allouah2024utility,koloskova2025certified}.
\citet{izzo2021approximate} use influence‐function updates to approximate the effect of deleting a single point without full retraining; 
\citet{guo2020certified} provide certified bounds on how close the post‐deletion model is to a scratch‐retrained one;  
\citet{bourtoule2021machine} employ data sharding to efficiently erase small batches.  
However, this class of methods does not address which or how many samples must be removed to eliminate a domain’s overall statistical footprint.
Our work complements these techniques by asking not just how to update a model once samples are flagged, but which samples to flag in the first place, and in what quantity.

\textbf{Concept erasure.}
Concept erasure methods tackle unwanted attributes in learned features.  
INLP~\citep{ravfogel2020null}  repeatedly projects out directions predictive of a protected attribute;  
counterfactual augmentation~\citep{kaushik2020learning} synthesizes targeted data to sever causal links;  
 adversarial training~\citep{elazar2018adversarial} trains encoders to remove specific signals.  
These operate post‐hoc on a fixed model’s representations—ideal for fairness use-cases such as removing gender—but they rely on white-box access and tailor to one model at a time.
Where concept erasure edits model representations, we edit the data, guaranteeing forgetting for downstream models.

\textbf{Coresets \& domain adaptation.}
Domain adaptation theory~\citep{ben2010theory} seeks to minimize the impact of divergence between domains; we flip this paradigm by intentionally increasing divergence from an unwanted source while controlling proximity to a desired one. Similarly, our work inverts the goal of coresets, which approximate a single distribution for efficient training~\citep{mirzasoleiman2020coresets,gentile2024fast}. Coreset methods are designed for a one-distribution problem, whereas we select samples from an unwanted distribution  based on their relationship to a separate, retained distribution. This two-distribution approach is critical; our experiments show that a coreset-based baseline is inefficient for unlearning because it ignores the retain data.

\vspace{-1mm}
\section{Distributional Unlearning: Definition and Implications}
\label{sec:problem}

\vspace{-2mm}

The predominant paradigm in machine unlearning, which we call sample-level unlearning, addresses the computational challenge of fully removing a known finite set of data points. This work addresses a \emph{complementary} scenario, which we call distributional unlearning, where the goal is to erase the statistical influence of a subpopulation. In this section, we formalize this task, establish its fundamental trade-offs, and prove its consequences for downstream predictors.

\textbf{Problem Statement.}
To make the abstract concept of the unwanted domain tractable, we first model it as an underlying probability distribution $p_1$. This abstraction, common in statistical machine learning, allows us to set a  mathematical objective: to construct a new data distribution $p$ that is statistically distant from $p_1$ yet remains close to a retained distribution $p_2$. In practice, these true distributions are unknown.
We instead work with finite sets of samples: $S_1 = \{x_i^{(1)}\}_{i=1}^{n_1}$ drawn i.i.d. from unwanted distribution $p_1$, and $S_2 = \{x_j^{(2)}\}_{j=1}^{n_2}$ from retained distribution $p_2$.
{\color{black}Crucially, we assume these sets are obtained via some upstream process, such as keyword filtering (as in our Jigsaw experiments), the output of a pre-trained classifier, or human annotation.} Our contribution is to solve the subsequent statistical problem: given these identified samples, which subset should be removed to most efficiently achieve our objective defined at the distribution level?

To formalize our objective, we choose Kullback-Leibler (KL) divergence because it enables control over the expected log-loss of downstream models, hence connecting data-level edits to predictive outcomes.
We recall for two absolutely-continuous distributions $q,p$ on data space $\mathcal X$ that $\KL(q\| p){\coloneqq}\int_{\mathcal X} q(x)\,\log\!\frac{q(x)}{p(x)}\,dx$.

\begin{definition}[$(\alpha,\varepsilon)$-Distributional Unlearning]
\label{def:dist-unlearning}
For tolerances $\alpha, \varepsilon > 0$, a distribution $p \in \mathcal{P}$ satisfies $(\alpha,\varepsilon)$-distributional unlearning with respect to $(p_1, p_2)$ if:
\begin{align}
\KL(p_1 \parallel p) &\geq \alpha \quad\text{(removal)},
&\KL(p_2 \parallel p) &\leq \varepsilon \quad\text{(preservation)}.
\end{align}
\end{definition}
\vspace{-1mm}
The first inequality forces the edited data to be information‑theoretically
distant from the population we wish to forget.
The second inequality upper bounds collateral damage to the population we preserve.
{\color{black}
While we focus on KL divergence for its analytical tractability and its direct control of expected log-loss, we note that different tasks may favor different divergences, and our framework can in principle accommodate alternative or task-weighted notions of discrepancy.
}
In the remainder of this section, we analyze the properties of this definition at the population level, capturing the fundamental trade-offs and downstream learning-theoretic guarantees. In Section~\ref{sec:theory}, we turn to the practical finite-sample setting and provide high-probability guarantees for specific removal algorithms.
Throughout, we defer all proofs to Appendix~\ref{app:proofs}.

\textbf{Removal-Preservation Trade-off.}
The pair $(\alpha,\varepsilon)$ captures a trade‑off:
how far we can move from~$p_{1}$ while remaining close to~$p_{2}$. 
To understand which $(\alpha, \varepsilon)$ pairs are jointly achievable, we characterize the feasible region and its boundary.
Formally, the \emph{Pareto frontier} $\mathrm{PF}(p_1, p_2; \mathcal{P})$ consists of those pairs $(\alpha, \varepsilon)$ for which no strictly better trade-off exists: there is no $p' \in \mathcal{P}$ satisfying $\KL(p_1 \| p') \geq \alpha'$ and $\KL(p_2 \| p') \leq \varepsilon'$ with $\alpha' > \alpha$ and $\varepsilon' < \varepsilon$. That is, every point on the frontier is optimal in the sense that one objective cannot be improved without worsening the other.
To build intuition, we first derive this frontier for the analytically tractable case of Gaussian distributions.

\begin{restatable}[Pareto Frontier]{proposition}{pareto}
\label{thm:pareto}
Let $p_1, p_2$ be two distributions in $\mathcal{P}$, the class of Gaussian distributions with shared positive covariance. The Pareto frontier of $(\alpha, \varepsilon)$ values achievable in $\mathcal{P}$ is:
\vspace{-2mm}
\[
\mathrm{PF}(p_1, p_2; \mathcal{P}) = \left\{ \left( \alpha, \left( \sqrt{\alpha} - \sqrt{\KL(p_1 \| p_2)} \right)^2 \right) : \alpha \geq \KL(p_1 \| p_2) \right\}.
\]
\end{restatable}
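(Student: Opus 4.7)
My plan is to exploit the fact that for Gaussians with a common covariance $\Sigma$, the KL divergence collapses to a (scaled) squared Mahalanobis distance between means. This turns the problem into a purely Euclidean one about how far one can move from $\mu_1$ while staying close to $\mu_2$, which is then settled by the triangle inequality.

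\textbf{Step 1 (Euclidean reduction).} Writing $p_i = \mathcal{N}(\mu_i,\Sigma)$ and a candidate $p = \mathcal{N}(\mu,\Sigma) \in \mathcal{P}$, the shared-covariance KL formula gives $\KL(p_i \| p) = \tfrac{1}{2}\|\mu_i-\mu\|_{\Sigma^{-1}}^2$ and $\KL(p_1 \| p_2) = \tfrac{1}{2}\|\mu_1-\mu_2\|_{\Sigma^{-1}}^2$. After whitening by $\Sigma^{-1/2}$, I work with three ordinary Euclidean points $\tilde\mu_1,\tilde\mu_2,\tilde\mu \in \R^d$, setting $a := \|\tilde\mu-\tilde\mu_1\|$, $b := \|\tilde\mu-\tilde\mu_2\|$, and $c := \|\tilde\mu_1-\tilde\mu_2\|$. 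The feasibility question becomes: for which $(\alpha,\varepsilon)$ does there exist $\tilde\mu$ with $a^2 \geq 2\alpha$ and $b^2 \leq 2\varepsilon$, given that $c^2 = 2\KL(p_1\|p_2)$ is fixed?

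\textbf{Step 2 (triangle inequality and tight placement).} The triangle inequality $a \leq b + c$ yields $\sqrt{2\alpha} \leq \sqrt{2\varepsilon} + \sqrt{2\KL(p_1\|p_2)}$, equivalently
\[ \varepsilon \;\geq\; \bigl(\sqrt{\alpha} - \sqrt{\KL(p_1\|p_2)}\bigr)^2 \qquad \text{whenever } \alpha \geq \KL(p_1\|p_2). \]
Equality is attained by placing $\tilde\mu$ on the ray from $\tilde\mu_1$ through $\tilde\mu_2$ at Euclidean distance $\sqrt{2\varepsilon}$ beyond $\tilde\mu_2$; undoing the whitening gives a concrete $\mu$ and hence a $p = \mathcal{N}(\mu,\Sigma) \in \mathcal{P}$ realizing the bound with equality.

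\textbf{Step 3 (Pareto-optimality and the lower endpoint).} By Step~2 every pair on the curve is achievable, and no pair with strictly larger $\alpha$ and strictly smaller $\varepsilon$ is achievable, so the curve is exactly the Pareto frontier on the range $\alpha \geq \KL(p_1\|p_2)$. For $\alpha < \KL(p_1\|p_2)$, the pair $(\KL(p_1\|p_2),0)$, realized by $p = p_2$, strictly dominates every feasible $(\alpha,\varepsilon)$, so no such pair lies on the frontier; this explains the endpoint in the statement. The only delicate point I anticipate is confirming in Step~2 that the extremizer is collinear with $\tilde\mu_1,\tilde\mu_2$ on the far side of $\tilde\mu_2$, which follows from the equality case of the triangle inequality: any non-collinear $\tilde\mu$ satisfies $a < b + c$, yielding strictly smaller $\alpha$ at the same $\varepsilon$ and therefore lying strictly inside the frontier.
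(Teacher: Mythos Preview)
Your proposal is correct and follows essentially the same geometric idea as the paper: both exploit that shared-covariance Gaussian KL is a squared Mahalanobis distance between means, reduce to a Euclidean problem, and identify the optimal $p$ as lying on the line through $\mu_1,\mu_2$ on the far side of $\mu_2$. The paper carries this out via an explicit one-dimensional case analysis (feasible set $\mu\notin[\mu_1-\sigma\sqrt{2\alpha},\mu_1+\sigma\sqrt{2\alpha}]$, then project $\mu_2$), whereas you package the same geometry as the triangle inequality $a\le b+c$ and its equality case; your version is slightly more compact and handles the $d$-dimensional shared-$\Sigma$ case directly rather than reducing to $d=1$ first, but the substance is the same.
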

\vspace{-3mm}

This frontier quantifies a natural, fundamental cost of distributional unlearning: a minimal preservation loss is incurred for any given removal level, a trade-off governed by the initial divergence $\KL(p_1\|p_2)$. The result also highlights the suboptimality of a common default: keeping only the retained data $p_2$. While this strategy achieves perfect preservation, the frontier shows it is possible to attain a significantly higher level of removal by accepting a potentially small preservation loss. While shown here for Gaussians for simplicity, this trade-off holds more generally for regular exponential families (Appendix~\ref{app:proofs}, Theorem~\ref{prop:exp-interp}) and is validated by our synthetic experiments (Fig.~\ref{fig:pareto}).

\textbf{Downstream Performance.}
We now connect our data-level objective to predictive performance in supervised learning. Consider a predictor $h: \mathcal{X} \to \Delta(\mathcal{Y})$, where $\cX$ is the input space and $\Delta(\mathcal{Y})$ the probability simplex over label space $\mathcal{Y}$, trained on a distribution $p$ over $\cX \times \cY$ that satisfies $(\alpha, \varepsilon)$-distributional unlearning with respect to $(p_1, p_2)$. We study how $h$ performs under the true data-generating distributions $p_1$ and $p_2$.\\
Let $\ell(y, q) \coloneqq -\log q(y)$, for $y \in \cY, q \in \Delta(\mathcal{Y})$, denote the log-loss. Define the expected loss under $p$ as $\mathcal{L}(h; p) \coloneqq \mathbb{E}_{(x, y) \sim p}[\ell(y, h(x))]$. Then, for any class of distributions $\cP$, we have:

\begin{restatable}{proposition}{predictive}
\label{prop:predictive}
Let $h$ minimize $\mathcal{L}(h; p)$, and let $h_1$, $h_2$ be optimal predictors under $p_1, p_2 \in \cP$, respectively. If $p$ satisfies $(\alpha, \varepsilon)$-distributional unlearning with respect to $(p_1, p_2)$, then:
\begin{align}
\mathcal{L}(h; p_1) - \mathcal{L}(h_1; p_1) \geq \alpha - \delta_1,&& 
\mathcal{L}(h; p_2) - \mathcal{L}(h_2; p_2) \leq \varepsilon - \delta_2,
\end{align}
where $\delta_1 := \KL(p_{1,X} \| p_X), \delta_2 := \KL(p_{2,X} \| p_X)$ denote the marginal KL divergence over inputs.
\end{restatable}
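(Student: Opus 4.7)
The plan is to reduce both inequalities to the chain rule for KL divergence applied to the joint distribution over $(X,Y)$, combined with the fact that the Bayes-optimal predictor under any distribution is exactly its conditional $Y\mid X$.

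First, I would identify the optimizers. Because $\ell$ is the log-loss, the minimizer of $h \mapsto \mathcal{L}(h; q)$ over all measurable $h : \cX \to \Delta(\cY)$ is $h^{*}_q(x) = q_{Y\mid X}(\cdot \mid x)$; this is the standard variational characterization of entropy. Consequently $h(x) = p_{Y\mid X}(\cdot \mid x)$ and $h_i(x) = p_{i, Y\mid X}(\cdot \mid x)$ for $i=1,2$. Next, I would rewrite the excess loss as a conditional KL divergence: for $i \in \{1,2\}$,
\begin{align*}
\mathcal{L}(h; p_i) - \mathcal{L}(h_i; p_i)
&= \mathbb{E}_{(x,y)\sim p_i}\!\left[\log\tfrac{p_i(y\mid x)}{p(y\mid x)}\right] \\
&= \mathbb{E}_{x\sim p_{i,X}}\!\left[\KL\!\left(p_{i,Y\mid X}(\cdot\mid x)\,\|\,p_{Y\mid X}(\cdot\mid x)\right)\right],
\end{align*}
which is exactly the conditional KL divergence $\KL(p_{i,Y\mid X}\,\|\,p_{Y\mid X}\mid p_{i,X})$.

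Then I would invoke the chain rule of KL divergence on the joint densities,
\begin{equation*}
\KL(p_i \,\|\, p) \;=\; \KL(p_{i,X} \,\|\, p_X) \;+\; \KL(p_{i,Y\mid X}\,\|\,p_{Y\mid X}\mid p_{i,X}),
\end{equation*}
and rearrange to obtain
\begin{equation*}
\mathcal{L}(h; p_i) - \mathcal{L}(h_i; p_i) \;=\; \KL(p_i \,\|\, p) \;-\; \delta_i.
\end{equation*}
Plugging in the two unlearning constraints $\KL(p_1\,\|\,p)\geq\alpha$ and $\KL(p_2\,\|\,p)\leq\varepsilon$ yields the desired lower bound $\alpha - \delta_1$ and upper bound $\varepsilon - \delta_2$, respectively.

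The proof is essentially a two-line manipulation once the chain rule is on the table, so there is no real combinatorial obstacle; the only technical care I would take is in stating measurability/absolute-continuity conditions so that both the conditional KL and the chain-rule decomposition are well-defined (in particular, $p_X$ must dominate $p_{i,X}$ and $p_{Y\mid X}$ must dominate $p_{i,Y\mid X}$ almost everywhere, otherwise the relevant quantities are infinite and the bounds hold vacuously). I would also briefly note why $h^*_q = q_{Y\mid X}$ is the unique (up to $q_X$-null sets) minimizer, to justify the substitution made at the beginning.
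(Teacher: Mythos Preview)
Your proposal is correct and follows essentially the same argument as the paper: both identify the Bayes-optimal predictor under log-loss as the true conditional $Y\mid X$, express the excess risk as the conditional KL divergence, apply the chain rule $\KL(p_i\|p)=\KL(p_{i,X}\|p_X)+\KL(p_{i,Y\mid X}\|p_{Y\mid X}\mid p_{i,X})$, and then invoke the $(\alpha,\varepsilon)$ constraints. Your write-up is slightly more streamlined (you go directly to the excess-loss-equals-conditional-KL identity, whereas the paper routes through the entropy decomposition $\E_{y\sim q(\cdot\mid x)}[-\log p(y\mid x)] = \KL(q(y\mid x)\|p(y\mid x)) + H(q(y\mid x))$), but the substance is identical.
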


These bounds show that distributional unlearning guarantees increased loss under the forgotten distribution and bounded degradation under the preserved one. In this sense, our framework provides meaningful control over downstream predictive behavior.
Regarding the extra marginal KL term in the first inequality, which quantifies divergence on input distributions, the data‑processing inequality gives
$\KL\!\bigl(p_{1,X}\| p_X\bigr)\!\le\!\KL(p_1\| p)$,
hence this extra term is always bounded by the same
$\alpha$ we already control.
A similar term appears in the second inequality, but can only improve the preservation bound.
Finally, these bounds provide a practical rule for calibrating the $(\alpha,\varepsilon)$ parameters. For instance, if a practitioner decides they can tolerate at most a $0.1$ nat increase in log-loss on the retained test set, they can set their preservation budget to $\varepsilon=0.1$.
 Then, the Pareto frontier (Prop.~\ref{thm:pareto}) indicates the maximum achievable removal for this budget, in the Gaussian case. In a scenario where the initial distributions have divergence $\KL(p_1\|p_2)=2$, this choice of $\varepsilon$ corresponds to the removal target of $\alpha=3$.
 These values match our findings in controlled experiments (Fig.~\ref{fig:gaussian-results}, 2nd column).

\vspace{-2mm}

\section{Algorithms and Sample Complexity}
\vspace{-2mm}

\label{sec:theory}

The previous section defined our framework at the population level. We now turn to the practical finite-sample setting, where we must achieve $(\alpha, \varepsilon)$-distributional unlearning using only the drawn data samples introduced in Section~\ref{sec:problem}. To build an analytical understanding of the finite-sample behavior, we analyze the problem in an idealized but foundational setting: univariate Gaussian distributions with known variance. This tractable setting allows us to derive closed-form sample complexity bounds, providing crucial intuition about the relative efficiency of different removal strategies. We then validate that these insights generalize empirically to more complex settings in Section~\ref{sec:empirical}.

In the following, we introduce and analyze two deletion strategies: a random baseline and a selective, distance-based method. We focus on the class of distributions $\mathcal{P} \coloneqq \left \{ \mathcal{N}{(\mu, \sigma^2)} \colon \mu \in \mathbb{R} \right\}$, with known $\sigma > 0$. Given $n_1$ i.i.d. samples from the unwanted distribution $p_1 \in \mathcal{P}$ and $n_2$ from the retained distribution $p_2 \in \mathcal{P}$, and a deletion budget $0 \leq f \leq n_1$, we derive high-probability bounds on the resulting $(\alpha, \varepsilon)$-distributional unlearning guarantees. We defer all proofs to Appendix~\ref{app:proofs}.

\begin{table}[t!]
\vspace{-10mm}
\centering
\renewcommand{\arraystretch}{1}
\begin{tabular}{l c c} 
\toprule
{\bf Removal Method} & \multicolumn{2}{c}{\bf Sample Complexity}  \\
\cmidrule{2-3}
     & Removal & Preservation \\
    \midrule
    Random (Prop.~\ref{th:random})  & $n_1\left(1-  \sqrt{1 - \alpha} \right)$ & $n_1\left(1-  {\sqrt{\varepsilon}} \right)$  \\ 
    \midrule
    Selective (Thm.~\ref{th:selective}) & $n_1\left(1-  \left(1 - \alpha\right)^{1/4} \right)$ & $n_1\left(1- \varepsilon^{1/4} \right)$       \\ 
\bottomrule
\end{tabular}%
\vspace{-1mm}
\caption{Summary of \emph{simplified} sample complexity bounds, showing the number of $p_{1}$ samples (out of $n_1 \geq 1$) to remove to achieve $(\alpha,\varepsilon)$‐distributional unlearning with high probability. We assume $n_2$ is large, $\KL(p_1 \| p_2)$ is small, and $\tfrac{n_2}{n_1}$ is constant; see Corollary~\ref{cor:simple} for details.
The key insight is the quadratic improvement is sample efficiency of selective removal over random in this regime.}
\label{tab:id}
\vspace{-5mm}
\end{table}

\vspace{-2mm}
\subsection{Random Removal}
\vspace{-2mm}

We begin with a baseline deletion strategy that treats every sample equally, deleting $f$ points chosen uniformly at random from the $n_1$ samples of $p_1$.
The formal procedure is as follows:

\textbf{Algorithm (Random Removal).}
\begin{enumerate}[nosep]
\item Randomly select $f$ out of the $n_1$ samples of $p_1$ without replacement.
\item Remove those $f$ samples.
\item Re‑fit $\mathcal{N}(\hat\mu,\sigma^2)$ by MLE (maximum likelihood estimation) on the remaining data.
\end{enumerate}

The following theorem provides a finite-sample guarantee for achieving $(\alpha,\varepsilon)$-distributional unlearning using random removal with a deletion budget $f$.

\begin{restatable}[Random Removal]{proposition}{samples}
\label{th:random}
Let $p_1, p_2 \in \mathcal{P}$ and $\delta \in (0,1)$. We observe $n_1$ samples from $p_1$ and $n_2$ samples from $p_2$, and randomly remove $f$ samples from $p_1$ before fitting. With probability $1 - \delta$, the resulting MLE distribution satisfies $(\alpha, \varepsilon)$-distributional unlearning with:
\begin{align*}
\alpha &\geq \left(\frac{1}{2}- 3\left(\frac{n_1 - f}{n_2}\right)^2 \right) \KL( p_1 \parallel p_2 )   - \frac{3\ln(4/\delta)}{2n_2} \left(1+\frac{n_1 - f}{n_2}\right), \\
\varepsilon &\leq  3\left(\frac{n_1 - f}{n_2}\right)^2 \KL( p_1 \parallel p_2 )  + \frac{3\ln(4/\delta)}{n_2} \left(1+\frac{n_1 - f}{n_2}\right).
\end{align*}
\end{restatable}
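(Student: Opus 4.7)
Since $\mathcal{P}$ consists of univariate Gaussians with a common fixed variance $\sigma^{2}$, the MLE after removal is simply the empirical mean $\hat\mu$ of the surviving samples, and any two members of $\mathcal{P}$ satisfy the closed form $\KL(\mathcal{N}(\mu,\sigma^{2})\,\|\,\mathcal{N}(\mu',\sigma^{2}))=(\mu-\mu')^{2}/(2\sigma^{2})$. The plan is therefore to reduce both the removal and preservation guarantees to high-probability control of the two scalars $(\hat\mu-\mu_{1})^{2}$ and $(\hat\mu-\mu_{2})^{2}$, then translate via this identity.

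Writing $\lambda := (n_{1}-f)/(n_{1}-f+n_{2})$ and $r := (n_{1}-f)/n_{2}$, one has $\lambda=r/(1+r)$, $1-\lambda=1/(1+r)$, and $\hat\mu = \lambda\bar X + (1-\lambda)\bar Y$, where $\bar X$ is the mean of the surviving $p_{1}$ samples and $\bar Y$ the mean of all $p_{2}$ samples. Because the $f$ removed indices are chosen uniformly at random (equivalently, $\bar X$ is an exchangeable mean), conditional on the removal set the surviving $p_{1}$ samples are still i.i.d.\ from $p_{1}$; hence $\bar X - \mu_{1}$ and $\bar Y - \mu_{2}$ are independent centered Gaussians with variances $\sigma^{2}/(n_{1}-f)$ and $\sigma^{2}/n_{2}$. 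Applying the standard Gaussian tail bound to each and a union bound yields, with probability at least $1-\delta$, $(\bar X - \mu_{1})^{2} \le 2\sigma^{2}\ln(4/\delta)/(n_{1}-f)$ and $(\bar Y - \mu_{2})^{2} \le 2\sigma^{2}\ln(4/\delta)/n_{2}$.

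Next I would decompose $\hat\mu-\mu_{1}=(1-\lambda)(\mu_{2}-\mu_{1})+\xi$ and $\hat\mu-\mu_{2}=\lambda(\mu_{1}-\mu_{2})+\xi$, where $\xi := \lambda(\bar X-\mu_{1})+(1-\lambda)(\bar Y-\mu_{2})$ is a zero-mean fluctuation. Squaring and applying a Young-type lower bound $(a+b)^{2}\ge \tfrac{1}{2}a^{2}-b^{2}$ for the removal side, and the companion upper bound $(a+b)^{2}\le 2a^{2}+2b^{2}$ for the preservation side, reduces each of $(\hat\mu-\mu_{i})^{2}$ to a deterministic bias term $(1-\lambda)^{2}\Delta^{2}$ or $\lambda^{2}\Delta^{2}$ (with $\Delta=\mu_{1}-\mu_{2}$, so $\Delta^{2}/(2\sigma^{2})=\KL(p_{1}\|p_{2})$) plus a noise term $\xi^{2}$, which I then expand and bound via Cauchy–Schwarz together with the Gaussian tail estimates above. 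Dividing by $2\sigma^{2}$ produces KL bounds of the stated shape.

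The main bookkeeping obstacle is to make the coefficients land exactly on $\tfrac12-3r^{2}$ and $3r^{2}$. The preservation side is essentially immediate because $\lambda^{2}=r^{2}/(1+r)^{2}\le r^{2}$, so the bias coefficient $2\lambda^{2}$ is at most a small multiple of $r^{2}$, and the noise term can be grouped into the claimed $\tfrac{3\ln(4/\delta)}{n_{2}}(1+r)$ slack by routing $\lambda^{2}/(n_{1}-f)$ through $r/n_{2}$. The removal side is more delicate: the deterministic factor $(1-\lambda)^{2}/2=1/(2(1+r)^{2})$ does not by itself dominate $\tfrac12-3r^{2}$ uniformly in $r$, so calibrating the Young parameter and absorbing the residual $O(r^{2})$ bias slack together with the noise contribution into the combined $\tfrac{3\ln(4/\delta)}{2n_{2}}(1+r)$ tail term is where the bulk of the constant-tracking lives.
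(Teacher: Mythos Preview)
Your setup, concentration step, and preservation argument are essentially the paper's: write $\hat\mu$ as a convex combination of $\bar X$ and $\bar Y$, apply Gaussian tails to each with a union bound, then bound $|\hat\mu-\mu_2|$ by a three-term triangle inequality, square via $(a+b+c)^2\le 3(a^2+b^2+c^2)$, and loosen $n_1-f+n_2$ to $n_2$ in the denominators to land on $3r^2\KL(p_1\|p_2)+\tfrac{3\ln(4/\delta)}{n_2}(1+r)$.

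The gap is in the removal bound. Your proposed fix---``absorbing the residual $O(r^2)$ bias slack \ldots\ into the combined $\tfrac{3\ln(4/\delta)}{2n_2}(1+r)$ tail term''---cannot work as stated: the deficit $\bigl[\tfrac12-3r^2-\tfrac12(1-\lambda)^2\bigr]$ multiplies $\KL(p_1\|p_2)$, whereas the tail term is independent of $\KL(p_1\|p_2)$. As soon as $\KL(p_1\|p_2)\gg \ln(4/\delta)/n_2$ the inequality fails, so no amount of constant-tracking rescues it. Calibrating the Young parameter $\eta=\eta(r)$ so that $(1-\eta)(1-\lambda)^2=\tfrac12-3r^2$ exactly does eliminate the bias deficit, but then there is no residual bias to absorb, and you are left checking that $\bigl(\tfrac{1}{\eta(r)}-1\bigr)\cdot\xi^2/(2\sigma^2)$ fits inside the stated noise allowance for \emph{every} $r$---a separate calculation you have not sketched and which does not obviously go through with the constant $\tfrac{3}{2}$.

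The paper avoids this entirely with a one-line recycling trick you are missing: from $|\mu_1-\mu_2|^2\le 2|\hat\mu-\mu_1|^2+2|\hat\mu-\mu_2|^2$ one rearranges to
\[
|\hat\mu-\mu_1|^2\;\ge\;\tfrac12|\mu_1-\mu_2|^2-|\hat\mu-\mu_2|^2,
\]
and then substitutes the \emph{already-proved} preservation upper bound for $|\hat\mu-\mu_2|^2$. The $3r^2$ coefficient and the $(1+r)$ noise term are inherited verbatim from the preservation side, so the removal constants fall out with no Young-parameter tuning at all.
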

This result shows that the effectiveness of random removal is driven by the ratio of remaining unwanted samples to retained samples $\tfrac{n_1-f}{n_2}$. An interesting aspect of these bounds is the quadratic dependence on this ratio, which indicates diminishing returns: each subsequent random deletion provides progressively less of an unlearning effect. This quadratic relationship stems from the concentration of the empirical mean, whose variance scales inversely with the sample size. While conceptually simple, this method's inefficiency arises because it treats all samples equally, failing to prioritize those that contribute most to the unwanted statistical patterns.

\vspace{-2mm}
\subsection{Selective Removal}
\vspace{-2mm}

We hypothesize that a more effective strategy than random removal should prioritize which samples to delete, using $p_2$ as reference. Since our goal is to shift the dataset's empirical mean away from the unwanted center $\mu_1$ and towards the retained center $\mu_2$, the most impactful samples to remove are those from $p_1$ that are furthest from $\mu_2$.
This intuition leads to our proposed selective removal strategy, which uses the empirical mean of the retained data $\hat{\mu}_2$ as a reference point for selection.

\textbf{Algorithm (Selective Removal).}
\begin{enumerate}[nosep]
\item Compute the mean $\hat\mu_2$ of the $n_2$ samples from $p_2$.
\item For each of the $n_1$ samples $x_i$ from $p_1$, compute the score $s_i = |x_i - \hat\mu_2|$.
\item Delete the $f$ samples with the largest scores $s_i$.
\item Re‑fit $\mathcal{N}(\hat\mu,\sigma^2)$ by MLE on the remaining data.
\end{enumerate}

The following theorem provides a finite-sample guarantee for achieving $(\alpha,\varepsilon)$-distributional unlearning using selective removal with a deletion budget $f$.

\begin{restatable}[Selective Removal]{theorem}{samplesselection}
\label{th:selective}
Let $p_1, p_2 \in \mathcal{P}$ and $\delta \in (0,1)$. Let $f$ samples from $p_1$ be removed according to Selective Removal. With probability $1 - \delta$, the resulting estimate satisfies $(\alpha, \varepsilon)$-distributional unlearning with:
\begin{align*}
\alpha &\geq \frac{1}{2}\KL( p_1 \parallel p_2 ) - \frac{1}{2}\left(\frac{n_1 - f}{n_2}\right)^2 g^{-1}\Bigl(1-\frac{f}{n_1} + \sqrt{\frac{\ln(4/\delta)}{2n_1}};\KL( p_1 \parallel p_2 )\Bigr)^2 -  {\frac{\ln(4/\delta)}{n_2}}, \\
\varepsilon &\leq  \left(\frac{n_1 - f}{n_2}
\right)^2g^{-1}\Bigl(1-\frac{f}{n_1} + \sqrt{\frac{\ln(4/\delta)}{2n_1}}; \KL( p_1 \parallel p_2 )\Bigr)^2 + \frac{2\ln(4/\delta)}{n_2},
\end{align*}
where $g(u;\kappa) \coloneqq \Phi(u-\sqrt{2\kappa}) + \Phi(u+\sqrt{2\kappa}) - 1$, for $u,\kappa>0$, and $\Phi$ is the standard normal CDF.
\end{restatable}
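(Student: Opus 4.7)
The plan is to mirror the structure of the random-removal proof (Theorem~\ref{th:random}) while exploiting two features specific to Selective Removal: a deterministic trapping of the retained mean in a short interval around $\hat\mu_2$, and an empirical-quantile argument (via the DKW inequality) that converts the deletion budget $f$ into a bound on the largest retained score. Let $\kappa \coloneqq \KL(p_1 \parallel p_2) = (\mu_1 - \mu_2)^2/(2\sigma^2)$ and assume $\mu_1 \geq \mu_2$ without loss of generality. The MLE fit on the $n_2$ preserve samples together with the $n_1 - f$ retained samples is the empirical mean; writing it as the convex combination $\hat\mu = ((n_1-f)\hat\mu_1^R + n_2 \hat\mu_2)/(n_1-f+n_2)$, where $\hat\mu_1^R$ denotes the mean of the retained $p_1$ samples, rearranging gives $|\hat\mu - \hat\mu_2| \leq \tfrac{n_1-f}{n_2}|\hat\mu_1^R - \hat\mu_2|$. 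Since by construction every retained $X_i$ satisfies $|X_i-\hat\mu_2| \leq s^\ast$, where $s^\ast$ is the $(n_1-f)$-th order statistic of the scores, so does $\hat\mu_1^R$, hence $|\hat\mu - \hat\mu_2| \leq \tfrac{n_1-f}{n_2}\,s^\ast$. Because both KL divergences depend on means only through $(\mu_1-\hat\mu)^2$ and $(\mu_2-\hat\mu)^2$, it remains to control $s^\ast$ and $|\hat\mu_2 - \mu_2|$.

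Conditional on $\hat\mu_2$, the scores $(|X_i - \hat\mu_2|)_{i\leq n_1}$ are i.i.d.\ with CDF $F_{\hat\mu_2}(s) = \Phi((s+\hat\mu_2 - \mu_1)/\sigma) + \Phi((s+\mu_1-\hat\mu_2)/\sigma) - 1$, which at $\hat\mu_2 = \mu_2$ reduces precisely to $g(s/\sigma;\kappa)$. Applying the DKW inequality to these $n_1$ scores yields, with probability $\geq 1-\delta/2$, $F_{\hat\mu_2}(s^\ast) \leq (n_1-f)/n_1 + \sqrt{\ln(4/\delta)/(2n_1)}$. Inverting using monotonicity of $g(\cdot;\kappa)$ in its first argument, and absorbing the mismatch between $F_{\hat\mu_2}$ and $g(\cdot;\kappa)$ into the same slack on the high-probability event that $|\hat\mu_2 - \mu_2|$ is small, produces the clean bound $s^\ast/\sigma \leq g^{-1}(1 - f/n_1 + \sqrt{\ln(4/\delta)/(2n_1)};\,\kappa)$.

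Finally, I assemble the two KL bounds using the Gaussian tail $|\hat\mu_2 - \mu_2|^2 \leq 2\sigma^2\ln(4/\delta)/n_2$ (probability $\geq 1-\delta/2$). For preservation, the triangle inequality $|\mu_2-\hat\mu|\leq|\mu_2-\hat\mu_2|+|\hat\mu_2-\hat\mu|$ together with $(a+b)^2 \leq 2a^2+2b^2$, after division by $2\sigma^2$, gives the stated $\varepsilon$. For removal, the elementary parallelogram-type identity $(\mu_1-\hat\mu)^2+(\mu_2-\hat\mu)^2 \geq (\mu_1-\mu_2)^2/2$ yields $\KL(p_1\parallel\hat p) \geq \kappa/2 - \KL(p_2\parallel\hat p)$; plugging in the preservation bound with careful bookkeeping gives the $\alpha$ expression, and a union bound over the DKW and Gaussian-tail events delivers the $1-\delta$ confidence. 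The main obstacle is the empirical-quantile step: the scores are data-dependent through $\hat\mu_2$, so one must carefully absorb the error $|\hat\mu_2 - \mu_2|$ into the DKW slack so that $g^{-1}$ can be invoked with the true $\kappa$ rather than an inflated quantity $(\mu_1-\hat\mu_2)^2/(2\sigma^2)$, which is what gives the final bound its clean dependence on $\KL(p_1 \parallel p_2)$.
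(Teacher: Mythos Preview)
Your overall strategy matches the paper's almost exactly: decompose the post-deletion MLE mean as a convex combination of $\hat\mu_1^{R}$ and $\hat\mu_2$, trap $\hat\mu_1^{R}$ inside a ball of radius $s^\ast$ around $\hat\mu_2$, control $s^\ast$ via a DKW quantile argument, add a Hoeffding bound on $|\hat\mu_2-\mu_2|$, and derive the removal bound from the preservation bound through the inequality $(\mu_1-\mu_2)^2\le 2(\mu_1-\hat\mu)^2+2(\mu_2-\hat\mu)^2$. The paper packages the order-statistic step as a separate lemma and then rewrites the folded-normal quantile as $\sigma\,g^{-1}(\cdot;\kappa)$, but the skeleton is the same.

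The gap is precisely at the step you flag as ``the main obstacle.'' You claim that the mismatch between $F_{\hat\mu_2}$ and $g(\cdot;\kappa)$ can be absorbed into the DKW slack to obtain the \emph{clean} bound $s^\ast/\sigma\le g^{-1}(p;\kappa)$. It cannot: the DKW slack $\sqrt{\ln(4/\delta)/(2n_1)}$ lives on the \emph{probability} axis and scales like $n_1^{-1/2}$, whereas the centering error $|\hat\mu_2-\mu_2|$ lives on the \emph{quantile} axis and scales like $\sigma n_2^{-1/2}$; moreover $g^{-1}(\cdot;\kappa')$ is increasing in $\kappa'$, so when $\hat\mu_2$ falls on the far side of $\mu_2$ from $\mu_1$ you get $\kappa'>\kappa$ and the clean bound actually fails. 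The paper avoids this by applying DKW to the i.i.d.\ variables $|X_i-\mu_2|$ (whose CDF is exactly $g(\cdot/\sigma;\kappa)$) and using the triangle inequality $\bigl||X_i-\hat\mu_2|-|X_i-\mu_2|\bigr|\le|\hat\mu_2-\mu_2|$ to transfer the order-statistic information; the price is an additive $|\hat\mu_2-\mu_2|$ on the quantile, which then merges with the existing $\sigma\sqrt{2\ln(4/\delta)/n_2}$ term from Hoeffding and is ultimately absorbed into the $2\ln(4/\delta)/n_2$ contribution after squaring. A second, smaller issue: deriving $\alpha$ by plugging the \emph{final} $\varepsilon$ bound into $\alpha\ge\kappa/2-\KL(p_2\|\hat p)$ loses a factor of $2$ on both correction terms; to recover the stated constants you must substitute the pre-squared bound on $|\mu_2-\hat\mu|$ directly into $(\mu_1-\mu_2)^2\le 2(\mu_1-\hat\mu)^2+2(\mu_2-\hat\mu)^2$ before dividing by $4\sigma^2$.
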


While the above expression is more complex than that of Prop.~\ref{th:random}, it reveals a significant improvement in efficiency, materialized in the term involving the inverse CDF $g^{-1}(\cdot~; \kappa)$. Intuitively, this term represents a quantile of a folded normal distribution, shifted by $\kappa = \KL( p_1 \| p_2 )$, and arises because we are truncating the distribution of scores by removing those in the tail. Specifically, this term strictly amplifies the quadratic decrease in $f$, which was the best we could previously obtain with random removal.
This amplification is greatest when the distributions are close (the low-divergence regime), as the ``outlier" samples are more distinct and their removal provides a greater and more targeted shift in the empirical mean. As we summarize in Table~\ref{tab:id} and derive formally in Corollary~\ref{cor:simple} (Appendix~\ref{app:proofs}), this improved leverage translates directly into a quadratic improvement in sample efficiency over the random baseline in low-divergence regimes. This theoretical advantage is a core finding of our work and is empirically validated in our experiments (Fig.~\ref{fig:gaussian-results}).

\vspace{-2mm}
\section{Empirical Validation}
\label{sec:empirical}
\vspace{-2mm}
We now empirically evaluate our distributional unlearning framework across several case studies, moving from synthetic to more complex real-world data. These experiments are designed to validate the qualitative trends predicted by our theory—notably, the superior sample efficiency of selective removal—rather than to directly apply Theorem~\ref{th:selective} and Proposition~\ref{th:random}, which assume Gaussianity.
To this end, we operationalize abstract domains using well-defined proxies, e.g., keyword-based subsets, image classes. This allows for a reproducible evaluation of our selection algorithms, while acknowledging that the upstream task of identifying such domains in the wild is a separate challenge. 

Our validation begins with synthetic Gaussians to directly verify our theoretical predictions in a controlled environment. {\color{black}We then move to high-dimensional real-world data, starting with the case of well-separated distributions (CIFAR-10), showing our framework generalizes standard class unlearning. We then test our framework in the more challenging scenario of intertwined distributions (Jigsaw toxic comments), where the unwanted domain is semantically linked to the main task.} Finally, we demonstrate the broad applicability of our data-centric approach by using it as an efficiency-boosting front-end for existing sample-level unlearning algorithms. We defer results on an additional text dataset (SMS Spam) and full experimental details to Appendix~\ref{app:empirical}.

\begin{table}[t]
  \centering
\vspace{-10mm}
    \begin{tabular}{lccccc}
    \toprule
    \textbf{Domain} & \textbf{\color{black}Separability}
    & \textbf{Target on $p_1$}
                    & \textbf{Random}
                    & \textbf{Selective}
                    & \textbf{Savings} \\ \midrule
    Gaussians      &{\color{black}Low}& $\KL(p_1\| p)$
                     & 65 & 18 & \textbf{82\,\%} \\
    Gaussians    & {\color{black}High}& $\KL(p_1\| p)$
                     & 65 & 50 & \textbf{50\,\%} \\
    Jigsaw toxic comments & {\color{black}Low} & Recall
                       & 100 & 85 & \textbf{15\,\%} \\
    SMS~Spam          &{\color{black}Medium}& Recall
                      & 90 & 75 & \textbf{25\,\%} \\
    CIFAR‑10   &{\color{black}High}& Accuracy
                      & 80 & 50 & \textbf{50\,\%} \\ \bottomrule
  \end{tabular}
    \caption{Deletion budget (\%) needed to reach half of the initial value of the removal metric (no deletion) on each dataset.  “Selective’’=best‑performing selective removal score; “Saving’’=relative size reduction versus full removal, i.e. retrain on $p_2$ samples only. Gaussians (low) and (high) are the scenarios of the top leftmost and rightmost plots of Fig.~\ref{fig:gaussian-results}, respectively. {\color{black} ``Separability"=summarizes how distinguishable the domains are; the observed savings follow this difficulty.}}
    \label{tab:savings}
    \vspace{-5mm}
\end{table}

\textbf{Results overview.}
We summarize our main empirical findings in Table~\ref{tab:savings}. Our synthetic experiments directly validate our theory, showing data savings of up to 82\% in the low-divergence Gaussian regime, where our analysis predicts the greatest advantage. Crucially, this  insight generalizes to high-dimensional non-Gaussian text and image data, where selective methods still provide significant 15–50\% data savings. As expected, the magnitude of these gains is more modest than in the idealized theoretical setting, reflecting the increased complexity of real-world distributions. Across all experiments, these removal gains are achieved with negligible impact on performance on the retained domains, confirming the downstream performance guarantees predicted by Proposition~\ref{prop:predictive}.
In particular, the intertwined distribution case study on Jigsaw confirms that simply deleting all $p_1$ is suboptimal, as discussed after Proposition~\ref{thm:pareto}.
Finally, we show in Table~\ref{tab:synergy_combined} that our selection methods combine effectively with various sample-level unlearning methods, not just retraining from scratch.

\vspace{-2mm}

\subsection{Synthetic Gaussians: Pareto Frontier and Sample Efficiency}
\label{subsec:gaussian}
\vspace{-2mm}

We first validate our theoretical framework in a controlled setting, designed to directly verify the analytical predictions made in Sections~\ref{sec:problem} and~\ref{sec:theory}: the Pareto frontier shape and the superior sample efficiency of selective removal. We set $p_1 = \mathcal{N}(0, 1)$ and $p_2 = \mathcal{N}(\mu, 1)$ for varying $\mu \in \{0.5, 2.5, 5\}$, which allows controlling the initial divergence $\KL(p_1 \| p_2)$, i.e., intertwinement level.
For each configuration, we draw $n_1 = n_2 = 1000$ samples from $p_1$ and $p_2$, respectively.
We implement the two mechanisms from Section~\ref{sec:theory}: \emph{random} and \emph{selective} removal. After deleting $f$ samples, we re-fit a Gaussian $p = \mathcal{N}(\hat\mu, 1)$ on the retained $p_1$ and $p_2$ samples. We then compute forward KL divergences $\alpha = \KL(p_1 \| {p})$ and $\varepsilon = \KL(p_2 \| {p})$ to quantify forgetting and preservation, respectively.

\begin{wrapfigure}{r}
{0.35\textwidth}
\vspace{-7mm}
  \centering
  \includegraphics[width=1\linewidth]{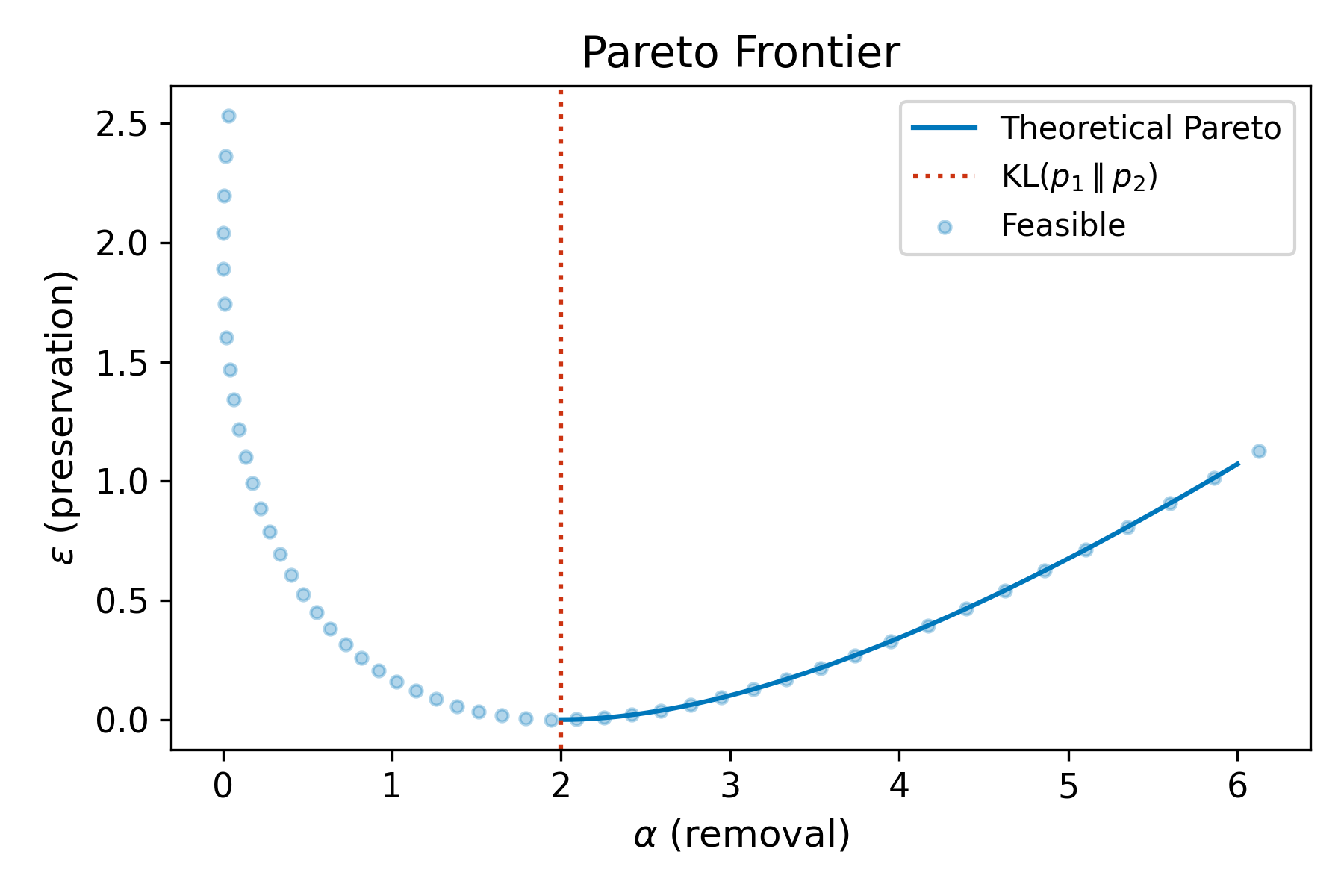}
  \vspace{-6mm}
  \caption{\textbf{Synthetic Gaussians.} The empirical frontier aligns with the theoretical prediction.}
  \label{fig:pareto}
  \vspace{-5mm}
\end{wrapfigure}

\textbf{Pareto frontier.}
Figure~\ref{fig:pareto} confirms that the empirical $(\alpha, \varepsilon)$ trade-off closely matches the theoretical Pareto frontier derived in Proposition~\ref{thm:pareto}. 
To plot feasible empirical trade-offs, we set $p = \cN(\mu, 1)$ and vary $\mu \in \R$, while $p_1 = \mathcal{N}(0, 1)$ and $p_2 = \mathcal{N}(2, 1)$ so that $\KL(p_1 \| p_2) = 2$.
The latter quantity is the threshold predicted by the theory, and validated by Figure~\ref{fig:pareto}. Indeed, feasible trade-offs whose removal divergence $\alpha$ is below this threshold are pareto-suboptimal. They are dominated by the trade-off $(\alpha=\KL(p_1 \| p_2), \varepsilon=0)$, which can be achieved with the choice of distribution $p=p_2$.

\begin{figure}[t]
    \vspace{-11mm}
    \centering    \includegraphics[width=\linewidth]{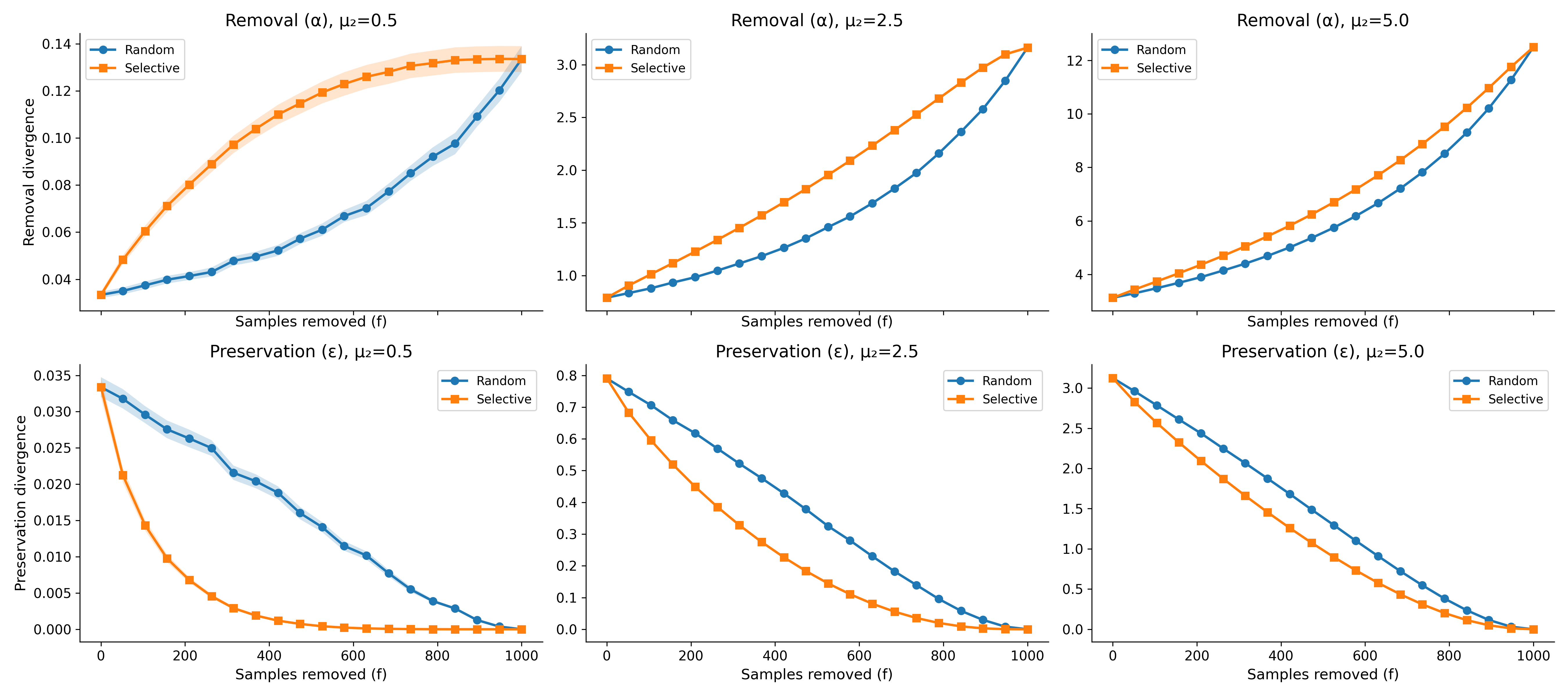}
    \vspace{-5mm}
    \caption{\textbf{Synthetic Gaussians.} Selective removal consistently requires fewer deletions, especially when $\KL(p_1 \| p_2)$ is small (left), for the same removal and preservation target as random removal. In high-divergence regimes (right), the gap between methods shrinks, as predicted by the theory.}
    \label{fig:gaussian-results}
    \vspace{-4mm}
\end{figure}

\textbf{Sample efficiency.}
We next compare the sample efficiency of the two removal strategies analyzed in Section~\ref{sec:theory}. In Figure~\ref{fig:gaussian-results}, we plot removal $\alpha$ as a function of the number of $p_1$ samples removed. Selective removal reaches higher forgetting levels with fewer deletions than random removal, especially in the low-divergence regime ($\mu_2=0.5$, top left plot). 
For example, to reach $0.06$ nats of removal divergence, i.e., half of that obtained by removing all samples, selective removal requires $5\times$ less samples than random removal, i.e., $10\times$ reduction in size from full removal.
Analogous trends hold for preservation (bottom left plot): selective removal more effectively preserves the reference distribution $p_2$ throughout. The remaining plots show similar trends when increasing the divergence $\KL(p_1 \| p_2)$ by increasing $\mu_2$. As in theory, selective removal offers the greatest savings when $p_1$ and $p_2$ are close and diminishes as the distributions diverge.

\vspace{-2mm}
\subsection{Case Studies in Real-World Data}
\vspace{-2mm}

Having validated our theoretical predictions in a controlled setting, we now test our framework's applicability on high-dimensional non-Gaussian data. We present two case studies representing distinct unlearning scenarios. We first analyze the CIFAR-10 dataset~\citep{cifar} to test our framework on well-separated distributions, showing it generalizes standard class unlearning. We then turn to the Jigsaw toxic comments dataset\footnote{https://www.kaggle.com/competitions/jigsaw-toxic-comment-classification-challenge} for the more challenging scenario of intertwined distributions, where the unwanted domain is semantically linked to the main task.

\textbf{Removal methods.}
Across both case studies, we evaluate several scoring heuristics designed to rank samples in the unwanted distribution $p_1$ for selective removal. These heuristics are high-dimensional analogues inspired by the principles developed in our theoretical analysis. The choice of distance metric is adapted to the data modality: for the sparse, high-dimensional TF-IDF text embeddings, we use Cosine distance; for the dense CNN image features, where feature covariance is meaningful, we use Mahalanobis distance\footnote{The Mahalanobis distance of vector $x$ to probability distribution $p$, of mean $\mu$ and covariance $\Sigma$, is: $d(x,p) \coloneqq \sqrt{(x-\mu)^\top \Sigma^{-1}(x-\mu)}$. We estimate $\mu$ and $\Sigma$ empirically on the retained distribution $p_2$.}.
Our main selective removal strategies are:
(1)
distance to retained (\textsc{cos-mu2} / \textsc{maha-mu2}): this heuristic is a direct analogue of the distance-based method formally analyzed in Section~\ref{sec:theory}. It scores samples based only on their distance to the mean of the retained distribution $p_2$, in cosine and Mahalanobis distance respectively;
(2)
likelihood-ratio (\textsc{lr-cos} / \textsc{lr-maha}): this is an extension that scores samples based on a margin between their distance to the $p_2$ mean versus their distance to the $p_1$ mean, aiming to remove samples that are both distinguishable from $p_2$ and representative of $p_1$, in cosine and Mahalanobis distance respectively;
(3)
Local Density Ratio (\textsc{knn-ratio}): This heuristic estimates the local density ratio around a sample using its $k$-nearest neighbors ($k=10$). It aims to remove samples that are much more ``typical" of the unwanted distribution $p_1$ than the retained distribution $p_2$ in their immediate feature neighborhood;
(4)
Feature Norm (\textsc{tfidf-norm}): This simpler baseline scores samples based on the $\ell_2$-norm of their feature vector. It serves as a proxy for a sample's ``informativeness" or extremity in the feature space.
We further discuss their computational complexity in Remark~\ref{rk:selection-complexity}.
{\color{black}
While our selection rules follow directly from the Gaussian analysis, we note that real-world distributions can be multimodal, in which case more expressive criteria, e.g., incorporating local density or cluster structure, may further improve performance.
}

{\color{black}
Our scoring functions compute similarity between forget and retain samples using finite-sample estimates, e.g., empirical means or embeddings. When only a small or proxy retain subset is available, as may occur in large-scale settings, these estimates become noisier but still supply a meaningful signal, and the distributional unlearning framework applies without modification. Extending these ideas to settings such as large language models, where the retain distribution may be approximated by a general-capabilities corpus, is a natural direction for future work.
}

{\color{black}
\begin{remark}[Selection Budget Choice]
    In practical deployments, the deletion budget can be chosen in two complementary ways. First, budget may be constrained by the computational cost of a downstream sample-level unlearning method, in which case one simply targets the largest forget-set size that satisfies this constraint. Second, one may inspect the cumulative distribution of selection scores and choose the smallest subset of samples that accounts for a desired fraction of the total influence (e.g., 80\%), analogous to coverage thresholds used in data pruning.
\end{remark}
}

\vspace{-2mm}
\subsubsection{Cifar-10: Validation on Well-Separated Distributions}
\vspace{-2mm}

We first validate our framework on well-separated distributions using the CIFAR-10 dataset, treating the `cat' class as the unwanted domain to simulate common class-level unlearning tasks. This allows us to test a key hypothesis: that even within a single, well-defined class, the statistical influence distinguishing it from other classes is not uniformly distributed among its members. To test this, we rank all `cat' images based on distance scores computed in the feature space of a CNN model, aiming to find the subset whose removal most efficiently erases the class's statistical footprint.
We delete the top score-ranked samples for each deletion budget, re‑train a 
convolutional neural network for ten epochs, and report
accuracy on the cat test set and accuracy on the other nine classes test set.

\begin{figure}[t]
    \vspace{-10mm}
  \centering
  \begin{subfigure}{0.48\linewidth}
    \includegraphics[width=\linewidth]{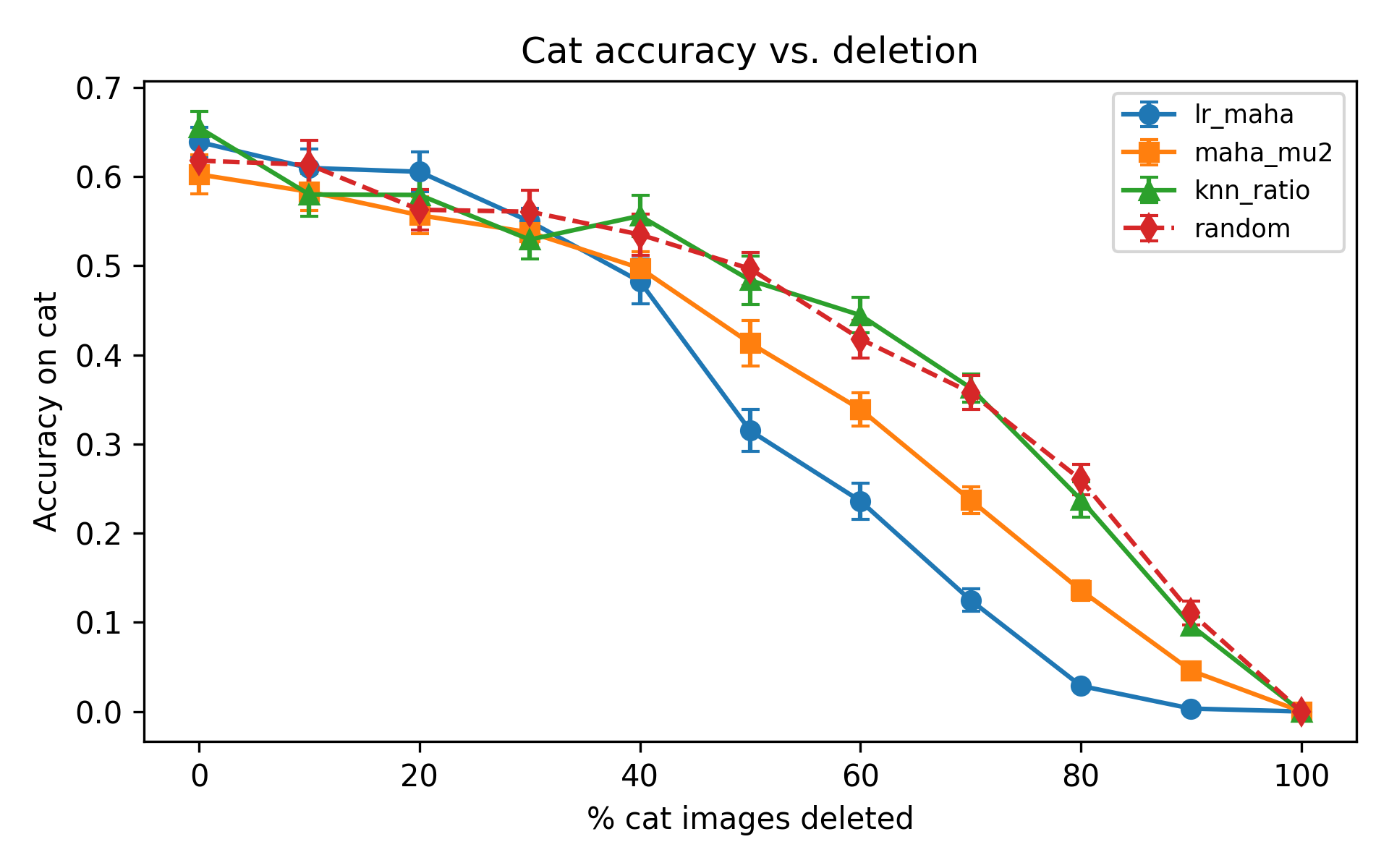}
    \caption{Accuracy on cat images (\(p_{1}\), forgotten).}
    \label{fig:cifar-cat}
  \end{subfigure}\hfill
  \begin{subfigure}{0.48\linewidth}
    \includegraphics[width=\linewidth]{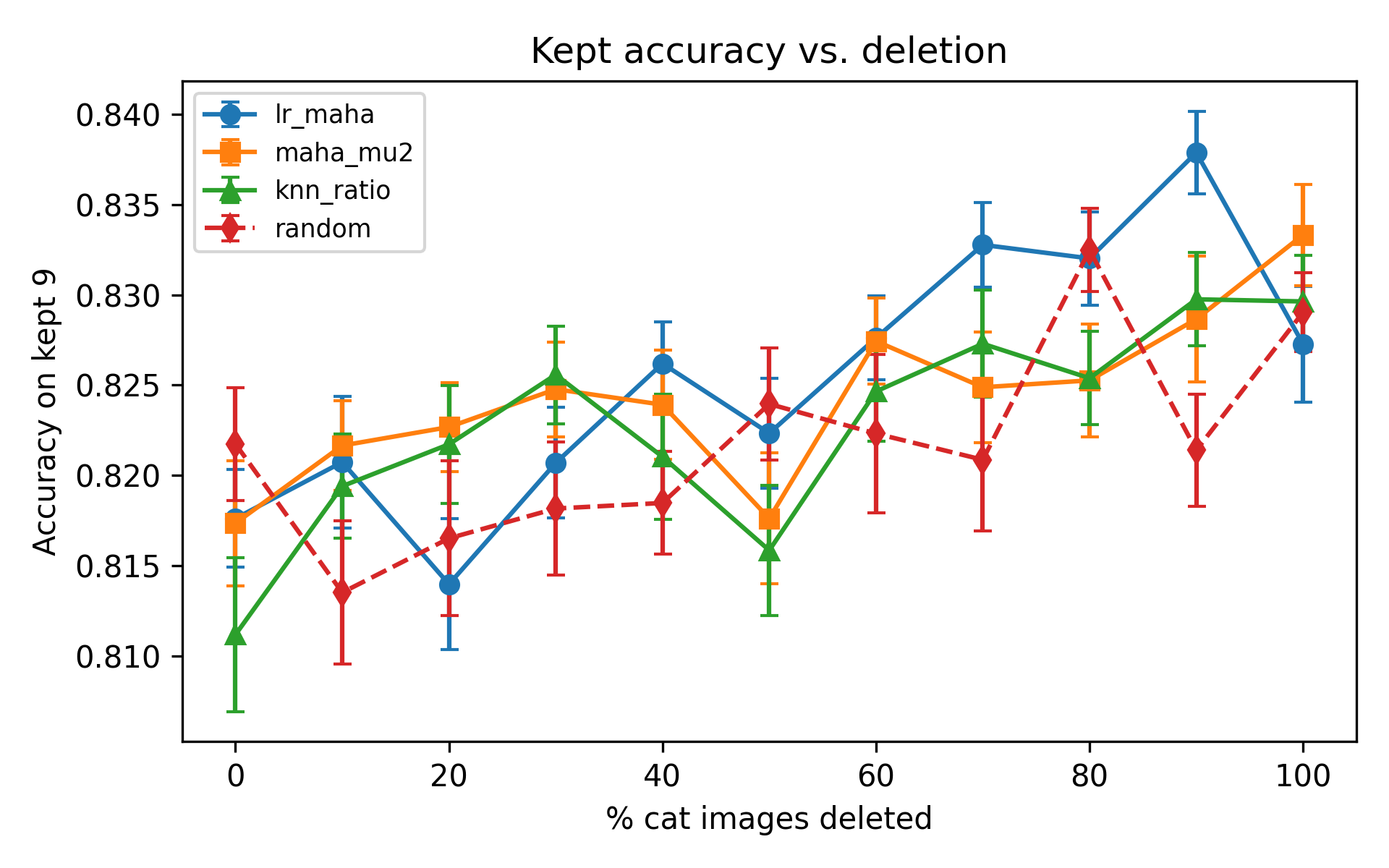}
    \caption{Accuracy on non-cat images (\(p_{2}\), retained).}
    \label{fig:cifar-keep}
  \end{subfigure}
\vspace{-1mm}
  \caption{\textbf{CIFAR‑10 images.}
    Removing cat images suppresses accuracy on that class (left) while
    leaving accuracy on the retained nine classes essentially
    unchanged (right, ${<}0.03$ variation).
    No substantial removal is observed until $50\%$ deletion, before selective removal strategies \textsc{lr-maha} and \textsc{maha-mu2} outperform random removal.
    Error bars: $\pm 1$\, standard error over thirty seeds.}
  \label{fig:cifar}
  \vspace{-3mm}
\end{figure}

\textbf{Findings.} The results in Figure~\ref{fig:cifar} confirm our hypothesis. The superior sample efficiency of selective methods shows that even within a single class, statistical influence is concentrated. For instance, the \textsc{lr-maha} strategy halves the initial accuracy on the `cat' class by deleting only 50\% of the images, making it 1.6$\times$ more data-efficient than random removal. By removing the most statistically distinct cats first, our data-centric approach accelerates the unlearning process while leaving performance on the other nine classes stable (Fig.~\ref{fig:cifar-keep}). This demonstrates our framework's value in class unlearning scenarios, offering a more targeted and efficient alternative to naive or complete removals.

\vspace{-2mm}
\subsubsection{Jigsaw Toxic Comments: Intertwined Distribution Challenge}
\vspace{-2mm}
\label{ssec:jigsaw-profanity}

We now test our framework on intertwined distributions using the Jigsaw toxic comments dataset. Here, the unwanted domain—comments containing specific profanities (8.6\% of the corpus, chosen keywords in Appendix~\ref{app:empirical})—contains strong predictive signals for the main task of identifying toxicity in the retained, non-profane comments. This creates a high-stakes trade-off where simply removing the entire unwanted $p_1$ samples harms utility, a fact demonstrated by the sharp drop in performance at full deletion in our experiments (Figure~\ref{fig:jigsaw-profanity-f1}). The objective is therefore to remove the influence of explicit profanity while preserving the shared predictive features.

\begin{figure}[t]
    \vspace{-7mm}
  \centering
  \begin{subfigure}{0.45\linewidth}
    \includegraphics[width=0.9\linewidth]{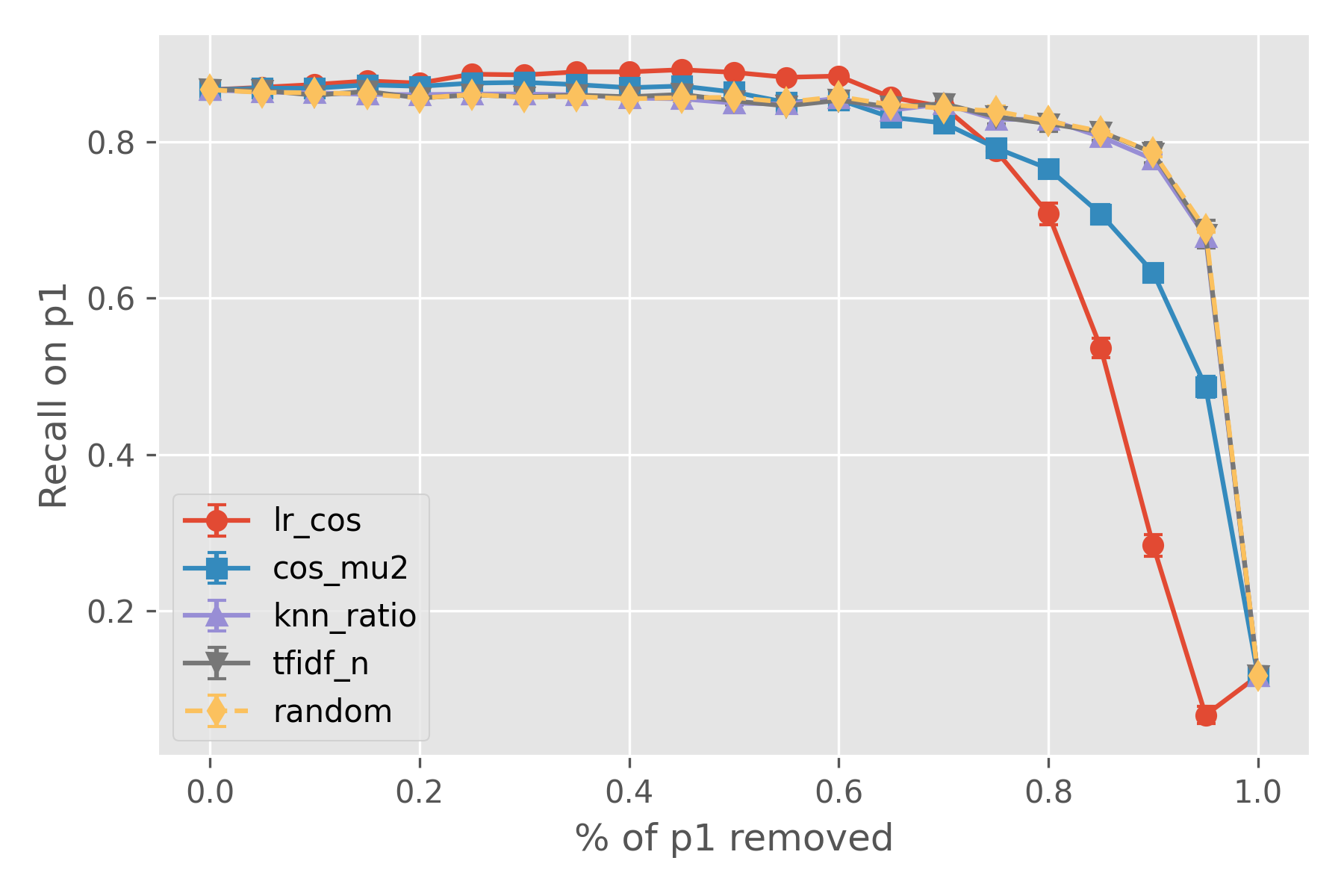}
    \caption{Recall on profane comments (\(p_{1}\), forgotten).}
    \label{fig:jigsaw-profanity-recall}
  \end{subfigure}\hfill
  \begin{subfigure}{0.45\linewidth}
    \includegraphics[width=0.9\linewidth]{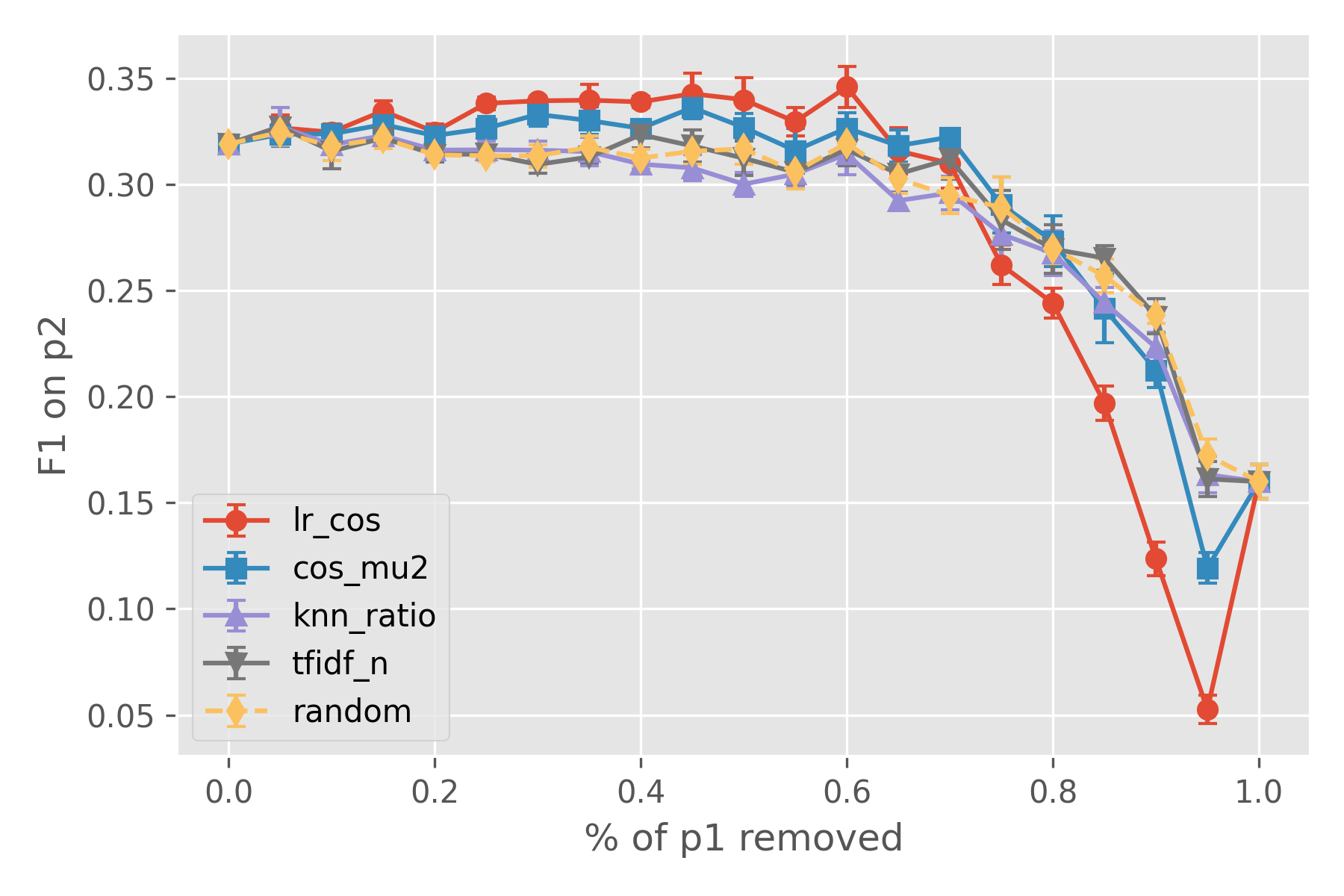}
    \caption{\emph{F1} on non-profane comments (\(p_{2}\), retained).}
    \label{fig:jigsaw-profanity-f1}
  \end{subfigure}
\vspace{-1mm}
\caption{\textbf{Jigsaw Toxic Comments.} Impact of removing profane comments on Jigsaw~Toxic.
\textit{Left:} recall on the
to‑be‑forgotten set $p_1$; \textit{right:} F$_1$ on the retained set $p_2$.
Utility is almost unchanged up to $60\%$ deletion; marked forgetting
appears only around $80\%$ deletion, with \textsc{lr‑cos} showing the
steepest drop. Error bars: $\pm 1$\, standard error over five randomness seeds.}
  \label{fig:jigsaw-profanity}
  \vspace{-3mm}
\end{figure}

\textbf{Findings.} Our results validate the need for a targeted strategy in this setting. Figure~\ref{fig:jigsaw-profanity-recall} shows that recall on profane comments remains high until large deletion budgets are reached, confirming the hardness of the task and suggesting that naively removing random profane comments—which may be statistically similar to non-profane text—is ineffective. A significant unlearning effect is only achieved by selective strategies like \textsc{lr-cos}, which prioritize removing outlier, distinguishable comments. 
For example, to reduce recall on the profane set to 0.70, the \textsc{lr-cos} strategy achieves this same effect by deleting 80\%, a 15\% reduction in the required deletion budget compared to random removal, while relatively maintaining performance on the retained set unlike complete removal.

\vspace{-2mm}
\subsection{Synergy with Sample-Level Unlearning}
\vspace{-2mm}

A key application of our data-centric framework is to serve as an efficiency-boosting front-end for various sample-level unlearning algorithms, as their computational cost typically scales with the size of the flagged forget set (e.g., fewer gradients on forget data, see Remark~\ref{rk:efficiency}). 
First, on Gaussian data, we pair our selection method with full retraining and a standard influence function-based approximation using the log-likelihood Hessian. Second, on our more complex CIFAR-10 class unlearning task, we pair with recent methods, including finetuning and advanced, gradient-based techniques like NegGrad+~\citep{kurmanji2024towards} and SalUn~\citep{fan2024salun} (App.~\ref{app:unlearnings}).
The results, summarized in Table~\ref{tab:synergy_combined}, show a consistent and significant improvement in sample efficiency. This advantage is most pronounced in the synthetic, low-divergence setting where our method is up to 5$\times$ more sample-efficient, and it qualitatively generalizes to the CIFAR-10 task, where our method uses a deletion budget up to 2$\times$ smaller than full removal, while achieving a strong unlearning effect, i.e., 20\% test accuracy on forget data, with original $\approx$86\% test accuracy on the retained data.
Savings vary following the quality of sample-level unlearning, which is poor for naive fine-tuning, and much more interesting for SalUn, which approximates the retraining standard well.

We compare against a centroid-based coreset baseline that removes the samples most representative of $p_1$ (those closest to its mean); we also evaluated $k$-center greedy (recalled in App.~\ref{app:selections}) which performs near-identically. This strategy performs poorly because, in low-divergence settings, it removes exactly the samples that are least distinguishable from $p_2$—the opposite of our selective method. While more sophisticated coreset methods exist, they are designed to preserve the properties of a single distribution. Our dual-distribution objective—to maximize divergence from $p_1$ while minimizing divergence with respect to $p_2$—is fundamentally different. {\color{black}We hypothesize that any method optimizing for within-distribution representativeness will underperform our cross-distributional approach, and leave a deeper adaptation  of coreset methods to future work.}
{\color{black}
Indeed, our experiments in Section~\ref{app:coreset-cifar}
with two strong gradient-based coresets (CRAIG~\citep{mirzasoleiman2020coresets}, GradMatch~\citep{killamsetty2021grad}) illustrate this 
limitation empirically, as both outperform random deletion yet fall 
substantially short of our distributional selection under matched budgets.
Similarly, recent pruning methods such as TDDS~\citep{zhang2024tdds}, 
CCS~\citep{zheng2023ccs}, and UNSEEN~\citep{xu2025unseen} further advance 
data-efficiency for training, but they remain single-distribution methods: 
their scoring functions are defined entirely on the training distribution. 
As such, they cannot exploit the contrast between the forget and retain 
distributions that drives distributional unlearning.}

{\color{black}
\begin{remark}[Retain data assumption]
    When no retain distribution is available, our objective collapses to selecting influential points within the forget distribution, which is more aligned with classical coreset construction. Our experiments indicate that such single-distribution methods are limited in low-divergence regimes: they emphasize representativeness within the forget distribution but cannot leverage the crucial contrast with the retain distribution that drives effective data removal. For this reason, we view acquiring or defining even a coarse proxy retain dataset (e.g., a general capabilities corpus) as an important practical consideration for applying distributional unlearning.
\end{remark}
}

\begin{table}[t!]
\centering
\begin{minipage}{0.4\textwidth}
    \centering
    \sisetup{table-format=2.0}
    \begin{tabular}{@{}l S[table-format=2.0] S[table-format=2.0] S[table-format=2.0] S[table-format=2.0, table-space-text-post=\%]@{}}
    \toprule
    \textbf{Unlearning} & \multicolumn{3}{c}{\textbf{Selection Method}} & \textbf{Savings} \\
    \cmidrule(l){2-4} 
    \textbf{Method} & {\textbf{Random}} & {\textbf{Coreset}} & {\textbf{Selective}} & \\
    \midrule
    Retraining & 80 & 75 &\textbf{60} & \textbf{40\%} \\
    Finetuning & 98 & 98 &\textbf{88} & \textbf{12\%} \\
    NegGrad+ & 88 & 85 &\textbf{70} & \textbf{30\%} \\
    SalUn & 68 & 75 &\textbf{55} & \textbf{45\%} \\
    \bottomrule
    \end{tabular}
    \captionsetup{labelformat=empty}
    \caption{}
\end{minipage}
\hfill %
\begin{minipage}{0.35\textwidth}
    \hspace{-6mm}
    \centering
    \sisetup{table-format=2.0}
    \begin{tabular}{@{}l S[table-format=2.0] S[table-format=2.0]@{}}
    \toprule
    \textbf{Selection} & \multicolumn{2}{c}{\textbf{Unlearning Method}} \\
    \cmidrule(l){2-3} 
    \textbf{Method} & {\textbf{Retraining}} & {\textbf{Influence Func.}} \\
    \midrule
    \textbf{Selective} & \textbf{15} & \textbf{18} \\
    Random & 61 & 91 \\
    Coreset & 63 & 93 \\
    \bottomrule
    \end{tabular}
    \captionsetup{labelformat=empty}
    \caption{}
\end{minipage}
\vspace{-10mm}
\caption{\textbf{Synergy with Sample-Level Unlearning.} Deletion budget (in \%) required to reach: \emph{(left)} 20\% test accuracy on the CIFAR-10 forget class; \emph{(right)} half the initial KL divergence in low-divergence Gaussians (Fig.~\ref{fig:gaussian-results}, top left scenario). Our Selective removal is benchmarked against Random selection and a Coreset baseline across various sample-level unlearning methods. ``Savings" indicates the relative reduction in size of selective removal from the full forget set.}
\label{tab:synergy_combined}
\vspace{-5mm}
\end{table}

\vspace{-1mm}

\vspace{-2mm}
\section{Conclusion and Future Work}
\label{sec:conclusion}
\vspace{-2mm}

Machine unlearning increasingly requires moving beyond individual record deletion to erase the influence of entire subpopulations. We tackled the central dilemma of this task: that full removal is computationally expensive, while naive partial removal is statistically inefficient. We find that a domain's statistical influence is often concentrated in a small high-impact subset of its samples. We formalized this insight as \emph{distributional unlearning}, a framework for selecting a small subset of data that optimally balances forgetting an unwanted distribution while preserving a desired one. Our theoretical analysis provided provable guarantees connecting this data-centric approach to downstream model performance, and our experiments validated that a selective, distance-based removal strategy is often more data-efficient than random  or full removals across a range of tasks.
While our work provides a foundation for selective data removal, we acknowledge its limitations, which point to important directions for future research below.

{\color{black}\textbf{Distributional assumptions.} Our finite-sample analysis provides strong guarantees but assumes Gaussian distributions; while our experiments show the core insights generalize qualitatively, bridging the gap between these theoretical bounds and the behavior on complex real-world data remains an open question.} This distributional mismatch also helps explain the gap between the 82\% data savings in our low-divergence synthetic setting and the still-significant 15-50\% savings on real data. 
More fundamentally, our framework operates on samples that have already been identified as belonging to an unwanted subpopulation. This leads to an exiciting extension of our work: because we show that only a small subset is needed for effective unlearning, this creates the potential for active identification systems that could help practitioners find these few, high-impact samples at a fraction of the cost of exhaustive annotation. 

{\color{black}\textbf{Data- to model-level guarantees.} Moreover, our evaluation of unlearning is based on model performance degradation, which directly validates our theoretical log-loss guarantees. A privacy-centric evaluation could employ methods like Membership Inference attacks~\citep{shokri2017membership} to formally verify that the unlearned model is indistinguishable from one retrained from scratch, representing another crucial avenue for future work. Conceptually, one could make a formal link between the (sample-level) certified~\citep{guo2020certified} and  distributional unlearning frameworks.}

{\color{black}
\textbf{LLM fine-tuning applications.}
Beyond the foundational discriminative settings studied here, there is now a rapidly expanding line of work on data selection for large language model (LLM) instruction tuning. Classical instruction tuning~\citep{wei2022instructiontuning}, demonstrates that finetuning on curated instruction datasets can significantly improve generalization. More recent approaches focus on selecting high-quality subsets from a single instruction distribution to improve supervised fine-tuning (SFT) efficiency~\citep{li2024selfguided,li2024instructionprospector,wang2025datawhisperer}; see also the survey of~\cite{zhang2025survey}. These methods aim to upweight instructive or representative examples for SFT, whereas our framework addresses a fundamentally different objective: distributional unlearning requires selecting a deletion subset that shifts a model's behavior between two distributions. In this sense, our theory is model-agnostic and can serve as the selection layer for future (multimodal) LLM unlearning pipelines by operating directly on data representations (e.g., LLM embeddings) and respecting forget/retain distributional structure.

}

\section*{Acknowledgements}
YA acknowledges support by SNSF grant 200021\_200477 and the G-Research PhD prize.
SK acknowledges support by NSF 2046795 and 2205329, IES R305C240046, ARPA-H, the MacArthur Foundation, Schmidt Sciences, HAI, OpenAI, Microsoft, and Google.
SK and YA acknowledge support by the Center for AI Safety.

\bibliographystyle{iclr2026_conference}
\bibliography{references}

@article{zhang2025survey,
  title={A survey on data selection for llm instruction tuning},
  author={Zhang, Bolin and Wang, Jiahao and Du, Qianlong and Zhang, Jiajun and Tu, Zhiying and Chu, Dianhui},
  journal={Journal of Artificial Intelligence Research},
  volume={83},
  year={2025}
}

@inproceedings{wei2022instructiontuning,
  title={Finetuned Language Models are Zero-Shot Learners},
  author={Wei, Jason and Bosma, Maarten and Zhao, Vincent and Guu, Kelvin and Yu, Adams Wei and Lester, Brian and Du, Nan and Dai, Andrew M and Le, Quoc V},
  booktitle={International Conference on Learning Representations},
year={2022}
}

@inproceedings{li2024instructionprospector,
  title={One-shot learning as instruction data prospector for large language models},
  author={Li, Yunshui and Hui, Binyuan and Xia, Xiaobo and Yang, Jiaxi and Yang, Min and Zhang, Lei and Si, Shuzheng and Chen, Ling-Hao and Liu, Junhao and Liu, Tongliang and others},
  booktitle={Proceedings of the 62nd Annual Meeting of the Association for Computational Linguistics (Volume 1: Long Papers)},
  pages={4586--4601},
  year={2024}
}

@inproceedings{li2024selfguided,
  title={From quantity to quality: Boosting llm performance with self-guided data selection for instruction tuning},
  author={Li, Ming and Zhang, Yong and Li, Zhitao and Chen, Jiuhai and Chen, Lichang and Cheng, Ning and Wang, Jianzong and Zhou, Tianyi and Xiao, Jing},
  booktitle={Proceedings of the 2024 Conference of the North American Chapter of the Association for Computational Linguistics: Human Language Technologies (Volume 1: Long Papers)},
  pages={7602--7635},
  year={2024}
}

@inproceedings{wang2025datawhisperer,
    title = "Data Whisperer: Efficient Data Selection for Task-Specific {LLM} Fine-Tuning via Few-Shot In-Context Learning",
    author = "Wang, Shaobo  and
      Jin, Xiangqi  and
      Wang, Ziming  and
      Wang, Jize  and
      Zhang, Jiajun  and
      Li, Kaixin  and
      Wen, Zichen  and
      Li, Zhong  and
      He, Conghui  and
      Hu, Xuming  and
      Zhang, Linfeng",
    editor = "Che, Wanxiang  and
      Nabende, Joyce  and
      Shutova, Ekaterina  and
      Pilehvar, Mohammad Taher",
    booktitle = "Proceedings of the 63rd Annual Meeting of the Association for Computational Linguistics (Volume 1: Long Papers)",
    month = jul,
    year = "2025",
    address = "Vienna, Austria",
    publisher = "Association for Computational Linguistics",
    url = "https://aclanthology.org/2025.acl-long.1135/",
    doi = "10.18653/v1/2025.acl-long.1135",
    pages = "23287--23305",
    ISBN = "979-8-89176-251-0",
    abstract = "Fine-tuning large language models (LLMs) on task-specific data is essential for their effective deployment. As dataset sizes grow, efficiently selecting optimal subsets for training becomes crucial to balancing performance and computational costs. Traditional data selection methods often require fine-tuning a scoring model on the target dataset, which is time-consuming and resource-intensive, or rely on heuristics that fail to fully leverage the model{'}s predictive capabilities. To address these challenges, we propose Data Whisperer, an efficient, training-free, attention-based method that leverages few-shot in-context learning with the model to be fine-tuned. Comprehensive evaluations were conducted on both raw and synthetic datasets across diverse tasks and models. Notably, Data Whisperer achieves superior performance compared to the full GSM8K dataset on the Llama-3-8B-Instruct model, using just 10{\%} of the data, and outperforms existing methods with a 3.1-point improvement and a 7.4{\texttimes} speedup."
}

@inproceedings{xu2025unseen,
  title={Rethinking dataset pruning from a generalization perspective},
  author={Xu, Furui and Wang, Shaobo and Zhongwei, Luo and Zhang, Linfeng},
  booktitle={The Future of Machine Learning Data Practices and Repositories at ICLR 2025},
  year={2025}
}

@inproceedings{zheng2023ccs,
  title={Coverage-centric Coreset Selection for High Pruning Rates},
  author={Zheng, Haizhong and Liu, Rui and Lai, Fan and Prakash, Atul},
  booktitle={The Eleventh International Conference on Learning Representations},
year={2023}
}

@inproceedings{zhang2024tdds,
  title={Spanning training progress: Temporal dual-depth scoring (tdds) for enhanced dataset pruning},
  author={Zhang, Xin and Du, Jiawei and Li, Yunsong and Xie, Weiying and Zhou, Joey Tianyi},
  booktitle={Proceedings of the IEEE/CVF Conference on Computer Vision and Pattern Recognition},
  pages={26223--26232},
  year={2024}
}

@inproceedings{killamsetty2021grad,
  title={Grad-match: Gradient matching based data subset selection for efficient deep model training},
  author={Killamsetty, Krishnateja and Durga, Sivasubramanian and Ramakrishnan, Ganesh and De, Abir and Iyer, Rishabh},
  booktitle={International Conference on Machine Learning},
  pages={5464--5474},
  year={2021},
  organization={PMLR}
}

@inproceedings{
waerebeke2025when,
title={When to Forget? Complexity Trade-offs in Machine Unlearning},
author={Martin Van Waerebeke and Marco Lorenzi and Giovanni Neglia and Kevin Scaman},
booktitle={Forty-second International Conference on Machine Learning},
year={2025},
url={https://openreview.net/forum?id=uEUIeIrRPv}
}

@inproceedings{mirzasoleiman2020coresets,
  title={Coresets for data-efficient training of machine learning models},
  author={Mirzasoleiman, Baharan and Bilmes, Jeff and Leskovec, Jure},
  booktitle={International Conference on Machine Learning},
  pages={6950--6960},
  year={2020},
  organization={PMLR}
}

@book{shalev2014understanding,
  title={Understanding machine learning: From theory to algorithms},
  author={Shalev-Shwartz, Shai and Ben-David, Shai},
  year={2014},
  publisher={Cambridge university press}
}

@inproceedings{
srivastava2017autoencoding,
title={Autoencoding Variational Inference For Topic Models},
author={Akash Srivastava and Charles Sutton},
booktitle={International Conference on Learning Representations},
year={2017},
url={https://openreview.net/forum?id=BybtVK9lg}
}

@article{blei2003latent,
  title={Latent dirichlet allocation},
  author={Blei, David M and Ng, Andrew Y and Jordan, Michael I},
  journal={Journal of machine Learning research},
  volume={3},
  number={Jan},
  pages={993--1022},
  year={2003}
}

@inproceedings{
fan2024salun,
title={SalUn: Empowering Machine Unlearning via Gradient-based Weight Saliency in Both Image Classification and Generation},
author={Chongyu Fan and Jiancheng Liu and Yihua Zhang and Eric Wong and Dennis Wei and Sijia Liu},
booktitle={The Twelfth International Conference on Learning Representations},
year={2024},
url={https://openreview.net/forum?id=gn0mIhQGNM}
}

@inproceedings{
koloskova2025certified,
title={Certified Unlearning for Neural Networks},
author={Anastasia Koloskova and Youssef Allouah and Animesh Jha and Rachid Guerraoui and Sanmi Koyejo},
booktitle={Forty-second International Conference on Machine Learning},
year={2025},
url={https://openreview.net/forum?id=3rWQlV3s1I}
}

@article{gentile2024fast,
  title={Fast rates in pool-based batch active learning},
  author={Gentile, Claudio and Wang, Zhilei and Zhang, Tong},
  journal={Journal of Machine Learning Research},
  volume={25},
  number={262},
  pages={1--42},
  year={2024}
}

@article{banerjee2005clustering,
  title={Clustering with Bregman divergences},
  author={Banerjee, Arindam and Merugu, Srujana and Dhillon, Inderjit S and Ghosh, Joydeep},
  journal={Journal of machine learning research},
  volume={6},
  number={Oct},
  pages={1705--1749},
  year={2005}
}

@article{wainwright2008graphical,
  title={Graphical models, exponential families, and variational inference},
  author={Wainwright, Martin J and Jordan, Michael I and others},
  journal={Foundations and Trends{\textregistered} in Machine Learning},
  volume={1},
  number={1--2},
  pages={1--305},
  year={2008},
  publisher={Now Publishers, Inc.}
}

@inproceedings{
liu2025language,
title={Language Models May Verbatim Complete Text They Were Not Explicitly Trained On},
author={Ken Liu and Christopher A. Choquette-Choo and Matthew Jagielski and Peter Kairouz and Sanmi Koyejo and Percy Liang and Nicolas Papernot},
booktitle={Forty-second International Conference on Machine Learning},
year={2025},
url={https://openreview.net/forum?id=bLcXkIasck}
}

@article{
kodge2024deep,
title={Deep Unlearning: Fast and Efficient Gradient-free Class Forgetting},
author={Sangamesh Kodge and Gobinda Saha and Kaushik Roy},
journal={Transactions on Machine Learning Research},
issn={2835-8856},
year={2024},
url={https://openreview.net/forum?id=BmI5p6wBi0},
note={}
}

@article{tarun2023fast,
  title={Fast yet effective machine unlearning},
  author={Tarun, Ayush K and Chundawat, Vikram S and Mandal, Murari and Kankanhalli, Mohan},
  journal={IEEE Transactions on Neural Networks and Learning Systems},
  year={2023},
  publisher={IEEE}
}

@inproceedings{zhang2024towards,
  title={Towards certified unlearning for deep neural networks},
  author={Zhang, Binchi and Dong, Yushun and Wang, Tianhao and Li, Jundong},
  booktitle={Proceedings of the 41st International Conference on Machine Learning},
  pages={58800--58818},
  year={2024}
}

@inproceedings{almeida2011contributions,
  title     = {Contributions to the Study of SMS Spam Filtering: New Collection and Results},
  author    = {Almeida, Tiago A. and G{\'o}mez Hidalgo, Jos{\'e} M. and Yamakami, Akebo},
  booktitle = {Proceedings of the 11th ACM Symposium on Document Engineering},
  pages     = {3--10},
  year      = {2011},
  organization = {ACM}
}

@inproceedings{ravfogel2020null,
  title={Null It Out: Guarding Protected Attributes by Iterative Nullspace Projection},
  author={Ravfogel, Shauli and Elazar, Yair and Goldberg, Yoav},
  booktitle={Proceedings of the 2020 Conference on Empirical Methods in Natural Language Processing (EMNLP)},
  pages={7231--7245},
  year={2020}
}

@inproceedings{elazar2018adversarial,
  title={Adversarial Removal of Demographic Attributes from Text Data},
  author={Elazar, Yanai and Goldberg, Yoav},
  booktitle={Proceedings of the 2018 Conference on Empirical Methods in Natural Language Processing},
  pages={11--21},
  year={2018}
}

@article{belrose2023leace,
  title={Leace: Perfect linear concept erasure in closed form},
  author={Belrose, Nora and Schneider-Joseph, David and Ravfogel, Shauli and Cotterell, Ryan and Raff, Edward and Biderman, Stella},
  journal={Advances in Neural Information Processing Systems},
  volume={36},
  pages={66044--66063},
  year={2023}
}

@article{ben2010theory,
  title={A theory of learning from different domains},
  author={Ben-David, Shai and Blitzer, John and Crammer, Koby and Kulesza, Alex and Pereira, Fernando and Vaughan, Jennifer Wortman},
  journal={Machine learning},
  volume={79},
  pages={151--175},
  year={2010},
  publisher={Springer}
}

@inproceedings{
kaushik2020Learning,
title={Learning The Difference That Makes A Difference With Counterfactually-Augmented Data},
author={Divyansh Kaushik and Eduard Hovy and Zachary Lipton},
booktitle={International Conference on Learning Representations},
year={2020},
url={https://openreview.net/forum?id=Sklgs0NFvr}
}

@article{eldan2023s,
  title={Who's Harry Potter? Approximate Unlearning in LLMs},
  author={Eldan, Ronen and Russinovich, Mark},
  journal={arXiv preprint arXiv:2310.02238},
  year={2023}
}

@inproceedings{
allouah2024utility,
title={The Utility and Complexity of In- and Out-of-Distribution Machine Unlearning},
author={Youssef Allouah and Joshua Kazdan and Rachid Guerraoui and Sanmi Koyejo},
booktitle={The Thirteenth International Conference on Learning Representations},
year={2025}
}

@inproceedings{
chien2024langevin,
title={Langevin Unlearning: A New Perspective of Noisy Gradient Descent for Machine Unlearning},
author={Eli Chien and Haoyu Peter Wang and Ziang Chen and Pan Li},
booktitle={The Thirty-eighth Annual Conference on Neural Information Processing Systems},
year={2024},
url={https://openreview.net/forum?id=3LKuC8rbyV}
}

@inproceedings{izzo2021approximate,
  title={Approximate data deletion from machine learning models},
  author={Izzo, Zachary and Smart, Mary Anne and Chaudhuri, Kamalika and Zou, James},
  booktitle={International Conference on Artificial Intelligence and Statistics},
  pages={2008--2016},
  year={2021},
  organization={PMLR}
}

@inproceedings{chourasia2023forget,
  title={Forget unlearning: Towards true data-deletion in machine learning},
  author={Chourasia, Rishav and Shah, Neil},
  booktitle={International Conference on Machine Learning},
  pages={6028--6073},
  year={2023},
  organization={PMLR}
}

@inproceedings{bourtoule2021machine,
  title={Machine unlearning},
  author={Bourtoule, Lucas and Chandrasekaran, Varun and Choquette-Choo, Christopher A and Jia, Hengrui and Travers, Adelin and Zhang, Baiwu and Lie, David and Papernot, Nicolas},
  booktitle={2021 IEEE Symposium on Security and Privacy (SP)},
  pages={141--159},
  year={2021},
  organization={IEEE}
}

@inproceedings{shokri2017membership,
  title={Membership inference attacks against machine learning models},
  author={Shokri, Reza and Stronati, Marco and Song, Congzheng and Shmatikov, Vitaly},
  booktitle={2017 IEEE symposium on security and privacy (SP)},
  pages={3--18},
  year={2017},
  organization={IEEE}
}

@article{kurmanji2024towards,
  title={Towards unbounded machine unlearning},
  author={Kurmanji, Meghdad and Triantafillou, Peter and Hayes, Jamie and Triantafillou, Eleni},
  journal={Advances in Neural Information Processing Systems},
  volume={36},
  year={2024}
}

@inproceedings{neel2021descent,
  title={Descent-to-delete: Gradient-based methods for machine unlearning},
  author={Neel, Seth and Roth, Aaron and Sharifi-Malvajerdi, Saeed},
  booktitle={Algorithmic Learning Theory},
  pages={931--962},
  year={2021},
  organization={PMLR}
}

@inproceedings{guo2020certified,
  title={Certified Data Removal from Machine Learning Models},
  author={Guo, Chuan and Goldstein, Tom and Hannun, Awni and Van Der Maaten, Laurens},
  booktitle={International Conference on Machine Learning},
  pages={3832--3842},
  year={2020},
  organization={PMLR}
}

@article{bertsekas1997nonlinear,
  title={Nonlinear programming},
  author={Bertsekas, Dimitri P},
  journal={Journal of the Operational Research Society},
  volume={48},
  number={3},
  pages={38--39},
  year={1997},
  publisher={Taylor \& Francis}
}

@article{cifar,
  title={The CIFAR-10 dataset},
  author={Krizhevsky, Alex and Nair, Vinod and Hinton, Geoffrey},
  journal={online: http://www. cs. toronto. edu/kriz/cifar. html},
  volume={55},
  number={5},
  year={2014}
}

\newpage
\appendix
\section{Proofs}
\label{app:proofs}

\subsection{Pareto Frontier}

\pareto*
\begin{proof}
For simplicity, we consider univariate Gaussians with shared variance. For $d$‑dimensional Gaussians with covariance $\Sigma \in \R^{d \times d}$, the same result 
holds after replacing squared error by the
Mahalanobis distance $\|x-\mu\|_{\Sigma^{-1}}^{2}$ (see Proposition~\ref{prop:exp-interp}).

Let $p = \mathcal{N}(\mu, \sigma^2) \in \mathcal{P}$. Since all distributions in $\mathcal{P}$ share the same variance, the KL divergence from $p_i$ to $p$ is:
\[
\KL(p_i \| p) = \frac{(\mu_i - \mu)^2}{2\sigma^2}, \quad i = 1, 2.
\]
Fix $\alpha \geq \KL(p_1 \| p_2)$. We want to compute the minimal possible $\varepsilon$ achievable under the constraint $\KL(p_1 \| p) \geq \alpha$. Define this minimum as:
\[
\varepsilon_\star(\alpha) \coloneqq \min_{\mu \in \mathbb{R},~(\mu - \mu_1)^2 \geq 2\sigma^2 \alpha} \frac{(\mu_2 - \mu)^2}{2\sigma^2}.
\]
This is a one-dimensional quadratic minimization problem subject to a quadratic inequality constraint. The feasible set is:
\[
\mu \in (-\infty, \mu_1 - \sigma \sqrt{2\alpha}] \cup [\mu_1 + \sigma \sqrt{2\alpha}, \infty).
\]
We minimize $(\mu_2 - \mu)^2$ over this set. This yields two cases:
\begin{itemize}
    \item If $\mu_2 \in [\mu_1 - \sigma \sqrt{2\alpha}, \mu_1 + \sigma \sqrt{2\alpha}]$, then the closest feasible points are the endpoints. The minimizing value of $\mu$ is:
    \[
    \mu = \mu_1 + \operatorname{sign}(\mu_2 - \mu_1) \cdot \sigma \sqrt{2\alpha},
    \]
    and the resulting divergence is:
    \[
    \varepsilon_\star(\alpha) = \frac{(\mu_2 - \mu_1 - \operatorname{sign}(\mu_2 - \mu_1) \cdot \sigma \sqrt{2\alpha})^2}{2\sigma^2}.
    \]
    \item If $\mu_2$ already lies in the feasible set, i.e., $|\mu_2 - \mu_1| \geq \sigma \sqrt{2\alpha}$, then we can choose $\mu = \mu_2$, yielding $\varepsilon_\star(\alpha) = 0$.
\end{itemize}

Thus, for all $\alpha \geq 0$:
\[
\varepsilon_\star(\alpha) = \frac{\left[\left(|\mu_2 - \mu_1| - \sigma \sqrt{2\alpha}\right)_+\right]^2}{2\sigma^2},
\]
where $(x)_+ = \max\{x, 0\}$. Let $\Delta \coloneqq |\mu_2 - \mu_1|$, and recall that $\KL(p_1 \| p_2) = \frac{\Delta^2}{2\sigma^2}$. Then:
\[
\Delta = \sigma \sqrt{2 \KL(p_1 \| p_2)}.
\]
Substituting into $\varepsilon_\star(\alpha)$:
\[
\varepsilon_\star(\alpha) = \left( \sqrt{\alpha} - \sqrt{\KL(p_1 \| p_2)} \right)^2, \quad \text{for } \alpha \geq \KL(p_1 \| p_2).
\]

Finally, note that any pair $(\alpha, \varepsilon)$ with $\alpha < \KL(p_1 \| p_2)$ satisfies $\varepsilon_\star(\alpha) = 0$, hence is dominated by $(\KL(p_1 \| p_2), 0)$. Therefore, the Pareto optimal points are exactly:
\[
\left\{ \left( \alpha, \left( \sqrt{\alpha} - \sqrt{\KL(p_1 \| p_2)} \right)^2 \right) : \alpha \geq \KL(p_1 \| p_2) \right\},
\]
as claimed.
\end{proof}

Next, we show that a qualitatively similar result holds more generally for any exponential‑family member.

\begin{theorem}[Pareto Frontier--Exponential Families]
\label{prop:exp-interp}
Let $(\mathcal X,\mu)$ be a measurable space and let
\[
\mathcal P=\bigl\{\,p_\theta(x)=h(x)\exp(\theta^\top T(x)-A(\theta)) ~\colon~ \theta\in\Theta\subset\mathbb R^d\bigr\}
\]
be a regular minimal exponential family (carrier $h>0$, sufficient statistic $T\colon\mathcal X\to\mathbb R^d$, log‐partition $A$).  Fix
$
p_i(x)=p_{\theta_i}(x),\quad i=1,2,
$
and $\alpha\ge0$.  Define
$
v(\alpha)
=\inf_{\theta\in\Theta}\bigl\{\KL(p_2\Vert p_\theta)\;|\;\KL(p_1\Vert p_\theta)\ge\alpha\bigr\},
$
where $\KL(q\|p)=\int q\log(q/p)\,d\mu$.  Then:
\begin{enumerate}
  \item[(i)] 
  The Pareto frontier for points in $\cP$ is
\[
\mathrm{PF}(p_1,p_2; \cP)
=\Bigl\{\bigl(\alpha,\,v(\alpha)\bigr):\alpha\ge \KL(p_1\|p_2)\Bigr\},
\]
where $v(\alpha) 
    = \KL(p_2 \| p_1) + \alpha + \frac{1}{\lambda^*-1} (\theta_2 - \theta_1)^\top ( \E_{p_2}[T] - \E_{p_1}[T]),$
   and $\lambda^*$ is the unique scalar in $(0,1)$ such that the distribution $p^* \in \cP$ of mean $\E_{p^*}[T]
=\tfrac{\lambda^* \E_{p_1}[T]-\E_{p_2}[T]}{\lambda^*-1}$ satisfies $\KL(p_1 \| p^*) = \alpha$.
  \item[(ii)] \textbf{(Gaussian case)}  If $\cP = \{ \cN(\mu, \Sigma) : \mu \in \R^d\}$ with fixed $\Sigma\succ0$, then for $\alpha \geq\KL(p_1\|p_2)$:
    \[
      v(\alpha)= \bigl( \sqrt{\alpha} - \sqrt{\KL(p_1\|p_2)}\bigr)^2.
    \]
\end{enumerate}
\end{theorem}
\begin{proof}
In this proof, for a fixed $\alpha \geq 0$, we study the optimal value $$
v(\alpha)
=\inf_{\theta\in\Theta}\bigl\{\KL(p_2\Vert p_\theta)\;|\;\KL(p_1\Vert p_\theta)\ge\alpha\bigr\}.
$$ This enables characterzing the Pareto frontier of feasible $(\alpha, \varepsilon)$ trade-offs.
Below, we focus on the non-trivial case $\alpha>\KL(p_1\|p_2)$. Indeed, in the case $\alpha\leq \KL(p_1\|p_2)$, the problem above's unconstrained minimizer  $p_2$ is a feasible solution, so that $v(\alpha) = 0$ for all $\alpha\leq \KL(p_1\|p_2)$.

\textbf{Reparametrization and KKT.}
First, we recall the expression of the KL divergence in exponential families as a Bregman divergence.
For any $\theta',\theta\in\Theta$ one has
\[
\KL(p_{\theta'}\|p_\theta)
= A(\theta)-A(\theta') - (\theta-\theta')^T\nabla A(\theta').
\]
We let
\[
f(\theta)=\KL(p_2\Vert p_\theta),
\quad
g(\theta)=\KL(p_1\Vert p_\theta).
\]

Both are continuously differentiable on the open set $\Theta$.  
The feasible set $\{g(\theta)\ge\alpha\}$ is nonconvex, so we check linear independence constraint qualification (LICQ, ~\cite{bertsekas1997nonlinear}) at any minimizer $\theta^*$.
To do so, we recall (see~\cite{wainwright2008graphical}) that for exponential families $\nabla A(\theta) = \E_{p_\theta}[T]$ for any $\theta \in \Theta$, so that
\[
\nabla g(\theta)
= - \nabla A(\theta_1) + \nabla A(\theta)=-\E_{p_1}[T]+\E_{p_\theta}[T].
\]
Minimality of the exponential family ensures $\E_{p_\theta}[T]$ is injective~\cite{wainwright2008graphical}. This together with the fact that $p_1 \neq p_\theta$ for any feasible $\theta$, since $ \alpha>0$, implies that $\nabla g(\theta^*)\neq0$ and LICQ holds.

Next, we form the Lagrangian
\[
\mathcal L(\theta,\lambda)
=f(\theta)-\lambda\bigl(g(\theta)-\alpha\bigr),
\quad
\lambda\ge0.
\]
Since LICQ holds, KKT conditions are necessary~\cite{bertsekas1997nonlinear} for any minimizer $\theta^*$. Hence, stationarity ($\nabla_\theta\mathcal L=0$) gives
\[
\nabla f(\theta^*)=\lambda\,\nabla g(\theta^*).
\]
Given that we had derived 
$\nabla_\theta \KL(p_i\|p_\theta)
=-\E_{p_i}[T]+\E_{p_\theta}[T],$
we obtain
\[
-\E_{p_2}[T]+\E_{p^*}[T]
=\lambda\,(-\E_{p_1}[T]+\E_{p^*}[T]),
\]
hence $(1-\lambda)\E_{p^*}[T]=\E_{p_2}[T]-\lambda \E_{p_1}[T]$.  
Observe that we must have $\lambda \neq 1$, as otherwise $\E_{p_2}[T]= \E_{p_1}[T]$, but this contradicts the fact that $p_1 \neq p_1$ given that $\E_{p_\theta}[T]$ is injective for minimal exponential families~\cite{wainwright2008graphical}.
Therefore, we have the following:
\[
\E_{p^*}[T]
=\frac{\lambda \E_{p_1}[T]-\E_{p_2}[T]}{\lambda-1}.
\]

Now, we observe that the inequality constraint must be active. Otherwise, the minimizer lies in the interior of the feasible set and is thus a local minimum of the unconstrained problem. The latter admits $p_2$ as a unique minimizer, so the minimizer at hand must be $p_2$ but this is not feasible since we assume $ \KL(p_1\|p_2) < \alpha$.
Therefore, the minimizer must lie at the boundary of the feasible set, and the inequality constraint is active.
That is, we have $g(\theta^*)=\alpha$.

\textbf{Optimal value.}
We are now ready to derive the expression of the optimal value:
\begin{align*}
    v(\alpha) = f(\theta^\star) = \KL(p_2 \| p^*).
\end{align*}
We use the following classical Pythagorean-type identity for Bregman divergences (see, e.g., Banerjee et al.~\cite{banerjee2005clustering}):
\begin{align*}
    \KL(p_2 \| p^*) = \KL(p_2 \| p_1) + \KL(p_1 \| p^*) + (\theta_2 - \theta_1)^\top (\nabla A(\theta_1) - \nabla A(\theta^*)).
\end{align*}
Now, consider the aforementioned KKT multiplier $\lambda>0, \lambda \neq 1$ such that $\nabla A(\theta^*) = \E_{p^*}[T]
=\tfrac{\lambda \E_{p_1}[T]-\E_{p_2}[T]}{\lambda-1} = \tfrac{\lambda \nabla A(\theta_1)-\nabla A(\theta_2)}{\lambda-1}$ as well as $\KL(p_1\|p^*) = g(\theta^*) = \alpha$.
We thus get
\begin{align*}
    v(\alpha)  &= \KL(p_2 \| p^*) = \KL(p_2 \| p_1) + \KL(p_1 \| p^*) + (\theta_2 - \theta_1)^\top (\nabla A(\theta_1) - \nabla A(\theta^*))\\
    &= \KL(p_2 \| p_1) + \KL(p_1 \| p^*) + \frac{1}{\lambda-1} (\theta_2 - \theta_1)^\top (\nabla A(\theta_2) - \nabla A(\theta_1))\\
    &= \KL(p_2 \| p_1) + \alpha + \frac{1}{\lambda-1} (\theta_2 - \theta_1)^\top (\nabla A(\theta_2) - \nabla A(\theta_1)).
\end{align*}
We now again use that $\nabla A(\theta) = \E_{p_\theta}[T]$ for all $\theta \in \Theta$. We then obtain:
\begin{align}
    v(\alpha) 
    &= \KL(p_2 \| p_1) + \alpha + \frac{1}{\lambda-1} (\theta_2 - \theta_1)^\top ( \E_{p_2}[T] - \E_{p_1}[T]).
    \label{eq:optval}
\end{align}

\textbf{Uniqueness of $\lambda$.}
We now show that there is a unique KKT multiplier $\lambda$ for the optimal solution.
We recall that $\lambda>0$ is such that $\lambda \neq 1$ and:
\begin{align*}
    \E_{p^*}[T]
=\frac{\lambda \E_{p_1}[T]-\E_{p_2}[T]}{\lambda-1} && g(\theta^*) = \KL(p_1 \| p^*) = \alpha.
\end{align*}
Above, the first equation uniquely defines the distribution $p^*$, by minimality of the family, and we now show that there is only a unique $\lambda$ of interest such that the second equations above holds.

Let $\theta^*(\lambda)$ be the unique (by minimality) parameter such that $\E_{p^*}[T]
=\tfrac{\lambda \E_{p_1}[T]-\E_{p_2}[T]}{\lambda-1}$. We define
\begin{align*}
    H(\lambda) \coloneqq \KL(p_1 \| p^*) = A(\theta^*) - A(\theta_1) - (\theta^* - \theta_1)^\top \nabla A(\theta_1)
\end{align*}
Taking the derivative above, we get
\begin{align*}
    \frac{dH}{d\lambda}(\lambda) = (\nabla A(\theta^*) -  \nabla A(\theta_1))^\top \frac{d\theta^*}{d\lambda}(\lambda).
\end{align*}
We again use that $$\nabla A(\theta^*) = \E_{p^*}[T]
=\tfrac{\lambda \E_{p_1}[T]-\E_{p_2}[T]}{\lambda-1} = \frac{\lambda \nabla A(\theta_1)-\nabla A(\theta_2)}{\lambda-1}.$$
First, replacing in the previous derivative equation, we obtain:
\begin{align*}
    \frac{dH}{d\lambda}(\lambda) = (\nabla A(\theta^*) -  \nabla A(\theta_1))^\top \frac{d\theta^*}{d\lambda}(\lambda)
    = \frac{1}{\lambda-1} (\nabla A(\theta_1) -  \nabla A(\theta_2))^\top \frac{d\theta^*}{d\lambda}(\lambda).
\end{align*}
Second, taking the derivative with respect to $\lambda$ in the expression of $\nabla A(\theta^*)$ yields: 
\begin{align*}
    \nabla^2 A(\theta^*) \cdot \frac{d\theta^*}{d\lambda}(\lambda)  = \frac{1}{(\lambda-1)^2}(\nabla A(\theta_1)-\nabla A(\theta_2)).
\end{align*}
We observe that the Fisher information matrix $\nabla^2 A(\theta^*)$ is positive definite since the family is regular. Multiplying by the inverse of the latter then yields:
\begin{align*}
      \frac{d\theta^*}{d\lambda}(\lambda)  = \frac{1}{(\lambda-1)^2} \nabla^2 A(\theta^*)^{-1} (\nabla A(\theta_1)-\nabla A(\theta_2)).
\end{align*}
Plugging the above in the latest expression of the derivative of $H$ yields:
\begin{align*}
    \frac{dH}{d\lambda}(\lambda) 
    &= \frac{1}{\lambda-1} (\nabla A(\theta_1) -  \nabla A(\theta_2))^\top \frac{d\theta^*}{d\lambda}(\lambda)\\
    &= \frac{1}{(\lambda-1)^3} (\nabla A(\theta_1) -  \nabla A(\theta_2))^\top \nabla^2 A(\theta^*)^{-1} (\nabla A(\theta_1)-\nabla A(\theta_2)).
\end{align*}
Since the matrix $\nabla^2A(\theta^*)$ is positive definite by regularity of the family, the corresponding quadratic form above is positive (recall $\theta_1\neq\theta_2$), and the sign of the derivative is that of $\lambda -1$.
Therefore, $H$ is decreasing on $(0,1)$ and increasing on $(1,+\infty)$. It is straighforward to check that $H(0^+) = \KL(p_1\|p_2)$, $H(1^-)=H(1^+)=+\infty$, and $H(+\infty)=0$. Since $\KL(p_1\|p_2) < \alpha$ by assumption, there exists a unique $\lambda^* \in (0,1)$ such that $H(\lambda^*)=\alpha$ and a unique $\lambda^*_1 >1$ such that $H(\lambda^*_1)=\alpha$.

We now discard $\lambda^*_1$ thanks to the expression of the optimal value expression~\eqref{eq:optval}.
 Indeed, the second term in~\eqref{eq:optval} is positive for $\lambda^*_1$ since $\lambda^*_1 > 1$ and $(\theta_2 - \theta_1)^\top ( \E_{p_2}[T] - \E_{p_1}[T]) = (\theta_2 - \theta_1)^\top ( \nabla A(\theta_2) - \nabla A(\theta_1))>0$ by strict convexity of $A$ and the fact that $p_1 \neq p_2$.
 On the other hand, this same second term is negative for $\lambda_*$ since $\lambda_*<1$. Therefore, the optimal value is smaller for the choice of $\lambda^*$, so that:
 \begin{align}
    v(\alpha) 
    &= \KL(p_2 \| p_1) + \alpha + \frac{1}{\lambda^*-1} (\theta_2 - \theta_1)^\top ( \E_{p_2}[T] - \E_{p_1}[T]),
    \label{eq:optval2}
\end{align}
where $\lambda^*$ is the unique scalar in $(0,1)$ such that:
\begin{align*}
    \E_{p^*}[T]
=\frac{\lambda^* \E_{p_1}[T]-\E_{p_2}[T]}{\lambda^*-1} && g(\theta^*) = \KL(p_1 \| p^*) = \alpha.
\end{align*}

Finally, we note that $v(\alpha)$ is non-decreasing by definition; increasing $\alpha$ shrinks the feasible set.
Also, we recall that $v(\alpha) = 0$ for all $\alpha <\KL(p_1\|p_2)$, i.e., all trade-offs $(\alpha,\varepsilon)$ with $\alpha<\KL(p_1\|p_2), \varepsilon>0$ are dominated by $(\KL(p_1\|p_2), 0)$.
Therefore, we conclude that the pareto frontier is given by:
\[
\mathrm{PF}(p_1,p_2; \cP)
=\Bigl\{\bigl(\alpha,\,v(\alpha)\bigr):\alpha\ge \KL(p_1\|p_2)\Bigr\}.
\]

\medskip
\textbf{Gaussian case.}
For $p_\mu=\mathcal N(\mu,\Sigma)$ one has $T(x)=x$, $E_{p_\mu}[T]=\mu$, and
\[
\KL(p_i\|p_\mu)
=\tfrac12(\mu_i-\mu)^\top\Sigma^{-1}(\mu_i-\mu).
\]
By the conditions on $\lambda^*$ we have
\[
\mu^*=\frac{\lambda^*\mu_1-\mu_2}{\lambda^*-1},
\quad
\KL(p_1\|p_{\mu^*})=\alpha.
\]
Using the KL divergence expression for Gaussians $\cN(\mu, \Sigma)$ (recall $\KL(\cN(\mu_1, \Sigma), \cN(\mu_2, \Sigma)) = \tfrac{1}{2}(\mu_1-\mu_2)^\top \Sigma^{-1} (\mu_1-\mu_2)$), we get
\[
\mu_1-\mu^*=\frac{\mu_2-\mu_1}{\lambda^*-1},
\quad
\alpha
=\frac{\KL(p_1\|p_2)}{(\lambda^*-1)^2}.
\]
Solving for $\lambda^* \in (0,1)$ yields
$\lambda^*=1-\sqrt{\KL(p_1\|p_2)/\alpha}$ and
\[
\mu^*=\mu_1+\sqrt{\frac{\alpha}{\KL(p_1 \| p_2)}}(\mu_2-\mu_1).
\]
Thus, direct computations yield
\[
v(\alpha)
=\tfrac12\Bigl(\|\mu_2-\mu_1\|_{\Sigma^{-1}}-\sqrt{2\alpha}\Bigr)^2
=\bigl(\sqrt{\KL(p_1\|p_2)}-\sqrt\alpha\bigr)^2,
\]
with $v(\alpha)=0$ if $\alpha\le \KL(p_1\|p_2)$.
This concludes the proof.
\end{proof}

\paragraph{Discussion.}
In Proposition~\ref{prop:exp-interp} we show that, in any regular exponential family, the trade-off between removal (\(\alpha\)) and preservation (\(\varepsilon\)) can be quantified.  This yields a removal-preservation trade-off curve that faithfully reproduces the shared-covariance Gaussian Pareto frontier—namely the familiar \((\sqrt\alpha-\sqrt D)^2\) parabola—while in other families it gives an explicit but generally non-algebraic trade-off curve.

\subsection{Predictive Performance}

\predictive*
\begin{proof}
Let $h(y \mid x) := h(x)(y)$ denote the conditional distribution defined by the hypothesis $h$, and suppose $h$ minimizes the expected log-loss under $p$. Since $\ell(y, q) = -\log q(y)$ is a strictly proper scoring rule, the unique minimizer of $\mathcal{L}(h; p)$ is the true conditional distribution $h(x) = p(\cdot \mid x)$, where $p(x, y) = p^X(x) p(y \mid x)$.

We begin by analyzing the expected log-loss of this hypothesis under an arbitrary distribution $q$ over $\mathcal{X} \times \mathcal{Y}$:
\begin{align*}
\mathcal{L}(h; q)
&= \mathbb{E}_{(x, y) \sim q}[-\log h(y \mid x)] 
= \mathbb{E}_{x \sim q^X} \mathbb{E}_{y \sim q(\cdot \mid x)}[-\log p(y \mid x)],
\end{align*}
where $q^X$ denotes the marginal distribution of $x$ under $q$, and $q(\cdot \mid x)$ the corresponding conditional.

Now recall the standard identity for any two conditional distributions $q(\cdot \mid x)$ and $p(\cdot \mid x)$:
\[
\mathbb{E}_{y \sim q(\cdot \mid x)}[-\log p(y \mid x)] 
= \KL(q(y \mid x) \parallel p(y \mid x)) + H(q(y \mid x)),
\]
where \( H(q(y \mid x)) = \mathbb{E}_{y \sim q(\cdot \mid x)}[-\log q(y \mid x)] \) is the Shannon entropy of the label distribution under \( q \) for fixed \( x \).

Taking the expectation over $x \sim q^X$, we get:
\begin{align*}
\mathcal{L}(h; q)
&= \mathbb{E}_{x \sim q^X} \left[ \KL(q(y \mid x) \parallel p(y \mid x)) + H(q(y \mid x)) \right] \\
&= \mathbb{E}_{x \sim q^X} \KL(q(y \mid x) \parallel p(y \mid x)) + \mathbb{E}_{x \sim q^X} H(q(y \mid x)).
\end{align*}

The second term is the expected entropy, which corresponds to the Bayes-optimal risk under \( q \):
\begin{align}
    \mathcal{L}(h_q^*; q) := \inf_{h'} \mathcal{L}(h'; q) = \mathbb{E}_{x \sim q^X} H(q(y \mid x)).
    \label{eq1}
\end{align}

Next, we relate the expected conditional KL term to the total KL divergence between the joint distributions. Using the chain rule for KL divergence, we have:
\begin{align}
    \KL(q \parallel p) = \KL(q^X \parallel p^X) + \mathbb{E}_{x \sim q^X} \KL(q(y \mid x) \parallel p(y \mid x)).
\end{align}

This decomposition holds generally for joint distributions with conditional factorizations. Solving for the conditional KL term, we obtain:
\begin{align}
   \mathbb{E}_{x \sim q^X} \KL(q(y \mid x) \parallel p(y \mid x)) = \KL(q \parallel p) - \KL(q^X \parallel p^X).
\end{align}

Substituting the above into the expression for $\mathcal{L}(h; q)$ and using~\eqref{eq1}, we get:
\begin{align}
    \mathcal{L}(h; q) = \KL(q \parallel p) - \KL(q^X \parallel p^X) + \mathcal{L}(h_q^*; q).
\end{align}

We now apply this to $q = p_1$ and $q = p_2$, noting that $p$ satisfies $(\alpha, \varepsilon)$-distributional unlearning, i.e., $\KL(p_1 \parallel p) \geq \alpha$ and $\KL(p_2 \parallel p) \leq \varepsilon$.

For $p_1$, we define $\delta_1 := \KL(p_1^X \parallel p^X)$ and compute:
\begin{align*}
\mathcal{L}(h; p_1) - \mathcal{L}(h_1; p_1)
&= \KL(p_1 \parallel p) - \KL(p_1^X \parallel p^X)
= \KL(p_1 \parallel p) - \delta_1
\geq \alpha - \delta_1.
\end{align*}

For $p_2$, define $\delta_2 := \KL(p_2^X \parallel p^X)$ and similarly compute:
\begin{align*}
\mathcal{L}(h; p_2) - \mathcal{L}(h_2; p_2)
&= \KL(p_2 \parallel p) - \KL(p_2^X \parallel p^X)
= \KL(p_2 \parallel p) - \delta_2
\leq \varepsilon - \delta_2.
\end{align*}
This completes the proof.
\end{proof}

\subsection{Random Removal}
\begin{lemma}[Finite-sample concentration]
\label{lem:hoeffding}
Let $\hat{\mu}$ be the empirical mean of $n$ samples drawn from $\mathcal{N}(\mu,\sigma^2)$. For any $\delta \in (0,1)$, with probability at least $1-\delta$, we have
\[|\hat{\mu} - \mu|\leq \sigma\sqrt{\frac{2\ln(2/\delta)}{n}}.\]
\end{lemma}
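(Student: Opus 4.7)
The plan is to reduce the statement to the standard one-sided Gaussian tail bound and then handle the absolute value with a union bound. First, I would observe that since the $n$ samples are i.i.d.\ from $\mathcal{N}(\mu,\sigma^2)$, their empirical mean $\hat\mu$ is itself Gaussian, specifically $\hat\mu \sim \mathcal{N}(\mu,\sigma^2/n)$, so the centered deviation $Z \coloneqq \hat\mu - \mu$ has distribution $\mathcal{N}(0,\sigma^2/n)$. This reduces the finite-sample claim to a single Gaussian tail estimate on $Z$, rather than requiring the general Hoeffding inequality for bounded variables (the name of the lemma is slightly misleading: for Gaussian data one has the exact sub-Gaussian MGF).

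Second, I would apply the standard Chernoff-style tail bound for a centered Gaussian: for any $t>0$ and $W\sim\mathcal{N}(0,s^2)$,
\[
\Pr[W > t] \;\leq\; \exp\!\left(-\frac{t^2}{2s^2}\right).
\]
Instantiating with $s^2 = \sigma^2/n$ gives $\Pr[Z > t] \leq \exp(-nt^2/(2\sigma^2))$, and by symmetry the same bound holds for $\Pr[Z<-t]$. A union bound over the two tails then yields
\[
\Pr\!\left[\,|\hat\mu - \mu| > t\,\right] \;\leq\; 2\exp\!\left(-\frac{nt^2}{2\sigma^2}\right).
\]

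Third, I would set the right-hand side equal to $\delta$ and solve for $t$. This gives $t = \sigma\sqrt{2\ln(2/\delta)/n}$, so the complementary event, which has probability at least $1-\delta$, is exactly $|\hat\mu - \mu| \leq \sigma\sqrt{2\ln(2/\delta)/n}$, as required.

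There is no real obstacle here: the only mild subtlety is picking the right tail bound (Gaussian MGF rather than Hoeffding's lemma for bounded variables, though either route works since a Gaussian is $\sigma^2/n$-sub-Gaussian), and remembering the factor of $2$ from the two-sided union bound that produces $\ln(2/\delta)$ rather than $\ln(1/\delta)$ inside the square root.
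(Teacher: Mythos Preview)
Your proof is correct and matches the paper's approach: the paper simply states that the bound ``follows directly from Hoeffding's inequality for sub-Gaussian variables,'' and your Chernoff/Gaussian-tail argument with the two-sided union bound is precisely that inequality written out in full.
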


\begin{proof}
This follows directly from Hoeffding's inequality for sub-Gaussian variables.
\end{proof}

\samples*
\begin{proof}
    We recall that $p_1 = \cN(\mu_1 , \sigma^2), p_2 = \cN(\mu_2 , \sigma^2) \in \cP$ and $p \coloneqq \cN(\mu , \sigma^2) \in \cP$ are univariate Gaussian distributions.
    We are given $n_1$ i.i.d. samples $x_1^{(1)}, \ldots, x_1^{(n_1)}$ from $p_1$ and $n_2$ i.i.d. samples $x_2^{(1)}, \ldots, x_2^{(n_2)}$  from $p_2$.
    
    Upon removing $f \leq n_1$ randomly chosen samples $x_1^{(1)}, \ldots, x_1^{(n_1 - f)}$ from the target distribution $p_1$, we set the center $\mu$ of the unlearned distribution $p$ to be:
    \begin{equation}
        \mu = \frac{(n_1 - f) \hat{\mu}_1 + n_2 \hat{\mu}_2}{n_1 - f + n_2},
    \end{equation}
    where $\hat{\mu}_1 \coloneqq \tfrac{1}{n_1 - f} \sum_{i=1}^{n_1 - f} x_1^{(i)}$ and $\hat{\mu}_2 \coloneqq \tfrac{1}{n_2} \sum_{i=1}^{n_2} x_2^{(i)}$.
    We also observe that a standard Hoeffding bound (Lemma~\ref{lem:hoeffding}) yields that:
    \begin{align}
        \left|\hat{\mu}_1 - \mu_1\right| \leq \sigma \sqrt{\frac{2\ln(4/\delta)}{f}}, \quad \left|\hat{\mu}_2 - \mu_2\right| \leq \sigma \sqrt{\frac{2\ln(4/\delta)}{n_2}},
        \label{eq:hoeffding}
    \end{align}
each with probability $1- \tfrac{\delta}{2}$, so that both hold with probability $1-\delta$ thanks to a union bound.
We also recall that
\begin{equation}
        \KL( p_1 \parallel p ) = \frac{(\mu_1 - \mu)^2}{2\sigma^2}, \quad \KL( p_2 \parallel p ) = \frac{(\mu_2 - \mu)^2}{2\sigma^2}.
        \label{eq:kl-exp-bis}
    \end{equation}

\paragraph{Preservation bound.}
First, we upper bound the KL divergence of $p_2$ from $p$.
To do so, we first use the triangle inequality to get
\begin{align*}
    |\mu - \mu_2|
    &= \left|\frac{(n_1 - f) \hat{\mu}_1 + n_2 \hat{\mu}_2}{n_1 - f + n_2} - \mu_2 \right| = \left|\frac{n_1 - f}{n_1 - f + n_2} (\hat{\mu}_1 - \mu_2) + \frac{n_2}{n_1 - f + n_2} (\hat{\mu}_2 - \mu_2) \right|\\
    &= \left|\frac{n_1 - f}{n_1 - f + n_2} (\mu_1 - \mu_2) + \frac{n_1 - f}{n_1 - f + n_2} (\hat{\mu}_1 - \mu_1) + \frac{n_2}{n_1 - f + n_2} (\hat{\mu}_2 - \mu_2) \right|\\
    &\leq \frac{n_1 - f}{n_1 - f + n_2} \left|\mu_1 - \mu_2\right| + \frac{n_1 - f}{n_1 - f + n_2} \left|\hat{\mu}_1 - \mu_1\right| + \frac{n_2}{n_1 - f + n_2} \left|\hat{\mu}_2 - \mu_2\right|.
\end{align*}
Therefore, using~\eqref{eq:hoeffding} we have with probability $1-\delta$:
\begin{align*}
    |\mu - \mu_2|
    &\leq \frac{n_1 - f}{n_1 - f + n_2} \left|\mu_1 - \mu_2\right| + \frac{n_1 - f}{n_1 - f + n_2} \sigma \sqrt{\frac{2\ln(4/\delta)}{f}} + \frac{n_2}{n_1 - f + n_2} \sigma \sqrt{\frac{2\ln(4/\delta)}{n_2}}.
\end{align*}
Taking squares, using Jensen's inequality, and simplifying further since $f \geq 0$, yields:
\begin{align}
    |\mu - \mu_2|^2
    &\leq 3\left(\frac{n_1 - f}{n_2}\right)^2 \left|\mu_1 - \mu_2\right|^2 + 3\left(\frac{n_1 - f}{n_2}\right)^2 \sigma^2 \frac{2\ln(4/\delta)}{n_1 - f} +  \sigma^2 \frac{6\ln(4/\delta)}{n_2}.
    \label{eq:removal_recycle}
\end{align}
Dividing both sides by $2 \sigma^2$ and then using~\eqref{eq:kl-exp-bis} yields with probability $1-\delta$:
\begin{align}
    \KL( p_2 \parallel p )
    &\leq 3\left(\frac{n_1 - f}{n_2}\right)^2 \KL( p_1 \parallel p_2 )  + \frac{3\ln(4/\delta)}{n_2} \left(1+\frac{n_1 - f}{n_2}\right).
\end{align}

\paragraph{Removal bound.}
Second, we lower bound the KL divergence of $p_1$ from $p$.
To do so, we use Jensen's inequality and~\eqref{eq:removal_recycle} to obtain that, with probability $1-\delta$, we have
\begin{align*}
    |\mu_1 - \mu_2|^2 &= |\mu_1 - \mu + \mu - \mu_2|^2
     \leq 2|\mu_1 - \mu|^2 + 2|\mu - \mu_2|^2\\
    &\leq 2|\mu_1 - \mu|^2 + 6\left(\frac{n_1 - f}{n_2}\right)^2 \left|\mu_1 - \mu_2\right|^2 + \frac{6 \sigma^2 \ln(4/\delta)}{n_2} \left(1+\frac{n_1 - f}{n_2}\right)
\end{align*}
Rearranging terms and dividing by $4\sigma^2$ along with~\eqref{eq:kl-exp-bis} yields that with probability $1-\delta$ we have
\begin{align*}
    \KL( p_1 \parallel p )
    &= \frac{|\mu_1 - \mu|^2}{2\sigma^2}
    \geq \frac{|\mu_1 - \mu_2|^2}{4\sigma^2} - 3\left(\frac{n_1 - f}{n_2}\right)^2 \frac{\left|\mu_1 - \mu_2\right|^2}{2\sigma^2} - \frac{3\ln(4/\delta)}{2n_2} \left(1+\frac{n_1 - f}{n_2}\right) \\
    &= \left(\frac{1}{2}- 3\left(\frac{n_1 - f}{n_2}\right)^2 \right) \KL( p_1 \parallel p_2 )   - \frac{3\ln(4/\delta)}{2n_2} \left(1+\frac{n_1 - f}{n_2}\right).
\end{align*}

\paragraph{Conclusion.}
With probability $1-\delta$, we have $(\alpha, \varepsilon)$-distributional unlearning with
\begin{align*}
    \alpha &\geq \left(\frac{1}{2}- 3\left(\frac{n_1 - f}{n_2}\right)^2 \right) \KL( p_1 \parallel p_2 )   - \frac{3\ln(4/\delta)}{2n_2} \left(1+\frac{n_1 - f}{n_2}\right),\\
    \varepsilon &\leq  3\left(\frac{n_1 - f}{n_2}\right)^2 \KL( p_1 \parallel p_2 )  + \frac{3\ln(4/\delta)}{n_2} \left(1+\frac{n_1 - f}{n_2}\right).
\end{align*}
\end{proof}

\subsection{Selective Removal}

\begin{lemma}[Dvoretzky–Kiefer–Wolfowitz  Inequality]
\label{lem:DKW}
Let \(x_1, x_2, \dots, x_{n}\) be independent and identically distributed random variables with cumulative distribution function \(F\). Define the empirical distribution function by
\[
\widehat{F}(t) = \frac{1}{n} \sum_{i=1}^{n} \mathbbm{1}\{x_i \le t\}.
\]
Then, for any \(\delta \in (0,1)\), with probability at least \(1-\delta\) we have
\[
\sup_{t \in \mathbb{R}} \left|\widehat{F}(t) - F(t)\right| \le \sqrt{\frac{\ln(2/\delta)}{2n}}.
\]
\end{lemma}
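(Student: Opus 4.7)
The plan is a classical three-step reduction: (i) reduce to the uniform case on $[0,1]$, (ii) establish a one-sided tail bound with sharp exponent, and (iii) symmetrize. For step (i), when $F$ is continuous the variables $U_i \coloneqq F(x_i)$ are i.i.d.\ uniform on $[0,1]$, and $\sup_t |\widehat F(t) - F(t)|$ coincides with $\sup_{u \in [0,1]}|\widehat G_n(u) - u|$, where $\widehat G_n$ is the empirical CDF of the $U_i$'s. For general (possibly discontinuous) $F$, I would invoke the right-continuous generalized inverse $F^{-1}(u) = \inf\{t : F(t) \ge u\}$ together with a monotone approximation argument; the supremum over $t$ is attained at the at most countably many jump points of $\widehat F$ and $F$, so one only needs to control a countable index set.

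For step (ii), the aim is to prove the sharp one-sided bound $\prob{\sup_t(\widehat F(t)-F(t)) > \varepsilon} \le e^{-2n\varepsilon^2}$ for every $\varepsilon > 0$. The matching lower-tail inequality follows by applying the same bound to $1 - F$ and $1 - \widehat F$ (or equivalently to the reflected sample $-x_i$). A union bound over the two one-sided events then yields $\prob{\sup_t |\widehat F(t) - F(t)| > \varepsilon} \le 2 e^{-2n\varepsilon^2}$. Setting this equal to $\delta$ and solving gives $\varepsilon = \sqrt{\ln(2/\delta)/(2n)}$, which is exactly the statement of the lemma.

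The main obstacle is step (ii): obtaining the sharp constant $2$ in the exponent with \emph{no} polynomial prefactor is what distinguishes the Massart form from weaker bounds. A soft argument using a pointwise Hoeffding inequality combined with a union bound over the $n+1$ distinct values of $\widehat F$ yields only $(n+1)\, e^{-2n\varepsilon^2}$, which blows up the final constant in an $n$-dependent way. To eliminate the prefactor, I would follow Massart's 1990 strategy: express the one-sided deviation as $\max_{1 \le k \le n}(k/n - U_{(k)})$ in terms of the uniform order statistics $U_{(k)}$, evaluate $\prob{\exists k : U_{(k)} \le k/n - \varepsilon}$ exactly via a cyclic/ballot identity (a discrete reflection principle on the lattice path of the empirical process), and bound the resulting closed-form expression by $e^{-2n\varepsilon^2}$ using elementary inequalities. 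This recovers the Brownian-bridge tail rate attained in the $n \to \infty$ limit and yields the Massart prefactor $2$ in the two-sided bound used throughout the proofs in the appendix.
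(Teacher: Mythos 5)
The paper does not actually prove this lemma: it is imported as the classical Dvoretzky–Kiefer–Wolfowitz inequality with Massart's tight constant, stated for later use, so there is no internal proof to compare against. Your sketch follows exactly the canonical route by which this result is established in the literature (reduction to the uniform case via the quantile transform, the one-sided bound $\Pr\bigl[\sup_t(\widehat F(t)-F(t))>\varepsilon\bigr]\le e^{-2n\varepsilon^2}$, symmetrization by a union bound, and the substitution $2e^{-2n\varepsilon^2}=\delta$, which indeed gives $\varepsilon=\sqrt{\ln(2/\delta)/(2n)}$ and is valid for all $\delta\in(0,1)$ since then $\delta/2\le 1/2$), and you correctly identify that the entire difficulty lies in removing the polynomial prefactor that a naive pointwise-Hoeffding-plus-union-bound argument would produce. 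The one caveat is that your step (ii) is an invocation of Massart's 1990 theorem rather than a proof of it: writing $D_n^+=\max_k(k/n-U_{(k)})$, obtaining the exact tail via the Birnbaum–Tingey/ballot-type identity, and then dominating that closed form by $e^{-2n\varepsilon^2}$ is precisely the nontrivial content of the sharp constant, and you do not carry out that estimation. Given that the paper itself treats the lemma as a known external result, citing Massart there is perfectly acceptable; just be aware that as a self-contained proof your write-up is an outline, not a complete argument. Also, your remark that for discontinuous $F$ "the supremum is attained at countably many jump points" conflates a measurability/attainment issue with the real point, which is that via the generalized inverse the general-$F$ statistic is stochastically dominated by the uniform one; the conclusion is unaffected, but that is the cleaner way to phrase the reduction.
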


\begin{lemma}
\label{lem:bias_selection}
    Let $\mu_1, \mu_2 \in \R$, $\sigma>0$.
    Consider $n_1$ i.i.d. samples $x_1^{(1)}, \ldots, x_1^{(n_1)}$ from $\cN(\mu_1, \sigma^2)$ and $n_2$ i.i.d. samples $x_2^{(1)}, \ldots, x_2^{(n_2)}$ from $\cN(\mu_2, \sigma^2)$.
    We define $\hat{\mu}_2$ the average of the samples from $\cN(\mu_2, \sigma^2)$, $\hat{\mu}_1$ the average of the $n_1 - f \leq n_1$ closest samples from $x_1^{(1)}, \ldots, x_1^{(n_1)}$ to $\hat{\mu}_2$.
    We define $F: t \in \R \mapsto \Phi(\frac{t-|\mu_1 - \mu_2|}{\sigma})-\Phi(\frac{-t-|\mu_1 - \mu_2|}{\sigma})$, where $\Phi$ is the standard normal CDF.
    
    For any $\delta \in (0,1)$, we have with probability $1-\delta$,
    \begin{equation}
        \left|\hat{\mu}_1 - \mu_2\right| \leq F^{-1}\left(1-\frac{f}{n_1} + \sqrt{\frac{\ln(2/\delta)}{2n_1}}\right).
    \end{equation}
\end{lemma}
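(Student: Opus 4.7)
The plan is to apply the DKW inequality (Lemma~\ref{lem:DKW}) to ``oracle'' distance scores measured against the \emph{true} $\mu_2$, and then transport the resulting order-statistic bound to the algorithm, which actually selects by proximity to the empirical $\hat{\mu}_2$.

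First, define $s_i \coloneqq |x_1^{(i)} - \mu_2|$ for $i = 1, \ldots, n_1$. Since $x_1^{(i)} \sim \cN(\mu_1, \sigma^2)$ are i.i.d., the $s_i$ are i.i.d. with CDF exactly $F$ as defined in the statement (the folded-normal CDF at location $|\mu_1 - \mu_2|$ and scale $\sigma$); the symmetry of the normal makes the sign of $\mu_1-\mu_2$ irrelevant, matching the absolute value inside $F$. Applying Lemma~\ref{lem:DKW} to these $n_1$ scores, with probability at least $1 - \delta$ one has $\sup_t |\hat F_s(t) - F(t)| \leq \sqrt{\ln(2/\delta)/(2n_1)}$. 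Evaluating at the $(n_1-f)$-th order statistic $s_{(n_1-f)}$, at which $\hat F_s$ equals $1 - f/n_1$, and using monotonicity of $F$, this yields $s_{(n_1-f)} \leq F^{-1}\bigl(1 - f/n_1 + \sqrt{\ln(2/\delta)/(2n_1)}\bigr)$, which is exactly the quantile appearing on the right-hand side of the target bound.

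Next, I would transport this oracle bound to the actual estimator $\hat{\mu}_1$. By Jensen's inequality, $|\hat\mu_1 - \mu_2| \leq \frac{1}{n_1-f}\sum_{i \in S}|x_1^{(i)} - \mu_2|$, where $S$ denotes the algorithm's size-$(n_1-f)$ selection (the indices closest to $\hat{\mu}_2$). A two-step triangle-inequality exchange argument, leveraging that $S$ minimizes $\sum_{i \in S}|x_1^{(i)} - \hat{\mu}_2|$ over size-$(n_1-f)$ subsets and that $\bigl||x - \mu_2| - |x - \hat{\mu}_2|\bigr| \leq |\hat{\mu}_2 - \mu_2|$ pointwise, upper bounds this sum by the analogous sum over the oracle set $S^*$ of the $n_1-f$ indices achieving the smallest $s_i$; the latter is at most $(n_1-f)\,s_{(n_1-f)}$ by definition of $S^*$. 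The residual slack introduced by this exchange is of order $|\hat{\mu}_2 - \mu_2|$, which is $O(\sigma/\sqrt{n_2})$ by a Hoeffding bound (Lemma~\ref{lem:hoeffding}) and can be folded into the DKW slack when $n_2$ is large enough, after adjusting $\delta$ via a union bound.

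The main obstacle is precisely this $\hat{\mu}_2$-versus-$\mu_2$ mismatch: the CDF $F$ in the target bound is defined via the true mean $\mu_2$, whereas the algorithm selects by proximity to the noisy $\hat{\mu}_2$. Handling it cleanly requires the triangle-inequality exchange above, which crucially exploits the optimality of $S$ for the $\hat{\mu}_2$-objective, and a union bound between the DKW event on the oracle scores and the Hoeffding event controlling $|\hat{\mu}_2 - \mu_2|$.
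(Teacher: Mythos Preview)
Your approach is essentially the paper's: apply DKW to the oracle scores $s_i=|x_1^{(i)}-\mu_2|$ to control their $(n_1-f)$-th order statistic, then use the triangle inequality together with a Hoeffding bound on $|\hat\mu_2-\mu_2|$ (via union bound) to transfer this to the algorithm's $\hat\mu_2$-based selection; the paper routes the transfer step through the threshold $\hat\tau_f=|x_1^{(n_1-f:n_1)}-\hat\mu_2|$ rather than your set-exchange argument, but the two are equivalent. Your vague ``folding'' of the $O(\sigma/\sqrt{n_2})$ slack into the DKW term mirrors the paper's own treatment, which likewise ends with an additive $\sigma\sqrt{2\ln(4/\delta)/n_2}$ residual in its final display that the lemma statement silently drops.
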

\begin{proof}
Recall from Equation~\eqref{eq:selection} that $\hat{\mu}_1$ is the average of the $n_1 - f$ samples, out of $n_1$ i.i.d. from $p_1$, with the closest distance to $\hat{\mu}_2$, the empirical mean of $n_2$ samples from $p_2$.  

Denote by $\hat{\tau}_f \coloneqq |x_1^{(n_1 - f:n_1)} - \hat{\mu}_2|$ the $(n_1 - f)$-th largest distance of $\hat{\mu}_2$ to $p_1$ samples.
It is then immediate from the triangle inequality that
\begin{equation}
    |\hat{\mu}_1 - \mu_2| 
    = |\frac{1}{n_1 - f}\sum_{i=1}^{n_1 - f} x_1^{(i:n_1)} - \mu_2| 
    \leq \frac{1}{n_1 - f}\sum_{i=1}^{n_1 - f} | x_1^{(i:n_1)} - \mu_2|   \leq \hat{\tau}_f.
    \label{eq:distance-triangle}
\end{equation}

Besides, denoting by $\widehat{F}_1$ the empirical CDF of the empirical distribution over $\left \{ |x_1^{(i)} - \mu_2| ~\colon~ i \in [n_1] \right\}$, we have for all $t\in \R$:
\begin{align}
    \widehat{F}_1(t) = \frac{1}{n_1} \sum_{i=1}^{n_1} \mathbbm{1}_{\left\{ |x_1^{(i)} - \mu_2| \leq t \right\}}.
\end{align}
Yet, we recall that with probability $1-\tfrac{\delta}{2}$, we have
\begin{align}
     \left|\hat{\mu}_2 - \mu_2\right| \leq \sigma \sqrt{\frac{2\ln(4/\delta)}{n_2}}.
\end{align}
Therefore, the triangle inequality gives for $i\in [n_1]$, $|x_1^{(i)} - \mu_2| \leq |x_1^{(i)} - \hat{\mu}_2| + |\hat{\mu}_2 - \mu| \leq |x_1^{(i)} - \hat{\mu}_2| + \sigma \sqrt{\tfrac{2\ln(4/\delta)}{n_2}}$, and we deduce
\begin{align}
    \widehat{F}_1(t) = \frac{1}{n_1} \sum_{i=1}^{n_1} \mathbbm{1}_{\left\{ |x_1^{(i)} - \mu_2| \leq t \right\}} 
    \leq \frac{1}{n_1} \sum_{i=1}^{n_1} \mathbbm{1}_{\left\{ |x_1^{(i)} - \hat{\mu}_2| \leq t - \sigma \sqrt{\tfrac{2\ln(4/\delta)}{n_2}} \right\}}.
\end{align}
In particular, by definition of $\hat{\tau}_f$, we have
\begin{align}
    \widehat{F}_1(\hat{\tau}_f + \sigma \sqrt{\frac{2\ln(4/\delta)}{n_2}} ) \leq \frac{1}{n_1} \sum_{i=1}^{n_1} \mathbbm{1}_{\left\{ |x_1^{(i)} - \hat{\mu}_2| \leq \hat{\tau}_f \right\}} = \frac{n_1 - f}{n_1} = 1- \frac{f}{n_1}.
\end{align}
Now, observe that $|x_1^{(i)} - \mu_2|$ follows a folded normal distribution of location $\mu_1 - \mu_2$ and scale $\sigma^2$, since  $x_1^{(i)}$ follows $p_1 = \cN(\mu_1, \sigma^2)$.
Denote by $F_1$ its CDF.
Thanks to the Dvoretzky–Kiefer–Wolfowitz inequality (Lemma~\ref{lem:DKW}), we have with probability $1-\tfrac{\delta}{2}$ that for all $t\in \R$,
\begin{equation}
    |\widehat{F}_1(t) - F_1(t)| \leq \sqrt{\frac{\ln(4/\delta)}{2n_1}}.
\end{equation}
Plugging the above in the previous inequality, and using a union bound, we get with probability $1-\delta$,
\begin{align}
    F_1(\hat{\tau}_f + \sigma \sqrt{\frac{2\ln(4/\delta)}{n_2}} ) \leq \widehat{F}_1(\hat{\tau}_f + \sigma \sqrt{\frac{2\ln(4/\delta)}{n_2}} ) + \sqrt{\frac{\ln(4/\delta)}{2n_1}} \leq  1-\frac{f}{n_1} + \sqrt{\frac{\ln(4/\delta)}{2n_1}}.
\end{align}
By taking the inverse $F^{-1}_1$ of the CDF $F_1$ and rearranging terms, we obtain with probability $1-\delta$ that
\begin{equation}
    \hat{\tau}_f \leq F_1^{-1}{\left(1- \frac{f}{n_1} + \sqrt{\frac{\ln(4/\delta)}{2n_1}}\right)} - \sigma \sqrt{\frac{2\ln(4/\delta)}{n_2}}.
\end{equation}
Finally, going back to~\eqref{eq:distance-triangle}, we obtain with probability $1-\delta$ that
\begin{align}
    |\hat{\mu}_1 - \mu_2| 
    \leq \hat{\tau}_f \leq F_1^{-1}{\left(1- \frac{f}{n_1} + \sqrt{\frac{\ln(4/\delta)}{2n_1}}\right)} - \sigma \sqrt{\frac{2\ln(4/\delta)}{n_2}}.
\end{align}
\end{proof}

\begin{lemma}
\label{lem:g-function}
Let $\mu_1,\mu_2\in\mathbb{R}$ and suppose that $x\sim\mathcal{N}(\mu_1,\sigma^2)$. Define the random variable 
$
z \coloneqq |x-\mu_2|,
$ 
with cumulative distribution function
$$
F_{\sigma}(t) \coloneqq \mathbb{P}[z\le t] = \Phi\Bigl(\frac{t-|\mu_1-\mu_2|}{\sigma}\Bigr) - \Phi\Bigl(\frac{-t-|\mu_1-\mu_2|}{\sigma}\Bigr),\quad t\ge0,
$$
where $\Phi$ denotes the standard normal CDF. Then, for any $p\in (0,1)$ the inverse CDF satisfies
$$
F_{\sigma}^{-1}(p) = \sigma\,g^{-1}\Bigl(p;\frac{|\mu_1-\mu_2|^2}{2\sigma^2}\Bigr),
$$
where the function $g(u;\kappa)$ is defined by
$
g(u;\kappa) \coloneqq \Phi(u-\sqrt{2\kappa}) + \Phi(u+\sqrt{2\kappa}) - 1,
$
and $g^{-1}(p;\kappa)$ denotes the inverse function in $u$ satisfying $g(u;\kappa)=p$. In particular, when $\mu_1=\mu_2$ (so that $\kappa=0$) we have $g(u;0)=2\Phi(u)-1$ and thus
$
F_{0,1}^{-1}(p) = \Phi^{-1}\Bigl(\frac{p+1}{2}\Bigr).
$
\end{lemma}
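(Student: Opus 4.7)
The plan is to compute the CDF of $z = |x - \mu_2|$ explicitly, then rescale to recognize the stated $g$ function, and finally invert. Since $x \sim \mathcal{N}(\mu_1,\sigma^2)$, the event $\{z \le t\}$ is equivalent to $\{\mu_2 - t \le x \le \mu_2 + t\}$, so standardizing yields $F_{\sigma}(t) = \Phi\!\bigl(\tfrac{t + \mu_2 - \mu_1}{\sigma}\bigr) - \Phi\!\bigl(\tfrac{-t + \mu_2 - \mu_1}{\sigma}\bigr)$. Using the symmetry $\Phi(-x)=1-\Phi(x)$ together with the sign invariance of the argument (the distribution of $|x-\mu_2|$ depends on $\mu_1-\mu_2$ only through $|\mu_1-\mu_2|$), I can rewrite this as
\[
F_{\sigma}(t) = \Phi\!\Bigl(\tfrac{t-|\mu_1-\mu_2|}{\sigma}\Bigr) + \Phi\!\Bigl(\tfrac{t+|\mu_1-\mu_2|}{\sigma}\Bigr) - 1,
\]
which matches the CDF formula stated in the lemma.

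Next, I change variables by setting $u = t/\sigma$ and $\kappa = |\mu_1-\mu_2|^2/(2\sigma^2)$, so that $|\mu_1-\mu_2|/\sigma = \sqrt{2\kappa}$. Substituting into the previous display shows that
\[
F_{\sigma}(\sigma u) = \Phi(u - \sqrt{2\kappa}) + \Phi(u + \sqrt{2\kappa}) - 1 = g(u;\kappa).
\]
Because $\Phi$ is strictly increasing and continuous on $\mathbb{R}$, the map $u \mapsto g(u;\kappa)$ is strictly increasing, continuous, and maps $\mathbb{R}$ onto $(-1,1) \cap \mathrm{range}$, namely $(0,1)$ (it tends to $-1+0=-1$ only as $u \to -\infty$ with both terms vanishing, giving limit $-1$; actually the lower limit is $0-0+(-1)$... let me re-check: as $u\to-\infty$, both $\Phi$ terms $\to 0$, giving $-1$; as $u\to+\infty$, both $\to 1$, giving $1$; since $g$ is a valid CDF of a nonnegative random variable we restrict to $u\ge 0$ and obtain values in $[0,1)$). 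Thus $g(\cdot;\kappa)$ admits a well-defined inverse $g^{-1}(\cdot;\kappa)$ on $(0,1)$.

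Inverting the identity $F_{\sigma}(\sigma u) = g(u;\kappa)$ then yields, for any $p \in (0,1)$,
\[
F_{\sigma}^{-1}(p) = \sigma\, g^{-1}\!\Bigl(p;\tfrac{|\mu_1-\mu_2|^2}{2\sigma^2}\Bigr),
\]
as claimed. For the special case $\mu_1 = \mu_2$, I plug $\kappa=0$ into $g$ to get $g(u;0) = 2\Phi(u) - 1$, whose inverse is $g^{-1}(p;0) = \Phi^{-1}((p+1)/2)$, recovering the folded-normal quantile formula. The argument is essentially a direct computation; the only minor care is the sign/symmetry step that absorbs $\mu_1 - \mu_2$ into its absolute value, and verifying monotonicity of $g(\cdot;\kappa)$ in $u$ to guarantee that $g^{-1}$ is unambiguously defined.
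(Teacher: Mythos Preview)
Your proposal is correct and follows essentially the same approach as the paper: substitute $u=t/\sigma$ and $\kappa=|\mu_1-\mu_2|^2/(2\sigma^2)$, use the symmetry $\Phi(-x)=1-\Phi(x)$ to rewrite the CDF as $g(u;\kappa)$, and invert. The only differences are cosmetic---you derive the folded-normal CDF from scratch and explicitly check monotonicity of $g$, whereas the paper takes the CDF formula as given and leaves invertibility implicit.
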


\begin{proof}
Since $x\sim\mathcal{N}(\mu_1,\sigma^2)$, we have that
$
z=|x-\mu_2|
$ 
has CDF
$$
F_{\sigma}(t)=\Phi\Bigl(\frac{t-|\mu_1-\mu_2|}{\sigma}\Bigr)-\Phi\Bigl(\frac{-t-|\mu_1-\mu_2|}{\sigma}\Bigr),\quad t\ge0.
$$ 
Introduce the change of variable $u=\frac{t}{\sigma}$ so that $t=\sigma\,u$. Then,
$$
F_{\sigma}(\sigma\,u)=\Phi\Bigl(u-\frac{|\mu_1-\mu_2|}{\sigma}\Bigr)-\Phi\Bigl(-u-\frac{|\mu_1-\mu_2|}{\sigma}\Bigr).
$$ 
Using the symmetry $\Phi(-x)=1-\Phi(x)$, this becomes
$$
F_{\sigma}(\sigma\,u)=\Phi\Bigl(u-\frac{|\mu_1-\mu_2|}{\sigma}\Bigr)+\Phi\Bigl(u+\frac{|\mu_1-\mu_2|}{\sigma}\Bigr)-1.
$$ 
Defining $\kappa=\frac{|\mu_1-\mu_2|^2}{2\sigma^2}$ and setting 
$$
g(u;\kappa) \coloneqq \Phi(u-\sqrt{2\kappa})+\Phi(u+\sqrt{2\kappa})-1,
$$ 
we have $F_{\sigma}(\sigma\,u)=g(u;\kappa)$. Thus, if $u^*$ is the unique solution of $g(u^*;\kappa)=p$, then 
$$
F_{\sigma}(\sigma\,u^*)=p,
$$ 
so that 
$$
F_{\sigma}^{-1}(p)=\sigma\,u^*=\sigma\,g^{-1}(p;\kappa).
$$ 
In the special case $\mu_1=\mu_2$ (so that $\kappa=0$), we obtain $g(u;0)=2\Phi(u)-1$, whose inverse is given by $u=\Phi^{-1}((p+1)/2)$. Hence, $F_{1}^{-1}(p)=\Phi^{-1}((p+1)/2)$, as required.
\end{proof}

\samplesselection*
\begin{proof}
    We recall that $p_1 = \cN(\mu_1 , \sigma^2), p_2 = \cN(\mu_2 , \sigma^2) \in \cP$ and $p \coloneqq \cN(\mu , \sigma^2) \in \cP$ are univariate Gaussian distributions.
    We are given $n_1$ i.i.d. samples $x_1^{(1)}, \ldots, x_1^{(n_1)}$ from $p_1$ and $n_2$ i.i.d. samples $x_2^{(1)}, \ldots, x_2^{(n_2)}$  from $p_2$.
    
    The distance-based selection removes $f \leq n_1$ selected samples from the target distribution $p_1$ with the $f$ largest distances to $\hat{\mu}_2 \coloneqq \tfrac{1}{n_2} \sum_{i=1}^{n_2} x_2^{(i)}$ the empirical estimator of the mean of $p_2$.
    That is, denoting by $x_1^{(1:n_1)}, \ldots, x_1^{(n_1:n_1)}$ the original $n_1$ samples from $p_1$ reordered by increasing distance to $\hat{\mu}_2$:
    \begin{align}
        |x_1^{(1:n_1)} - \hat{\mu}_2|
        \leq  \ldots \leq |x_1^{(n_1:n_1)} - \hat{\mu}_2|,
    \end{align}
    with ties broken arbitrarily, then distance-based selection retains only $x_1^{(1:n_1)}, \ldots, x_1^{(n_1-f:n1)}$ to obtain
    \begin{equation}
        \label{eq:selection}
        \hat{\mu}_1 \coloneqq \tfrac{1}{n_1-f} \sum_{i=1}^{n_1-f} x_1^{(i:n_1)}.
    \end{equation}
  Subsequently, we set the center $\mu$ of the unlearned distribution $p$ to be:
    \begin{equation}
        \mu = \frac{(n_1-f) \hat{\mu}_1 + n_2 \hat{\mu}_2}{n_1-f + n_2},
    \end{equation}
    where $\hat{\mu}_1 = \tfrac{1}{n_1-f} \sum_{i=1}^{n_1-f} x_1^{(i:n_1)}$ and $\hat{\mu}_2 = \tfrac{1}{n_2} \sum_{i=1}^{n_2} x_2^{(i)}$.
    We also observe that a standard Hoeffding bound (Lemma~\ref{lem:hoeffding}) yields that:
    \begin{align}
        \left|\hat{\mu}_2 - \mu_2\right| \leq \sigma \sqrt{\frac{2\ln(4/\delta)}{n_2}},
        \label{eq:hoeffding2}
    \end{align}
with probability $1- \tfrac{\delta}{2}$.
We also recall that
\begin{equation}
        \KL( p_1 \parallel p ) = \frac{(\mu_1 - \mu)^2}{2\sigma^2}, \quad \KL( p_2 \parallel p ) = \frac{(\mu_2 - \mu)^2}{2\sigma^2}.
        \label{eq:kl-exp-bis}
    \end{equation}

\paragraph{Preservation bound.}
First, we upper bound the KL divergence of $p_2$ from $p$.
To do so, we first use the triangle inequality to get
\begin{align*}
    |\mu - \mu_2|
    &= \left|\frac{(n_1-f) \hat{\mu}_1 + n_2 \hat{\mu}_2}{n_1-f + n_2} - \mu_2 \right| = \left|\frac{n_1-f}{n_1-f + n_2} (\hat{\mu}_1 - \mu_2) + \frac{n_2}{n_1-f + n_2} (\hat{\mu}_2 - \mu_2) \right|\\
    &\leq 
    \frac{n_1-f}{n_1-f + n_2} \left|\hat{\mu}_1 - \mu_2\right| + \frac{n_2}{n_1-f + n_2} \left|\hat{\mu}_2 - \mu_2\right|.
\end{align*}
Therefore, using~\eqref{eq:hoeffding2} we have with probability $1-\tfrac{\delta}{2}$:
\begin{align*}
    |\mu - \mu_2|
    &\leq \frac{n_1-f}{n_1-f + n_2} \left|\hat{\mu}_1 - \mu_2\right|+ \frac{n_2}{n_1-f + n_2} \sigma \sqrt{\frac{2\ln(4/\delta)}{n_2}}.
\end{align*}
Moreover, we know from Lemma~\ref{lem:bias_selection} that with probabilty $1-\tfrac{\delta}{2}$
\begin{equation*}
    \left|\hat{\mu}_1 - \mu_2\right| \leq F^{-1}\left(1-\frac{f}{n_1} + \sqrt{\frac{\ln(4/\delta)}{2n_1}}\right).
\end{equation*}
Using the above in the previous inequality with a union bound, yields that with probability $1-\delta$
\begin{align*}
    |\mu - \mu_2|
    &\leq \frac{n_1-f}{n_1-f + n_2} F^{-1}\left(1-\frac{f}{n_1} + \sqrt{\frac{\ln(4/\delta)}{2n_1}}\right)+ \frac{n_2}{n_1-f + n_2} \sigma \sqrt{\frac{2\ln(4/\delta)}{n_2}}.
\end{align*}
We can further simplify the above using Lemma~\ref{lem:g-function}, which implies that for all $p>0$
\begin{equation*}
    F^{-1}(p) = \sigma\,g^{-1}\Bigl(p;\frac{|\mu_1-\mu_2|^2}{2\sigma^2}\Bigr),
\end{equation*}
where the function $g$ is defined by
$
g(u;\kappa) \coloneqq \Phi(u-\sqrt{2\kappa}) + \Phi(u+\sqrt{2\kappa}) - 1$, for $u,\kappa>0$.
Plugging this in the previous bound yields
\begin{align}
    |\mu - \mu_2|
    &\leq \frac{n_1-f}{n_1-f + n_2} \sigma\,g^{-1}\Bigl(1-\frac{f}{n_1} + \sqrt{\frac{\ln(4/\delta)}{2n_1}};\frac{|\mu_1-\mu_2|^2}{2\sigma^2}\Bigr)+ \frac{n_2}{n_1-f + n_2} \sigma \sqrt{\frac{2\ln(4/\delta)}{n_2}}.
    \label{eq:last_bound_mu2}
\end{align}

Dividing both sides by $\sigma\sqrt{2}$ and then using~\eqref{eq:kl-exp-bis} yields with probability $1-\delta$:
\begin{align}
\sqrt{\KL( p_2 \parallel p )}
    &\leq \frac{n_1-f}{(n_1-f + n_2)\sqrt{2}} g^{-1}\Bigl(1-\frac{f}{n_1} + \sqrt{\frac{\ln(4/\delta)}{2n_1}}; \KL( p_1 \parallel p_2 )\Bigr) + \frac{\sqrt{n_2 \log(4/\delta)}}{n_1-f + n_2}.
\end{align}
The above directly implies, by taking squares and using Jensen's inequality and that $f \geq 0$, that with probability $1-\delta$:
\begin{align}
\KL( p_2 \parallel p )
    &\leq \left(\frac{n_1-f}{n_2}
    \right)^2g^{-1}\Bigl(1-\frac{f}{n_1} + \sqrt{\frac{\ln(4/\delta)}{2n_1}}; \KL( p_1 \parallel p_2 )\Bigr)^2 + \frac{2\ln(4/\delta)}{n_2}.
\end{align}

\paragraph{Removal bound.}
Second, we lower bound the KL divergence of $p_1$ from $p$.
To do so, we use Jensen's inequality and~\eqref{eq:last_bound_mu2} to obtain that, with probability $1-\delta$, we have
\begin{align*}
    |\mu_1 - \mu_2|^2 &= |\mu_1 - \mu + \mu - \mu_2|^2
     \leq 2|\mu_1 - \mu|^2 + 2|\mu - \mu_2|^2\\
    &\leq 2|\mu_1 - \mu|^2 + 2\left(\frac{n_1-f}{n_1-f + n_2}\right)^2 \sigma^2\,g^{-1}\Bigl(1-\frac{f}{n_1} + \sqrt{\frac{\ln(4/\delta)}{2n_1}};\frac{|\mu_1-\mu_2|^2}{2\sigma^2}\Bigr)^2 \\
    &\quad + \left(\frac{n_2}{n_1-f + n_2}\right)^2 \sigma^2 {\frac{4\ln(4/\delta)}{n_2}}.
\end{align*}
Rearranging terms, using that $f \geq 0$, and dividing by $4\sigma^2$ along with~\eqref{eq:kl-exp-bis} yields that with probability $1-\delta$ we have
\begin{align*}
    \KL( p_1 \parallel p )
    &= \frac{|\mu_1 - \mu|^2}{2\sigma^2}
    \geq \frac{|\mu_1 - \mu_2|^2}{4\sigma^2} - \frac{1}{2}\left(\frac{n_1-f}{n_2}\right)^2 g^{-1}\Bigl(1-\frac{f}{n_1} + \sqrt{\frac{\ln(4/\delta)}{2n_1}};\frac{|\mu_1-\mu_2|^2}{2\sigma^2}\Bigr)^2 -  {\frac{\ln(4/\delta)}{n_2}}\\
    &= \frac{1}{2}\KL( p_1 \parallel p_2 ) - \frac{1}{2}\left(\frac{n_1-f}{n_2}\right)^2 g^{-1}\Bigl(1-\frac{f}{n_1} + \sqrt{\frac{\ln(4/\delta)}{2n_1}};\frac{|\mu_1-\mu_2|^2}{2\sigma^2}\Bigr)^2 -  {\frac{\ln(4/\delta)}{n_2}}.
\end{align*}
Now, using~\eqref{eq:kl-exp-bis} we get
\begin{align*}
    \KL( p_1 \parallel p ) \geq \frac{1}{2}\KL( p_1 \parallel p_2 ) - \frac{1}{2}\left(\frac{n_1-f}{n_2}\right)^2 g^{-1}\Bigl(1-\frac{f}{n_1} + \sqrt{\frac{\ln(4/\delta)}{2n_1}};\KL( p_1 \parallel p_2 )\Bigr)^2 -  {\frac{\ln(4/\delta)}{n_2}}
\end{align*}

\paragraph{Conclusion.}
With probability $1-\delta$, we have $(\alpha, \varepsilon)$-distributional unlearning with
\begin{align*}
    \alpha &\geq \frac{1}{2}\KL( p_1 \parallel p_2 ) - \frac{1}{2}\left(\frac{n_1-f}{n_2}\right)^2 g^{-1}\Bigl(1-\frac{f}{n_1} + \sqrt{\frac{\ln(4/\delta)}{2n_1}};\KL( p_1 \parallel p_2 )\Bigr)^2 -  {\frac{\ln(4/\delta)}{n_2}},\\
    \varepsilon &\leq  \left(\frac{n_1-f}{n_2}
    \right)^2g^{-1}\Bigl(1-\frac{f}{n_1} + \sqrt{\frac{\ln(4/\delta)}{2n_1}}; \KL( p_1 \parallel p_2 )\Bigr)^2 + \frac{2\ln(4/\delta)}{n_2}.
\end{align*}
\end{proof}

\subsection{Simplified Sample Complexity for Selective Removal}
\label{app:simple-selective}

In this section, we can simplify the result of Theorem~\ref{th:selective} on selective removal by simplifying cumbersome terms. This leads to Corollary~\ref{cor:simple}, which then yields to the sample complexity results in Table~\ref{tab:id}.

We first prove an upper bound on the inverse CDF of a folded Normal for small quantiles.

\begin{lemma}
\label{lem:bound_quantile}
Let $\mu_1, \mu_2 \in \R, \sigma>0$, and $x \sim \mathcal{N}(\mu_1,\sigma^2)$.
Define the random variable
$z = |x-\mu_2|$, which follows a folded normal distribution
whose cumulative distribution function (CDF) is given by
\begin{equation}
    F(t) \coloneqq \mathbb{P}[z\le t] = \Phi\Bigl(\frac{t-|\mu_1 - \mu_2|}{\sigma}\Bigr) - \Phi\Bigl(\frac{-t-|\mu_1 - \mu_2|}{\sigma}\Bigr), \quad t\ge0,
    \label{eq:CDF_folded}
\end{equation}
where \(\Phi\) denotes the standard normal CDF. 
Then, for any $p>0$ such that $F^{-1}(p) \le |\mu_1 - \mu_2|$, it holds that:
\[
F^{-1}(p)\le \frac{\sigma\,p}{\varphi\Bigl(\frac{|\mu_1 - \mu_2|}{\sigma}\Bigr)},
\]
where $
\varphi(x) \coloneqq \frac{1}{\sqrt{2\pi}} \exp\Bigl(-\frac{x^2}{2}\Bigr), x \in \R$,
is the standard normal density.
\end{lemma}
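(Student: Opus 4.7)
The plan is to convert the statement, which is about the inverse CDF $F^{-1}(p)$, into a lower bound on $F(t)$ at an arbitrary point $t\le |\mu_1-\mu_2|$, and then specialize to $t=F^{-1}(p)$. Let $\Delta := |\mu_1-\mu_2|$. I first rewrite the CDF in~\eqref{eq:CDF_folded} using the symmetry $\Phi(-x)=1-\Phi(x)$ as $F(t)=\Phi((t-\Delta)/\sigma)+\Phi((t+\Delta)/\sigma)-1$, so that $F(0)=0$ and $F$ is absolutely continuous with density
\[
f(t)=\frac{1}{\sigma}\,\varphi\!\left(\frac{t-\Delta}{\sigma}\right)+\frac{1}{\sigma}\,\varphi\!\left(\frac{t+\Delta}{\sigma}\right), \qquad t\ge 0.
\]

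Next, I lower bound $f$ uniformly on the interval $[0,\Delta]$. The second term is non-negative, so it can be dropped. For the first term, since $\varphi$ is even and strictly decreasing on $[0,\infty)$, and since for $s\in[0,\Delta]$ one has $(\Delta-s)/\sigma\in[0,\Delta/\sigma]$, I get $\varphi((s-\Delta)/\sigma)=\varphi((\Delta-s)/\sigma)\ge \varphi(\Delta/\sigma)$. Hence $f(s)\ge \varphi(\Delta/\sigma)/\sigma$ for all $s\in[0,\Delta]$, which is the key monotonicity observation.

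Integrating this pointwise bound from $0$ to any $t\in[0,\Delta]$ and using $F(0)=0$ yields
\[
F(t)=\int_0^t f(s)\,ds\;\ge\;\frac{t\,\varphi(\Delta/\sigma)}{\sigma}.
\]
To finish, I set $t^\star := F^{-1}(p)$, which by hypothesis lies in $[0,\Delta]$, so the display above applies and gives $p=F(t^\star)\ge t^\star\,\varphi(\Delta/\sigma)/\sigma$; rearranging yields exactly $F^{-1}(p)\le \sigma p/\varphi(\Delta/\sigma)$.

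The only subtle step is the uniform lower bound on $f$ over $[0,\Delta]$, since naively one might worry that $\varphi((s-\Delta)/\sigma)$ could be smaller than $\varphi(\Delta/\sigma)$; this is ruled out precisely because $|s-\Delta|\le \Delta$ on $[0,\Delta]$, which is why the hypothesis $F^{-1}(p)\le \Delta$ is needed. Everything else is a routine computation with the folded normal density and the fundamental theorem of calculus.
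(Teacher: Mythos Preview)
Your proof is correct and follows essentially the same approach as the paper: both arguments lower bound the folded-normal density $F'(t)$ uniformly on $[0,\Delta]$ by $\varphi(\Delta/\sigma)/\sigma$ via the same observation (drop the second summand, use evenness and monotonicity of $\varphi$), and then convert this into the desired bound on $F^{-1}(p)$. The only cosmetic difference is that the paper invokes the Mean Value Theorem to write $p=F'(\xi)\,F^{-1}(p)$ for some $\xi\in[0,F^{-1}(p)]$, whereas you integrate the pointwise density bound directly; these are equivalent here.
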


\begin{proof}
Since \(F\), defined in~\eqref{eq:CDF_folded}, is continuously differentiable and strictly increasing on \([0,|\mu_1-\mu_2|]\) (with \(F(0)=0\)) and by the assumption \(F^{-1}(p)\le |\mu_1-\mu_2|\), the Mean Value Theorem guarantees that there exists some \(\xi\in [0,F^{-1}(p)]\) such that
\[
F\Bigl(F^{-1}(p)\Bigr) = F(0) + F'(\xi)\Bigl(F^{-1}(p)-0\Bigr).
\]
Since \(F(0)=0\) and \(F\) is strictly increasing, one may directly write, via the Mean Value Theorem, that there exists \(\xi\in [0,F^{-1}(p)]\) with
\[
F\Bigl(F^{-1}(p)\Bigr) = F'(\xi)\,F^{-1}(p).
\]
By definition of the inverse CDF, \(F\Bigl(F^{-1}(p)\Bigr)=p\); hence,
\[
p = F'(\xi)\,F^{-1}(p).
\]
It remains to lower-bound \(F'(\xi)\) for \(\xi\in [0,|\mu_1-\mu_2|]\).
We recall that for \(t\ge 0\),
\[
F(t) = \Phi\Bigl(\frac{t-|\mu_1-\mu_2|}{\sigma}\Bigr) - \Phi\Bigl(\frac{-t-|\mu_1-\mu_2|}{\sigma}\Bigr).
\]
Taking the derivative with respect to \(t\) yields
\begin{align*}
    F'(t) &= \frac{1}{\sigma}\,\varphi\Bigl(\frac{t-|\mu_1-\mu_2|}{\sigma}\Bigr) + \frac{1}{\sigma}\,\varphi\Bigl(\frac{-t-|\mu_1-\mu_2|}{\sigma}\Bigr) \\
    &= \frac{1}{\sigma}\,\varphi\Bigl(\frac{t-|\mu_1-\mu_2|}{\sigma}\Bigr) + \frac{1}{\sigma}\,\varphi\Bigl(\frac{t+|\mu_1-\mu_2|}{\sigma}\Bigr),
\end{align*}
where we used that the standard normal density $\phi$ is symmetric.

For \(t\in [0,|\mu_1-\mu_2|]\), observe that $
t-|\mu_1-\mu_2| \le 0$.
Because the standard normal density is symmetric and nonincreasing on \([0,\infty)\), we have
\[
\varphi\Bigl(\frac{t-|\mu_1-\mu_2|}{\sigma}\Bigr) = \varphi\Bigl(\frac{|\mu_1-\mu_2|-t}{\sigma}\Bigr) \ge \varphi\Bigl(\frac{|\mu_1-\mu_2|}{\sigma}\Bigr).
\]
Also, the second term \(\varphi\Bigl(\frac{t+|\mu_1-\mu_2|}{\sigma}\Bigr)\) is nonnegative. Hence, for all \(t\in [0,|\mu_1-\mu_2|]\) we have
\[
F'(t) \ge \frac{1}{\sigma}\,\varphi\Bigl(\frac{|\mu_1-\mu_2|}{\sigma}\Bigr).
\]
In particular, at \(t = \xi\) we obtain
\[
F'(\xi) \ge \frac{1}{\sigma}\,\varphi\Bigl(\frac{|\mu_1-\mu_2|}{\sigma}\Bigr).
\]
Substituting this lower bound into the equation \(p = F'(\xi)\,F^{-1}(p)\) yields
\[
p \ge \frac{F^{-1}(p)}{\sigma}\,\varphi\Bigl(\frac{|\mu_1-\mu_2|}{\sigma}\Bigr).
\]
Rearranging the inequality gives the desired upper bound:
\[
F^{-1}(p) \le \frac{\sigma\,p}{\varphi\Bigl(\frac{|\mu_1-\mu_2|}{\sigma}\Bigr)}.
\]
This completes the proof.
\end{proof}

We are now ready to prove the corollary below. Note that retaining dependences on $\alpha, \varepsilon$  only in this corollary leads to the result of Table~\ref{tab:id}.
\begin{corollary}
\label{cor:simple}
Let $p_1, p_2 \in \mathcal{P}$ and $\delta \in (0,1)$.
Let $f$ samples from $p_1$ be removed according to our distance-based scoring rule or at random before MLE. Then in each case, with probability at least $1 - \delta$, the resulting estimate satisfies $(\alpha, \varepsilon)$-distributional unlearning if:
\begin{enumerate}
    \item Random removal: $n_2 \geq \tfrac{12\ln(4/\delta)}{\min{\left\{\varepsilon, \alpha\right\}}}$ and $\KL( p_1 \parallel p_2 ) \geq 8\alpha$, and
    \begin{align*}
    f &\geq n_1- n_2 \sqrt{\frac{2 \KL( p_1 \parallel p_2 ) - \alpha}{12 \KL( p_1 \parallel p_2 )}}, &
    f &\geq n_1 -  n_2 \min{\left\{1, \sqrt{\frac{\varepsilon}{6\KL( p_1 \parallel p_2 )}} \right\}}.
    \end{align*}
    \item Selective removal: $n_2 \geq 2\ln(4/\delta) \max{\left\{ \tfrac{1}{\varepsilon}, \frac{1}{\sqrt{\varepsilon}}, \tfrac{1}{\alpha}, \sqrt{\KL( p_1 \parallel p_2 ) - 4\alpha} \right\}}$, $\KL( p_1 \parallel p_2 ) \geq 4\alpha$, and $f \geq n_1 \left(   \tfrac{3}{2} + \sqrt{\tfrac{\ln(4/\delta)}{2n_1}} - \Phi(2\sqrt{2 \KL( p_1 \parallel p_2 )})\right)$, and
    \begin{align*}
    f &\geq n_1 - \sqrt{n_1 n_2} \left( \frac{\varepsilon}{16 \pi} \right)^{1/4} \exp(-\KL( p_1 \parallel p_2 )),\\
    &f \geq n_1 - \sqrt{n_1 n_2} \left( \frac{\KL( p_1 \parallel p_2 ) - 4\alpha}{8 \pi} \right)^{1/4} \exp(-\KL( p_1 \parallel p_2 )).
    \end{align*}
\end{enumerate}
\end{corollary}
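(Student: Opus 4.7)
The plan is to derive the simplified sufficient conditions on $f$ by taking the exact high-probability bounds of Theorems~\ref{th:random} and~\ref{th:selective} and performing three operations in sequence: (i) use the lower bound on $n_2$ to absorb every additive $\log(4/\delta)/n_2$ noise term into a fixed fraction of the target $\alpha$ or $\varepsilon$; (ii) in the selective case, replace the intricate $g^{-1}(\cdot;\KL(p_1\|p_2))$ quantile factor by a closed-form linear upper bound obtained from Lemma~\ref{lem:bound_quantile}; and (iii) algebraically solve the resulting polynomial inequality in $(n_1-f)$ and rearrange to get a lower bound on $f$.

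For the random case, I would work directly from the Theorem~\ref{th:random} inequality $\alpha \leq (1/2 - 3\rho^2)\KL(p_1\|p_2) - (3\log(4/\delta)/(2n_2))(1+\rho)$, where $\rho := (n_1-f)/n_2 \in [0,1]$. The assumption $n_2 \geq 12\log(4/\delta)/\min(\alpha,\varepsilon)$ bounds the noise term by a small multiple of $\alpha$, leaving a constraint essentially of the form $(1/2 - 3\rho^2)\KL(p_1\|p_2) \gtrsim \alpha$. Combining this with $\KL(p_1\|p_2) \geq 8\alpha$ rearranges to the stated $f \geq n_1 - n_2 \sqrt{(2\KL-\alpha)/(12\KL)}$. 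The preservation inequality is handled identically, yielding the companion bound.

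For the selective case the new ingredient is Lemma~\ref{lem:bound_quantile}. I would first verify its precondition $F^{-1}(p)\leq |\mu_1-\mu_2|$, which by Lemma~\ref{lem:g-function} reduces to $p \leq g(\sqrt{2\KL};\KL) = \Phi(2\sqrt{2\KL})-1/2$ using $\Phi(0)=1/2$. Setting $p = 1-f/n_1 + \sqrt{\log(4/\delta)/(2n_1)}$ and solving this inequality for $f$ produces exactly the stated threshold $f \geq n_1(3/2 + \sqrt{\log(4/\delta)/(2n_1)} - \Phi(2\sqrt{2\KL(p_1\|p_2)}))$. Under this precondition the lemma yields $g^{-1}(p;\KL) \leq p/\varphi(\sqrt{2\KL}) = \sqrt{2\pi}\,p\,e^{\KL}$; substituting into Theorem~\ref{th:selective} the dominant term in $\varepsilon$ becomes $((n_1-f)/n_2)^2 \cdot 2\pi p^2 e^{2\KL}$. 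Bounding $p$ by a constant multiple of $(n_1-f)/n_1$ converts this into a quartic in $(n_1-f)$, which inverts to the stated $\sqrt{n_1 n_2}$ scaling. The condition $\KL \geq 4\alpha$ is what lets the leading $\KL/2$ term in the removal bound absorb both $\alpha$ and the quantile correction.

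The main obstacle is pure constant bookkeeping: carefully tracking the $(1+\rho)$ factor in the random noise term, the $\sqrt{\log(4/\delta)/(2n_1)}$ correction inside $p$, and ensuring the $\max$ over $\{1/\varepsilon,1/\sqrt\varepsilon,1/\alpha,\sqrt{\KL-4\alpha}\}$ in the selective condition is the precise product of these absorption steps. No conceptually new idea is required beyond combining Theorems~\ref{th:random} and~\ref{th:selective} with Lemmas~\ref{lem:g-function} and~\ref{lem:bound_quantile}; the challenge is arithmetic hygiene rather than insight.
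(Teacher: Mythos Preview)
Your proposal is correct and follows essentially the same route as the paper's proof: start from the bounds of Theorems~\ref{th:random} and~\ref{th:selective}, use the $n_2$ hypotheses to absorb the additive $\ln(4/\delta)/n_2$ terms, in the selective case invoke Lemma~\ref{lem:bound_quantile} (via Lemma~\ref{lem:g-function}) to linearize $g^{-1}$ after checking its precondition, and then invert for $f$. Your derivation of the precondition threshold $f \geq n_1(3/2 + \sqrt{\ln(4/\delta)/(2n_1)} - \Phi(2\sqrt{2\KL}))$ from $p \leq g(\sqrt{2\KL};\KL)=\Phi(2\sqrt{2\KL})-1/2$ is exactly what the paper does (though the paper leaves this step implicit), and the final ``direct calculations'' the paper invokes are precisely the quartic inversion in $(n_1-f)$ you describe.
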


\begin{proof}
We treat random and selective removal separately below.

\paragraph{Random Removal.}
        Consider removing $f$ samples using the random removal mechanism, before maximum likelihood estimation. From Theorem~\ref{th:random}, with probability $1-\delta$, we achieve $(\alpha,\varepsilon)$-unlearning with:
    \begin{align*}
    \alpha &\geq \left(\frac{1}{2}- 3\left(\frac{n_1-f}{n_2}\right)^2 \right) \KL( p_1 \parallel p_2 )   - \frac{3\ln(4/\delta)}{2n_2} \left(1+\frac{n_1-f}{n_2}\right) \quad &\text{(removal)},\\
    \varepsilon &\leq  3\left(\frac{n_1-f}{n_2}\right)^2 \KL( p_1 \parallel p_2 )  + \frac{3\ln(4/\delta)}{n_2} \left(1+\frac{n_1-f}{n_2}\right)\quad &\text{(preservation)}.
    \end{align*}
Therefore, assuming that $n_2 \geq \tfrac{12\ln(4/\delta)}{\min{\left\{\varepsilon, \alpha\right\}}}$ and $\KL( p_1 \parallel p_2 ) \geq 8\alpha$, direct calculations show that it is sufficient to set:
\begin{align*}
    f &\geq n_1 - n_2 \sqrt{\frac{2 \KL( p_1 \parallel p_2 ) - \alpha}{12 \KL( p_1 \parallel p_2 )}} \quad &\text{(removal)},\\
    f &\geq n_1 -  n_2 \min{\left\{1, \sqrt{\frac{\varepsilon}{6\KL( p_1 \parallel p_2 )}} \right\}}\quad &\text{(preservation)}.
    \end{align*}

\paragraph{Selective Removal.}
For selective removal, Theorem~\ref{th:selective} shows that, with probability $1-\delta$, we achieve $(\alpha,\varepsilon)$-unlearning with:
    \begin{align*}
    \alpha &\geq \frac{1}{2}\KL( p_1 \parallel p_2 ) - \frac{1}{2}\left(\frac{n_1-f}{n_2}\right)^2 g^{-1}\Bigl(1-\frac{f}{n_1} + \sqrt{\frac{\ln(4/\delta)}{2n_1}};\KL( p_1 \parallel p_2 )\Bigr)^2 -  {\frac{\ln(4/\delta)}{n_2}} \quad &\text{(removal)},\\
    \varepsilon &\leq  \left(\frac{n_1-f}{n_2}
    \right)^2g^{-1}\Bigl(1-\frac{f}{n_1} + \sqrt{\frac{\ln(4/\delta)}{2n_1}}; \KL( p_1 \parallel p_2 )\Bigr)^2 + \frac{2\ln(4/\delta)}{n_2}\quad &\text{(preservation)}.
\end{align*}

We can simplify these bounds using Lemma~\ref{lem:bound_quantile}. Indeed, thanks to the latter and using the same notation and a simple change of variable, we have for all $p,\kappa>0$ such that $p \leq F(|\mu_1 - \mu_2|)$
\begin{equation}
    g^{-1}\Bigl(p;\kappa \Bigr) \leq \frac{p}{\phi(\sqrt{2\kappa})} = p \sqrt{2\pi} \exp(\kappa).
\end{equation}
Now, we plug in $p = 1-\tfrac{f}{n_1} + \sqrt{\tfrac{\ln(4/\delta)}{2n_1}}$. Assuming $f \geq n_1 \left( -\Phi(2\sqrt{2 \KL( p_1 \parallel p_2 )}) + \tfrac{3}{2} + \sqrt{\tfrac{\ln(4/\delta)}{2n_1}}\right)$, which directly implies that $p \leq F(|\mu_1 - \mu_2|)$ as required, we then obtain
\begin{align*}
    g^{-1}\Bigl(1-\frac{f}{n_1} + \sqrt{\frac{\ln(4/\delta)}{2n_1}}; \KL( p_1 \parallel p_2 )\Bigr) \leq  \Bigl(1-\frac{f}{n_1} + \sqrt{\frac{\ln(4/\delta)}{2n_1}}\Bigr) \sqrt{2 \pi } \exp\left(\KL( p_1 \parallel p_2 )\right).
\end{align*}

Plugging the above back in the first bounds due to Theorem~\ref{th:selective}, we obtain:
    \begin{align*}
    \alpha &\geq \frac{1}{2}\KL( p_1 \parallel p_2 ) - \frac{1}{2}\left(\frac{n_1-f}{n_2}\right)^2 \Bigl(1-\frac{f}{n_1} + \sqrt{\frac{\ln(4/\delta)}{2n_1}}\Bigr)^2 2 \pi \exp\left(2\KL( p_1 \parallel p_2 )\right) -  {\frac{\ln(4/\delta)}{n_2}},\\
    \varepsilon &\leq  \left(\frac{n_1-f}{n_2}
    \right)^2 \Bigl(1-\frac{f}{n_1} + \sqrt{\frac{\ln(4/\delta)}{2n_1}}\Bigr)^2 2 \pi \exp\left(2\KL( p_1 \parallel p_2 )\right) + \frac{2\ln(4/\delta)}{n_2}.
\end{align*}

Therefore, assuming that $n_2 \geq 2\ln(4/\delta) \max{\left\{ \tfrac{1}{\varepsilon}, \frac{1}{\sqrt{\varepsilon}}, \tfrac{1}{\alpha}, \sqrt{\KL( p_1 \parallel p_2 ) - 4\alpha} \right\}}$ and $\KL( p_1 \parallel p_2 ) \geq 4\alpha$, and recalling we had assumed $f \geq n_1 \left( -\Phi(2\sqrt{2 \KL( p_1 \parallel p_2 )}) + \tfrac{3}{2} + \sqrt{\tfrac{\ln(4/\delta)}{2n_1}}\right)$, direct calculations show that it is sufficient to set:
\begin{align*}
    f &\geq n_1 - \sqrt{n_1 n_2} \left( \frac{\varepsilon}{16 \pi} \right)^{1/4} \exp(-\KL( p_1 \parallel p_2 )) \quad &\text{(removal)},\\
    f &\geq n_1 -  \sqrt{n_1 n_2} \left( \frac{\KL( p_1 \parallel p_2 ) - 4\alpha}{8 \pi} \right)^{1/4} \exp(-\KL( p_1 \parallel p_2 )) \quad &\text{(preservation)}.
    \end{align*}
\end{proof}

\section{Experimental Details and Additional Results}
\label{app:empirical}

\begin{figure}[t]
  \centering
  \begin{subfigure}{0.48\linewidth}
    \includegraphics[width=\linewidth]{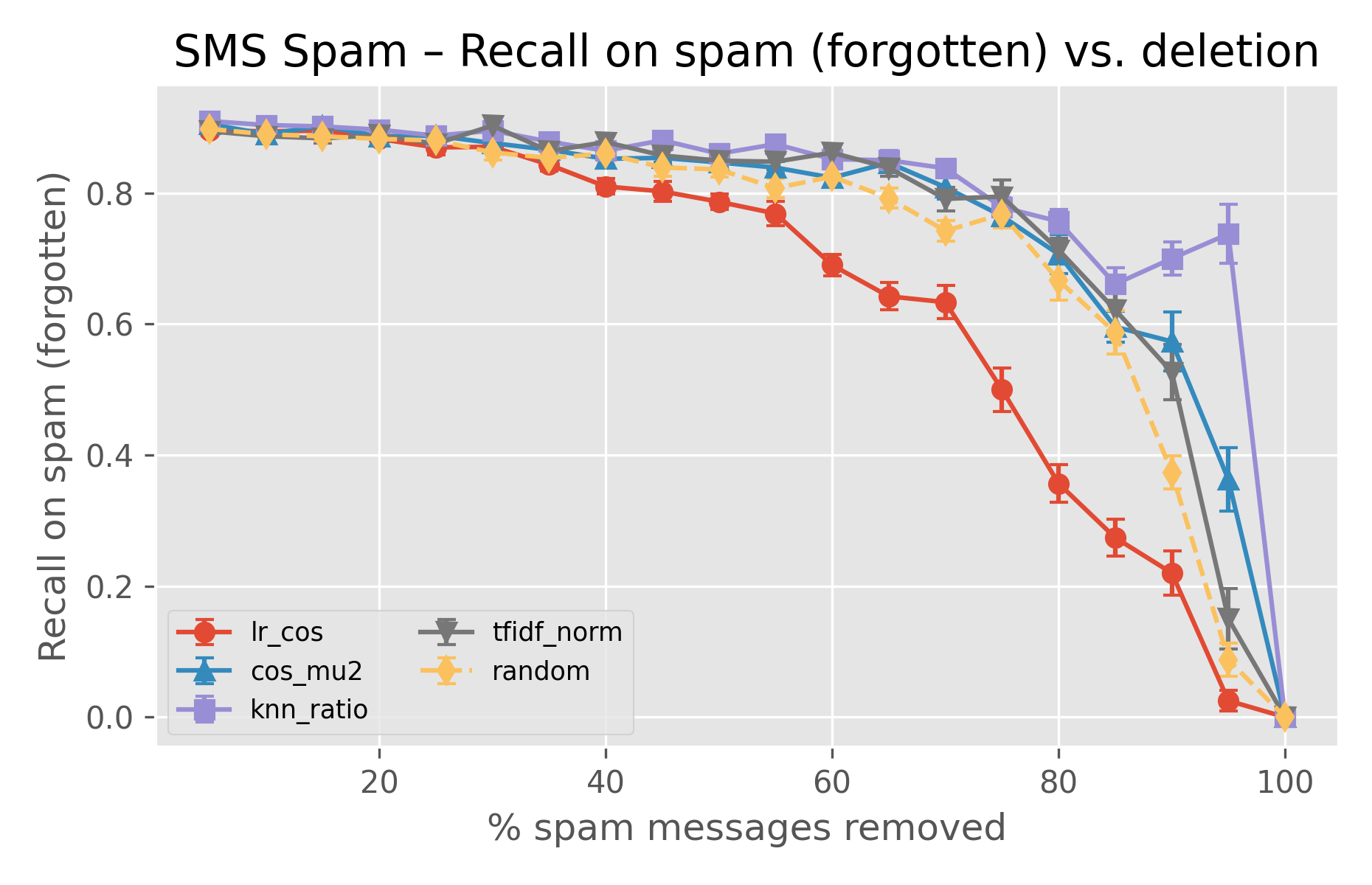}
    \caption{Recall on spam (\(p_{1}\), forgotten).}
    \label{fig:sms-spam-recall}
  \end{subfigure}\hfill
  \begin{subfigure}{0.48\linewidth}
    \includegraphics[width=\linewidth]{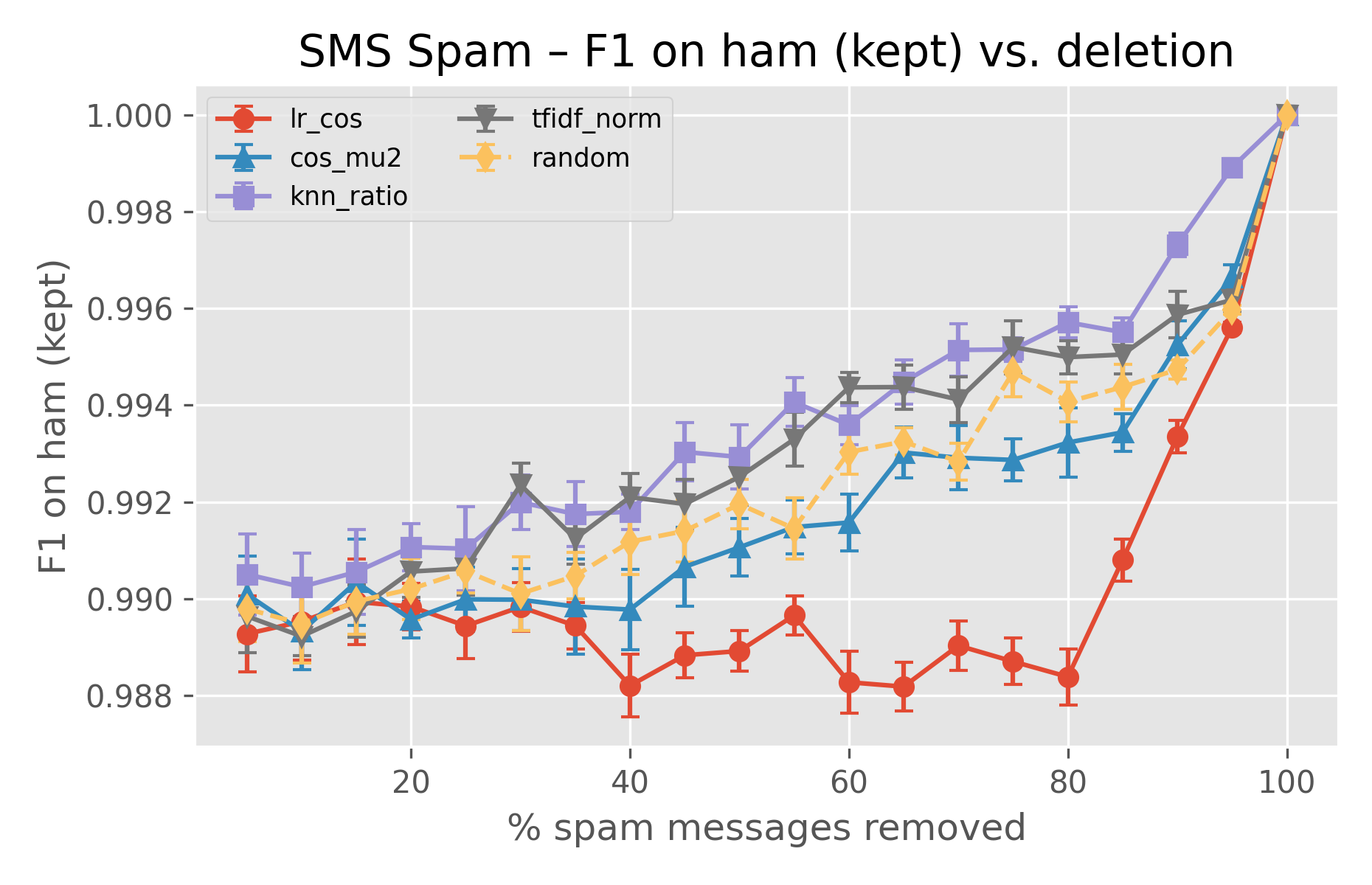}
    \caption{Macro-\emph{F1} on ham (\(p_{2}\), kept).}
    \label{fig:sms-ham-f1}
  \end{subfigure}

  \caption{\textbf{SMS Spam.}
           The likelihood-ratio score
           (\textsc{lr-cos}) pushes spam recall below
           \(0.6\) after deleting  \(70\%\) of spam, whereas random
           deletion needs nearly \(90\%\) removal to reach the same
           point.  Ham performance remains almost flat
           (${<}0.004$ absolute change) until the final 100 \% budget,
           affirming the tight preservation guarantee. Error bars: $\pm 1$\, standard error over ten seeds.}
  \label{fig:sms-results}
\end{figure}

\subsection{SMS Spam: a content-moderation unlearning task}
\label{sec:sms-spam}

We revisit the UCI SMS Spam Collection~\citep{almeida2011contributions}, treating the
\emph{spam} class (\(p_{1}\)) as information to forget and the
\emph{ham} class (\(p_{2}\)) as information to preserve.
Messages are vectorised with 
TF–IDF features.
Deletion budgets again span \(5\%\) to \(100\%\) of the spam slice in
5-point increments.
We compare the same five scoring rules as before
(\textsc{cos-mu2}, \textsc{lr-cos}, \textsc{knn-ratio},
\textsc{tfidf-norm}, \textsc{random})
and average results over ten random seeds.
Metrics reported are \emph{recall} on \(p_{1}\)
and macro-\emph{F1} on \(p_{2}\).

\textbf{Findings.}
We report our main findings on our SMS Spam task in Figure~\ref{fig:sms-results}.
We observe that spam recall decays gradually until \(75\!-\!80\%\) deletion, after which
all methods converge to zero as \(p_{1}\) vanishes.
\textsc{lr-cos} consistently dominates:
it reaches a recall of \(0.60\) at the 70\% deletion budget, whereas random
deletion does not cross that threshold until 90\% deletion.
Throughout, ham macro-\(F_{1}\) \emph{increases} slightly
(see Fig.~\ref{fig:sms-ham-f1}),
an artefact of class-imbalance―removing spam reduces false positives in
the ham slice—yet the difference across methods never exceeds
\(0.002\).
These results strengthen the evidence that selective deletion offers a
\(1.3{-}1.5\times\) sample-efficiency gain over random removal while
preserving downstream utility almost perfectly.

\begin{remark}[Sample vs. Computational Efficiency in Unlearning]
\label{rk:efficiency}
The computational cost of sample-level unlearning typically scales with the size of the forget set.
For instance, influence-function-based approaches, such as the one proposed by~\citet{guo2020certified}, require computing gradients and Hessian-vector products for forget samples to approximate a second-order update. Other methods, like SalUn~\citep{fan2024salun}, identify and modify salient model parameters by backpropagating through the network for each forget sample.
Reducing the size of the forget set implies less gradient computations and hence time savings.
This forget size scaling observation also holds for the large class certified unlearning methods based on noise addition~\citep{chourasia2023forget,allouah2024utility,waerebeke2025when}.
\end{remark}

\subsection{Experimental Details}
\subsubsection{Heuristic Deletion Strategies}
\label{app:selections}

\begin{figure}[t]
    \centering
    \includegraphics[width=\linewidth]{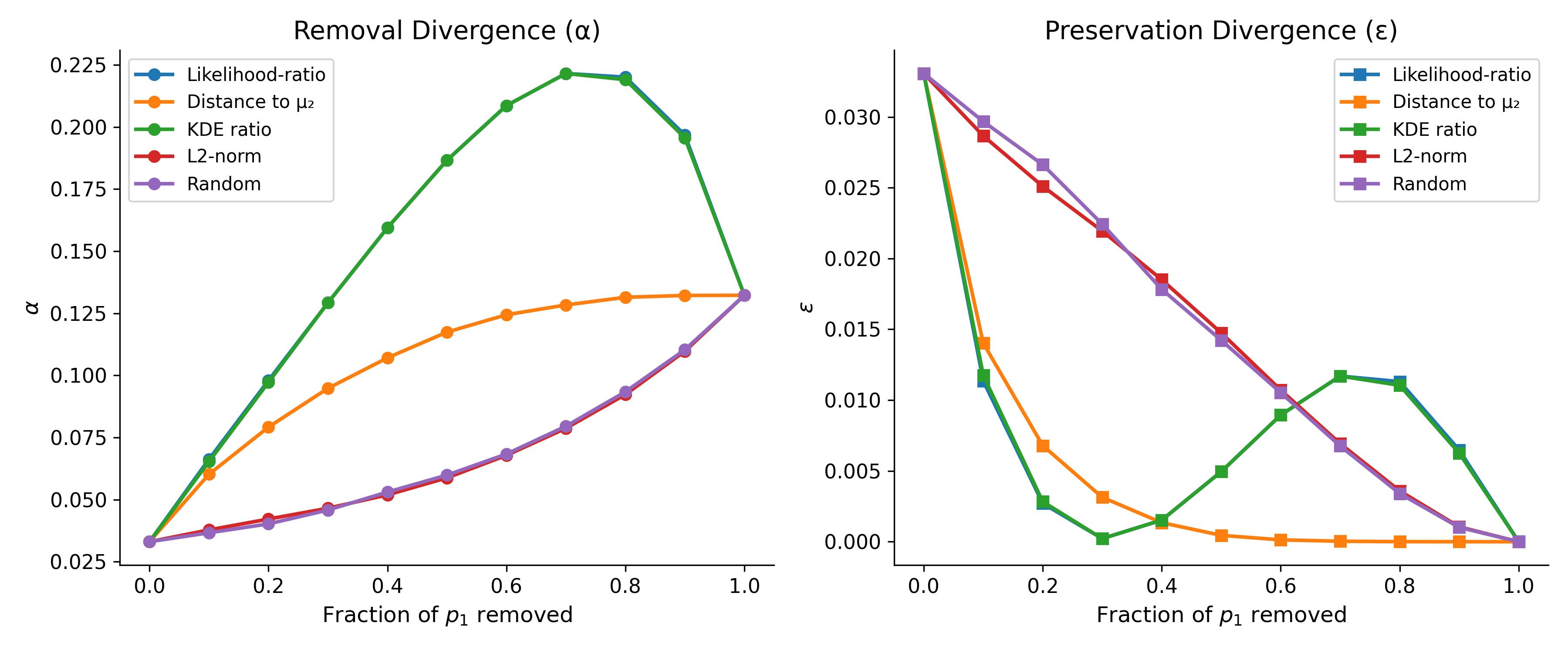}
    \caption{\textbf{Synthetic Gaussians.} Comparison of all  strategies considered in the real-world datasets, using the low-divergence scenario of synthetic Gaussians (left plot, Fig.~\ref{fig:gaussian-results}). Likelihood-ratio surpasses the distance-based strategy for lower deletion budgets. However, for larger budgets, it sacrifices utility for excessive removal. This is likely because it deletes samples representative of $p_1$ but close to $p_2$ since $p_1$ and $p_2$ are similar here.}
    \label{fig:gaussians-full}
\end{figure}

Across all datasets, we evaluate five scoring strategies for ranking samples in the forget distribution $p_1$ for removal. These scores approximate statistical dissimilarity from the retained distribution $p_2$ and correspond to different operational interpretations of divergence:

\begin{itemize}
\item \textbf{\textsc{lr-cos} / \textsc{lr-maha}} (likelihood-ratio inspired):
A proxy for the log-likelihood ratio of $x$ under $p_2$ versus $p_1$:
\[
s(x) = d(x, \mu_2) - d(x, \mu_1)
\]
where $d(\cdot,\mu)$ denotes cosine distance in TF–IDF space (text) or Mahalanobis distance in CNN feature space (images). Points are scored high when they are far from $p_2$ and close to $p_1$.

\item \textbf{\textsc{cos-mu2} / \textsc{maha-mu2}} (dissimilarity to $p_2$):
Measures the distance from each $x \in p_1$ to the empirical mean of $p_2$:
\[
s(x) = d(x, \mu_2)
\]
This approximates the contribution of each point to the KL divergence $\KL(p_2 \| \hat{p})$ when $p$ is modeled as a Gaussian.

\item \textbf{\textsc{knn-ratio}} (local density ratio):
Estimates the ratio of $k$-NN densities:
\[
s(x) = \frac{\widehat{p_1}(x)}{\widehat{p_2}(x)}
\]
where $\widehat{p_i}(x) = \exp(-\|x - \text{NN}_k^{(i)}(x)\|^2/\sigma^2)$ is a local Gaussian kernel density using $k=10$ nearest neighbors. This captures how typical $x$ is under $p_1$ versus $p_2$.

\item \textbf{\textsc{tfidf-norm} / \textsc{l2-norm}}:
Uses the $\ell_2$ norm of the raw input (TF–IDF or real-valued) as a proxy for informativeness or deviation from the origin:
\[
s(x) = \|x\|_2
\]

\item \textbf{\textsc{Coreset}} (centroid-based):
  Selects samples closest to the centroid of the forget set $p_1$:
  $$s(x) = \|x - \mu_1\|_2,$$
  where $\mu_1$ is the empirical mean of $p_1$. This strategy prioritizes representative samples that are central to the forget distribution, potentially capturing the most characteristic patterns.

\item \textbf{\textsc{K-Center}} (k-center greedy):
  Greedy selection to maximize minimum distance coverage of the forget set:
  $$s(x) = \max_{i \in S} \|x - x_i\|_2,$$ where $S$ is the set of already selected points. This strategy iteratively selects points that maximize the minimum distance to previously selected points, ensuring diverse coverage of the forget distribution.

\item \textbf{\textsc{random}}:
Samples points uniformly at random from $p_1$ as a baseline.
\end{itemize}

The first two sets of heuristics are direct extensions of our Gaussian selective removal method. When reporting results under ``Selective removal" in tables, we report the best-performing heuristic among these.

\begin{remark}[Computational complexity of selection]
\label{rk:selection-complexity}
    The selection methods evaluated present a spectrum of computational complexities, which is a key practical consideration for their use. At the most efficient end, methods like Random Removal, L2-Norm scoring, and the centroid-based Coreset are computationally inexpensive. Their complexity scales roughly linearly with the number of forget samples ($n_1$) and feature dimensions ($d$), making their cost approximately $O(n_1d)$ plus sorting time. Our primary selective algorithms, based on distance to the retained set's mean, are moderately more complex. The Cosine distance variant remains efficient at $O((n_1 + n_2)d)$, but the Mahalanobis distance version is  more demanding. Its need to compute an inverse covariance matrix introduces a term that scales with the cube of the dimensions ($O(d^3)$), making it potentially prohibitive for high-dimensional data.

On the other hand, some heuristics are computationally intensive. The \textsc{knn-ratio} method is particularly costly because it requires performing a k-nearest neighbor search for every sample in the forget set, which scales poorly with large datasets. Similarly, the iterative nature of the \textsc{K-Center} greedy algorithm makes it much slower than single-pass scoring approaches. This highlights a fundamental trade-off: while more complex methods like \textsc{knn-ratio} aim to capture nuanced distributional information, their computational overhead might be impractical. Simpler, faster methods like calculating the distance to the retained mean often provide a strong, practical balance between the effectiveness of the selection and its computational feasibility.
\end{remark}

\subsubsection{Table~\ref{tab:synergy_combined}: Sample-level Unlearning Methods}
\label{app:unlearnings}

We evaluate five sample-level unlearning strategies that represent different approaches to machine unlearning:

\begin{itemize}
\item \textbf{\textsc{Retraining}} (oracle baseline):
  \begin{itemize}
  \item \textbf{Description:} Complete retraining from scratch on the retained data only. It trains a new model from random initialization using only samples from $p_2$.  It provides the theoretical upper bound for sample-level unlearning performance, representing the gold standard but with highest computational cost.
  \item \textbf{Hyperparameters:} Adam optimizer, learning rate $1 \times 10^{-3}$, 10 epochs, batch size 128.
  \end{itemize}

\item \textbf{\textsc{Fine-tuning}} (naive baseline):
  \begin{itemize}
  \item \textbf{Description:} Continues training the pre-trained model on retained data only. We initialize with pre-trained weights, then train for additional epochs on non-removed samples. 
  \item \textbf{Hyperparameters:} Adam optimizer, learning rate $5 \times 10^{-4}$, 2 epochs, batch size 128.
  \end{itemize}

\item \textbf{\textsc{SalUn}} (saliency-based unlearning~\citep{fan2024salun}):
  \begin{itemize}
  \item \textbf{Description:} This method resets parameters with highest saliency for forget samples, then finetunes on retained data:
  (1) Compute saliency scores for forget samples, (2) Reset top-$k\%$ most salient parameters to initial values, (3) Finetune on retained data.
  It leverages parameter importance to selectively reset the most forget-relevant parameters while preserving general knowledge.
  \item \textbf{Hyperparameters:} Adam optimizer, learning rate $5 \times 10^{-4}$, 3 epochs, batch size 128, topk\_percent=0.2.
  \end{itemize}

\item \textbf{\textsc{NegGrad+}} (stochastic gradient descent ascent~\citep{kurmanji2024towards}):
  \begin{itemize}
  \item \textbf{Description:}  This method simultaneously maximizes loss on forget samples while minimizing loss on retained samples with loss
  $\beta \times \mathrm{retain\_loss} + (1-\beta) \times \mathrm{forget\_loss}$. This is essentially SGDA to balance retention of $p_2$ knowledge against forgetting of $p_1$ samples through opposing gradient directions.
  \item \textbf{Hyperparameters:} SGD optimizer, learning rate $1 \times 10^{-3}$, momentum 0.99, weight decay 0.1, 3 epochs, $\beta=0.9$, batch size 128.
  \end{itemize}

  \item \textbf{\textsc{Influence func.}} (influence function-based approximation):
  \begin{itemize}
  \item \textbf{Description:} Approximates the effect of removing forget samples using influence functions, then applies parameter updates without retraining: (1) Compute influence scores for forget samples using Hessian-vector products, (2) Apply parameter updates based on influence estimates. This is computationally expensive for larger models, so we only use it for the Gaussian experiment.
  \item \textbf{Hyperparameters:} Adam optimizer, learning rate $1 \times 10^{-4}$, 2 epochs, batch size 128, damping factor 0.01.
  \end{itemize}
  \end{itemize}

\subsubsection{Synthetic Gaussians}

We draw $n_1 = n_2 = 1{,}000$ samples from $p_1 = \mathcal{N}(0,1)$ and $p_2 = \mathcal{N}(\mu_2,1)$ for $\mu_2 \in \{0.5, 2.5, 5.0\}$, with 20 seeds. After computing scores using each strategy, we remove the top-$f$ fraction of $p_1$ points, fit a Gaussian $\mathcal{N}(\hat{\mu},1)$ to the retained data, and compute:
\[
\alpha = \KL(p_1 \| \hat{p}), \qquad \varepsilon = \KL(p_2 \| \hat{p})
\]
These metrics match the forward-KL objectives of removal and preservation. No predictive model is trained; results reflect pure distributional divergence. 
For completeness, in Figure~\ref{fig:gaussians-full}, we plot a comparison of all  strategies considered in the real-world datasets, using the low-divergence scenario of synthetic Gaussians (left plot, Fig.~\ref{fig:gaussian-results}).
Likelihood-ratio surpasses the distance-based strategy for lower deletion budgets. However, for larger budgets, it sacrifices utility for excessive removal. This is likely because it deletes samples representative of $p_1$ but close to $p_2$ since $p_1$ and $p_2$ are similar here.
This indicates that no removal strategy strictly dominates all others across all divergence (between $p_1$ and $p_2$) scenarios.

\subsubsection{Jigsaw Toxic Comments}

We use the Jigsaw Toxic Comment Classification dataset, with 140K examples filtered to length 5–200 tokens. We define $p_1$ as all training comments containing any of the keywords: “f*ck”, “s*it”, “d*mn”, “b*tch”, “a*s”, and $p_2$ as the remaining comments. For each of 5 random seeds, we:
\begin{enumerate}
\item Stratified-split $p_1$, $p_2$ into 70/30 train/val.
\item Compute TF–IDF embeddings (40K max features, 1–2 grams, sublinear TF, min\_df=5).
\item Score and remove $f$ of $p_1$ training points using each heuristic.
\item Downsample $p_2$ to 5× the remaining $p_1$ size.
\item Train an $\ell_2$-regularized logistic regression on the edited data.
\item Evaluate \textit{Recall@$p_1$} and  \textit{F1@$p_2$} on the validation sets.
\end{enumerate}

\subsubsection{SMS Spam Collection}

We use the UCI SMS Spam dataset (5574 examples, 13.4\% spam). We apply:
\begin{enumerate}
\item TF–IDF vectorization (20K features, 1–2 grams, stopword removal).
\item Scoring of spam ($p_1$) messages using each heuristic.
\item Removal of top-$f$ fraction of spam for each strategy.
\item Retrain a logistic regression classifier.
\item Evaluate \textit{Recall@spam} and \textit{F1@ham} on a held-out 20\% test split.
\end{enumerate}
We run 10 seeds and report mean $\pm$ standard error.

\subsubsection{CIFAR-10 Class Removal}

We treat the “cat” class as $p_1$ and the other 9 classes as $p_2$. We use the standard CIFAR-10 split (50K train, 10K test), and proceed as follows:
\begin{enumerate}
\item Train a 3-block CNN (Conv–BN–ReLU ×2 + MaxPool, widths 32–64–128, global avg pool + linear head) for 10 epochs on the full training set.
\item Extract features for all training images using the penultimate layer.
\item Compute Mahalanobis distance scores for each cat image ($p_1$) using:
\[
s_{\text{maha}}(x) = \sqrt{(x-\mu)^\top \Sigma^{-1}(x-\mu)}
\]
where $\mu$ and $\Sigma$ are estimated from $p_2$.
\item Delete the top-$f$ fraction of cat images under each scoring method.
\item Retrain the same CNN architecture on the edited training set.
\item Evaluate:
\[
\text{Accuracy}_{\text{cat}}, \qquad \text{Accuracy}_{\text{non-cat}}
\]
on the test set. Results are averaged over 30 random seeds.
\end{enumerate}

\subsubsection{Computing Environment}

All experiments were run on a  HPE DL380 Gen10 equipped with two Intel(R) Xeon(R) Platinum 8358P CPUs running at 2.60GHz, 128 GB of RAM, a 740 GB SSD, and two NVIDIA A10 GPUs. Training for vision experiments was implemented in PyTorch, while text-based experiments used Scikit-learn. All experiments were conducted using a single GPU.

{\color{black}

\subsection{Ablation: Sensitivity to Feature Representations}
\label{app:feature-ablation}

To assess how sensitive our selective deletion procedure is to the choice of feature representation, 
we conducted an ablation study on CIFAR--10 where we varied only the feature extractor and kept the 
selection algorithm and training setup fixed. We compare five representations: the default CNN 
features used in the main paper, ResNet--18 features (both pretrained and trained from scratch), 
and raw pixel features. For each setting, we measure forget-set accuracy at a deletion budget of 
60\% (lower is better forgetting). The results are shown in Table~\ref{tab:feature-ablation}.

Two main observations emerge. First, across standard learned representations (CNN, ResNet--18 
scratch, ResNet--18 pretrained), the forgetting performance is very similar, suggesting that the 
method is robust once the feature extractor provides a reasonably expressive embedding space. 
Second, using raw pixels significantly weakens forgetting, confirming that feature quality matters, 
but not in a way that makes the method brittle to the specific backbone.

\begin{table}[h]
\color{black}
    \centering
    \vspace{0.2em}
    \begin{tabular}{l c}
        \toprule
        \textbf{Feature Extractor} & \textbf{Forget Set Accuracy (\%)} \\
        \midrule
        Random (baseline) & $40.5 \pm 2.5$ \\
        CNN (default) & $21.6 \pm 1.9$ \\
        ResNet--18 (trained from scratch) & $20.4 \pm 2.1$ \\
        ResNet--18 (pretrained) & $27.3 \pm 1.8$ \\
        Raw pixels & $31.8 \pm 2.0$ \\
        \bottomrule
    \end{tabular}
    \caption{Forget set accuracy (mean $\pm$ s.e.m.) at 60\% deletion for different feature extractors on CIFAR--10. Lower is better forgetting.}
    \label{tab:feature-ablation}
\end{table}

\subsection{Additional Coreset Baselines on CIFAR-10}
\label{app:coreset-cifar}

To further investigate the behavior of retain-agnostic coreset methods in our unlearning setting, we compared our selective removal strategy against two strong data subset selection baselines, CRAIG, GradMatch, and CCS, on CIFAR--10. All methods operate at a fixed deletion budget of 60\% from the forget class, and we report the resulting forget-set accuracy (lower is better forgetting). While CRAIG and GradMatch slightly, though not significantly, improve over random deletion, they still operate purely on the forget distribution and cannot exploit contrast with the retain distribution. As shown in Table~\ref{tab:coreset-cifar}, our distributional selection substantially outperforms these baselines.

\begin{table}[h]
\color{black}
    \centering
    \vspace{0.2em}
    \begin{tabular}{l c}
        \toprule
        \textbf{Selection Method} & \textbf{Forget Set Accuracy (\%)} \\
        \midrule
        Random & $39.2 \pm 1.9$ \\
        CRAIG~\citep{mirzasoleiman2020coresets} & $36.3 \pm 2.7$ \\
        GradMatch~\citep{killamsetty2021grad} & $35.5 \pm 4.2$ \\
        CCS~\citep{zheng2023ccs} & $40.9 \pm 4.1$ \\
        Selective (ours) & $24.5 \pm 1.4$ \\
        \bottomrule
    \end{tabular}
    \caption{Forget set accuracy (mean $\pm$ s.e.m.) on CIFAR--10 at a 60\% deletion budget from the forget class. Lower values indicate stronger unlearning. Our selective removal method, which leverages the retain distribution, outperforms all strong coreset baselines (CRAIG, GradMatch, CCS) and random deletion.}
    \label{tab:coreset-cifar}
\end{table}
}

\end{document}